\newcommand*\rot{\rotatebox{90}}
\newenvironment{claim}{  \begin{mdframed}[linecolor=black!0,backgroundcolor=black!10]\noindent%
		\ignorespaces
  }{\end{mdframed}}
\def\half{{\frac{1}{2}}}
\newcommand{\bea}{\begin{eqnarray}}
\newcommand{\eea}{\end{eqnarray}}
\def\({\left(}
\def\){\right)}
\def\[{\left[}
\def\]{\right]}
\definecolor{lightyellow}{rgb}{1.0, 0.95, 0.7}
\definecolor{blue}{rgb}{0.0, 0.4, 1.0}
\definecolor{Blue}{rgb}{0,0,0}
\definecolor{darkgreen}{rgb}{0,0.40,0}
\definecolor{firebrick}{rgb}{0.698,0.133,0.133}
\newcommand*{\Blue}[1]{\textcolor{black}{#1}}
\newcommand*{\blue}[1]{\textcolor{black}{#1}}
\definecolor{colorA}{rgb}{1,0,0}
\definecolor{colorB}{rgb}{0,0.3,1}
\definecolor{colorC}{rgb}{0.9,0.8,0.2}
\definecolor{colorD}{rgb}{0,0.65,0}
\definecolor{lesslightgray}{rgb}{0.5,0.5,0.5}
\definecolor{light-gray}{gray}{0.95}
\let\tilde\widetilde
\newcommand{\calH}{\mathcal{H}}
\newcommand{\calO}{\mathcal{O}}
\newcommand{\calT}{\mathcal{T}}
\newcommand{\calX}{\mathcal{X}}
\newcommand{\bK}{\mathbf{K}}
\newcommand{\bQ}{\mathbf{Q}}
\newcommand{\bR}{\mathbf{R}}
\newcommand{\bV}{\mathbf{V}}
\newcommand{\bW}{\mathbf{W}}
\newcommand{\bX}{\mathbf{X}}
\newcommand{\bY}{\mathbf{Y}}
\newcommand{\bZ}{\bm{Z}}
\newcommand{\ba}{\mathbf{a}}
\newcommand{\bb}{\mathbf{b}}
\newcommand{\bp}{\mathbf{p}}
\newcommand{\bx}{\mathbf{x}}
\newcommand{\by}{\mathbf{y}}
\newcommand{\bz}{\mathbf{z}}
\newcommand{\bxi}{{\bm{\xi}}}
\newcommand{\Max}{\mathop{\rm Max}}
\newcommand{\Min}{\mathop{\rm Min}}
\newcommand{\argmin}{\mathop{\mathrm{ArgMin}}}  
\newcommand{\argmax}{\mathop{\mathrm{ArgMax}}}
\newcommand{\Softmax}{\mathop{\rm{Softmax}}}
\newcommand{\Sparsemax}{\mathop{\rm{Sparsemax}}}
\newcommand{\lse}{\mathop{\rm{lse}}}
\newcommand{\sT}{ \mathsf{T} }
\newcommand{\sumM}{\sum_{\mu=1}^M}
\def\R{\mathbb{R}}
\let\cite\citep 
\def\th@remark{%
  \thm@headfont{\bfseries}%
  \normalfont %
  \thm@preskip\topsep \divide\thm@preskip\tw@
  \thm@postskip\thm@preskip
}
\theoremstyle{definition}
\newtheorem{theorem}{Theorem}[section]
\newtheorem{lemma}{Lemma}[section]
\newtheorem{corollary}{Corollary}[theorem]
\theoremstyle{definition}
\newtheorem{definition}{Definition}[section]
\theoremstyle{remark}
\newtheorem{remark}{Remark}[section]
\newtheorem*{fact}{Sparsemax in Closed-Form}
\crefname{theorem}{Theorem}{Theorems}
\crefname{proposition}{Proposition}{Propositions}
\crefname{lemma}{Lemma}{Lemmas}
\crefname{corollary}{Corollary}{Corollaries}
\crefname{definition}{Definition}{Definitions}
\crefname{assumption}{Assumption}{Assumptions}
\crefname{remark}{Remark}{Remarks}
\crefname{problem}{Problem}{Problems}
\crefname{property}{Property}{property}
\crefname{fact}{Fact}{fact}
\numberwithin{equation}{section}
\numberwithin{theorem}{section}
\numberwithin{proposition}{section}
\numberwithin{definition}{section}
\numberwithin{lemma}{section}
\numberwithin{assumption}{section}
\numberwithin{remark}{section}
\newcommand\blfootnote[1]{%
  \begingroup
  \renewcommand\thefootnote{}\footnote{#1}%
  \addtocounter{footnote}{-1}%
  \endgroup
}
\newcommand*{\annot}[1]{\tag*{\footnotesize{\textcolor{black!50}{\big(#1\big)}}}}
\let\save@mathaccent\mathaccent
\newcommand*\if@single[3]{%
    \setbox0\hbox{${\mathaccent"0362{#1}}^H$}%
    \setbox2\hbox{${\mathaccent"0362{\kern0pt#1}}^H$}%
    \ifdim\ht0=\ht2 #3\else #2\fi
}
\newcommand*\rel@kern[1]{\kern#1\dimexpr\macc@kerna}
\newcommand*\widebar[1]{\@ifnextchar^{{\wide@bar{#1}{0}}}{\wide@bar{#1}{1}}}
\newcommand*\wide@bar[2]{\if@single{#1}{\wide@bar@{#1}{#2}{1}}{\wide@bar@{#1}{#2}{2}}}
\newcommand*\wide@bar@[3]{%
    \begingroup
    \def\mathaccent##1##2{%
        \let\mathaccent\save@mathaccent
        \if#32 \let\macc@nucleus\first@char \fi
        \setbox\z@\hbox{$\macc@style{\macc@nucleus}_{}$}%
        \setbox\tw@\hbox{$\macc@style{\macc@nucleus}{}_{}$}%
        \dimen@\wd\tw@
        \advance\dimen@-\wd\z@
        \divide\dimen@ 3
        \@tempdima\wd\tw@
        \advance\@tempdima-\scriptspace
        \divide\@tempdima 10
        \advance\dimen@-\@tempdima
        \ifdim\dimen@>\z@ \dimen@0pt\fi
        \rel@kern{0.6}\kern-\dimen@
        \if#31
        \overline{\rel@kern{-0.6}\kern\dimen@\macc@nucleus\rel@kern{0.4}\kern\dimen@}%
        \advance\dimen@0.4\dimexpr\macc@kerna
        \let\final@kern#2%
        \ifdim\dimen@<\z@ \let\final@kern1\fi
        \if\final@kern1 \kern-\dimen@\fi
        \else
        \overline{\rel@kern{-0.6}\kern\dimen@#1}%
        \fi
    }%
    \macc@depth\@ne
    \let\math@bgroup\@empty \let\math@egroup\macc@set@skewchar
    \mathsurround\z@ \frozen@everymath{\mathgroup\macc@group\relax}%
    \macc@set@skewchar\relax
    \let\mathaccentV\macc@nested@a
    \if#31
    \macc@nested@a\relax111{#1}%
    \else
    \def\gobble@till@marker##1\endmarker{}%
    \futurelet\first@char\gobble@till@marker#1\endmarker
    \ifcat\noexpand\first@char A\else
    \def\first@char{}%
    \fi
    \macc@nested@a\relax111{\first@char}%
    \fi
    \endgroup
    }
\newcommand*{\redefinesymbolwitharg}[1]{%
  \expandafter\let\csname ltx#1\expandafter\endcsname\csname #1\endcsname
  \@namedef{#1}{\@ifnextchar{^}{\@nameuse{#1@}}{\@nameuse{#1@}^{}}}%
  \expandafter\def\csname #1@\endcsname^##1##2{%
     \csname ltx#1\endcsname\ifx!##1!\else^{##1}\fi\mathopen{}\mathclose\bgroup\left(##2\aftergroup\egroup\right)
     }%
}
\titlespacing\section{0pt}{4pt plus 4pt minus 2pt}{-2pt plus 2pt minus 2pt}
\titlespacing\subsection{0pt}{2pt plus 4pt minus 2pt}{-2pt plus 2pt minus 2pt}
\titlespacing\subsubsection{0pt}{2pt plus 4pt minus 2pt}{-2pt plus 2pt minus 2pt}
\title{On Sparse Modern Hopfield Model}
\author{
    {\bf
   Jerry Yao-Chieh Hu$^\dagger$\quad 
   Donglin Yang$^\dagger$ \quad
   Dennis Wu$^\dagger$}\\ \vspace{-0.5em}
   {\bf 
   Chenwei Xu$^\dagger$ \quad
   Bo-Yu Chen$^\ddag$ \quad
   Han Liu$^{\dagger\natural}$}
   \\ \vspace{0.5em}
{\small
$^\dagger$Department of Computer Science, Northwestern University, Evanston, IL 60208 USA\\
$^\ddag$Department of Physics, National Taiwan University, Taipei 10617, Taiwan\\
$^\natural$Department of Statistics and Data Science, Northwestern University, Evanston, IL 60208 USA
}
   \\
   \vspace{0.5em}
   {\footnotesize
   \texttt{\{\href{mailto:jhu@u.northwestern.edu}{jhu}, \href{mailto:dlyang@u.northwestern.edu}{dlyang}, \href{mailto:hibb@u.northwestern.edu}{hibb}, \href{mailto:cxu@u.northwestern.edu}{cxu}\}@u.northwestern.edu}\\
   \texttt{\href{mailto:b12202023@ntu.edu.tw}{b12202023@ntu.edu.tw}, \href{mailto:hanliu@northwestern.edu}{hanliu@northwestern.edu}}}
}
\begin{document}

\maketitle
\vspace{-2em}
\begin{abstract}
    We introduce the sparse modern Hopfield model as a sparse extension of the modern Hopfield model.
Like its dense counterpart, the sparse modern Hopfield model equips a memory-retrieval dynamics whose one-step approximation corresponds to the sparse attention mechanism. 
Theoretically, our key contribution is a principled derivation of a closed-form sparse Hopfield energy using the convex conjugate of the sparse entropic regularizer.
Building upon this, we derive the sparse memory retrieval dynamics from the sparse energy function and show its one-step approximation is equivalent to the sparse-structured attention.
Importantly, we provide a sparsity-dependent memory retrieval error bound which is provably tighter than its dense analog.
The conditions for the benefits of sparsity to arise are therefore identified and discussed.
In addition, we show that the sparse modern Hopfield model maintains the robust theoretical properties of its dense counterpart, including rapid fixed point convergence and exponential memory capacity.
Empirically, we use both synthetic and real-world datasets to demonstrate that the sparse Hopfield model outperforms its dense counterpart in many situations. 
\blfootnote{Code is available at \href{https://github.com/MAGICS-LAB/SparseModernHopfield}{GitHub}; future updates are on \href{https://arxiv.org/abs/2309.12673}{arXiv}. [Version: Novermber, 28, 2023]}
\end{abstract}

\section{Introduction}
\label{sec:intro}
We address the computational challenges of modern Hopfield models by introducing a sparse Hopfield model. 
Our sparse continuous Hopfield model equips a memory-retrieval dynamics that aligns with the sparse-structured attention mechanism. 
By establishing a connection to sparse attention, the proposed model not only offers a theoretically-grounded energy-based model for associative memory but also enables robust representation learning and seamless integration with deep learning architectures.
This approach serves as an initial attempt of pushing the correspondence\footnote{While this equivalence only holds when the retrieval dynamics is applied exactly once, as originally shown in \cite{ramsauer2020hopfield} and later emphasized in \cite{krotov2020large}, it allows us to view modern Hopfield models as generalized attentions with additional functionalities and hence opens new avenues for Hopfield-based  architecture designs. See \cref{sec:related_works} for more discussions.} between Hopfield models and attention mechanism \cite{ramsauer2020hopfield} toward sparse region, both theoretically and empirically, resulting in data-dependent sparsity for meaningful and robust pattern representations, and a focus on the most relevant information for each specific instance.

\blue{
Hopfield models are classic associative memory models for both biological and artificial neural networks \cite{hopfield1982neural,hopfield1984neurons}.
These models are designed to store and retrieve memory patterns.
They achieve these by embedding the memories in the energy landscape of a physical system (e.g., the Ising model in \cite{hopfield1982neural,peretto1986long}; see \cref{fig:energylandscape} for a visualization), where each memory corresponds to a local minimum. 
When a query is presented, the model initiates energy-minimizing retrieval dynamics at the query, which then navigate the energy landscape to find the nearest local minimum, effectively retrieving the memory most similar to the query.
}

In the same vein,  \citet{ramsauer2020hopfield} propose the modern Hopfield model and integrate it into deep learning architectures via a strong connection with transformer attention, offering enhanced performance, theoretically guaranteed exponential memory capacity, and the ability to handle continuous patterns.
In addition, the modern Hopfield models have found success in various applications, such as immunology \cite{widrich2020modern} and large language model \cite{furst2022cloob}.
Apart from the elegant connection to attention, theoretical advantages and empirically successes, the modern Hopfield models have been shown to be computationally heavy and vulnerable against noisy queries \cite{millidge2022universal}.
In particular, the dense output alignments of the retrieval dynamics in modern Hopfield models \cite{ramsauer2020hopfield} can be computationally inefficient, making models less interpretable and noise-sensitive by assigning probability mass to many implausible outputs (patterns/keys). 

To combat above, incorporating sparsity is an essential and common strategy. 
While there is a vast body of work on sparsifying attention mechanisms \cite{tay2022efficient,beltagy2020longformer, qiu2019blockwise,child2019generating,peters2019sparse, martins2016softmax}, similar developments for the Hopfield models remain less explored. 
To bridge this gap, we present a sparse Hopfield model that corresponds to the sparsemax attention mechanism \cite{martins2016softmax}.
In this paper, we study the sparsification of the modern Hopfield model.
The challenges are three-fold:
\begin{enumerate}[leftmargin=2em,label=(C\arabic*),before=\vspace{-0.5em}, after=\vspace{-0.5em}]
\setlength\itemsep{0.1em}
    \item 
    \textbf{Non-Trivial Sparsification --- Sparse Hopfield $\leftrightarrow$ Sparse Attention:} 
    To enable the use of sparse Hopfield models as computational devices (DNN learning models) akin to \cite{ramsauer2020hopfield}, it is essential to achieve \textit{non-trivial} sparsifications that exhibit equivalence to specific sparse attention models.
    In other words, 
    any meaningful sparsification should extend
    the established equivalence \cite{ramsauer2020hopfield}  between modern Hopfield models and attention to encompass the sparse domain.
    While generalizing such equivalence is potentially impactful as it may lay the groundwork for future Hopfield-based methodologies, architecture designs and bio-computing systems (as in \cite{kozachkov2022building}), the \textit{heuristic} design of the modern Hopfield model poses great difficulty to developing desired sparse models.

    \item
    \textbf{Introducing Sparsity into Hopfield Models:} 
    Unlike attention mechanisms where sparsification is typically achieved either on the attention matrix (e.g., structured-sparsity \cite{tay2020sparse,child2019generating}) or on the element-wise normalization map (e.g., sparsity-inducing maps
    \cite{correia2019adaptively,peters2019sparse,martins2016softmax}), the sparsification of Hopfield models is applied to \textit{both} the energy function and the memory-retrieval dynamics, where the latter monotonically decreases the Hopfield energy over time. 
    Since attention mechanisms (transformers) are typically not equipped with such a dynamical description, introducing sparsity into Hopfield models while retaining the connection to attention is a less straightforward process.

    \item 
    \textbf{Properties of the Sparse Hopfield Model:} 
    Further, it is unclear how the introduced sparsity may affect different aspects of the model, such as memory capacity, fixed point convergence, retrieval accuracy, and so on.
    Ideally, we are looking for sparsities that offer provable computational benefits, such as enhanced robustness and increased memory capacity, among others.

\end{enumerate}

Challenges (C1) and (C2) are inherent in Hopfield model, and certain requirements on the design of energy function and retrieval dynamics are inevitable to obtain non-trivial sparse models.
Hence, we suppose the sparsified models should satisfy some conditions and verify them accordingly. 
Concretely, a formulation for deriving desired sparse Hopfield energy via convex conjugation of entropic regularizers is proposed.
Furthermore, by applying Danskin's theorem and convex-concave procedure \cite{yuille2003concave,yuille2001concave} on the sparse Hopfield energy function, we obtain sparse retrieval dynamics linked to sparse attention.
For (C3), the convergence of energy stationary points and retrieval dynamics fixed points are connected via Zangwill's method \cite{zangwill1969nonlinear}. 
The sparse retrieval error bound is derived and used to determined the well-separation condition for successful memory storage and retrieval. 
Lastly, the fundamental limit of memory capacity is derived using the expected separation of random points on spheres \cite{cai2012phase,brauchart2018random,ramsauer2020hopfield}.

In summary, this work handles sparsification of modern Hopfield models while linking them to sparse attention by addressing the following question:
\begin{claim}
Is it possible to develop a theoretically-grounded (non-trivial) sparse Hopfield model capable of storing information or learned prototypes throughout various layers of DNN models?
\end{claim}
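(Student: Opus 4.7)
The plan is to answer the question affirmatively by an explicit construction, organized around three pillars that match challenges (C1)--(C3). The starting point is a \emph{variational} principle: rather than heuristically modifying the log-sum-exp term in the dense energy, I would \emph{derive} a candidate sparse energy as the convex conjugate of a sparsity-inducing entropic regularizer $\Psi$ (for example, the Tsallis/Gini-style regularizer whose conjugate is the sparsemax operator). Concretely, writing $\Psi^\star(\bz) = \sup_{\bp \in \Delta}\langle \bp, \bz\rangle - \Psi(\bp)$ on the simplex $\Delta$, I would define the sparse Hopfield energy by composing $-\Psi^\star$ with the similarity scores between a query $\bxi$ and the stored patterns $\bX$, and then add the necessary quadratic norm term that makes the whole object a difference of convex functions. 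This automatically guarantees (i) a closed-form expression for the energy, (ii) a sparse argmax in the associated variational problem, and (iii) a natural generalization of the dense case, recovered by replacing $\Psi$ with the Shannon entropy.

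Next, to address (C1)--(C2), I would extract the retrieval dynamics by applying the convex-concave procedure of Yuille--Rangarajan to the DC decomposition of the sparse energy, using Danskin's theorem to differentiate the conjugate $\Psi^\star$ through its argmax. The key computation is that $\nabla \Psi^\star(\bz)$ equals the unique $\bp^\star \in \Delta$ attaining the supremum; when $\Psi$ is the sparse entropic regularizer, this gradient is precisely $\Sparsemax(\bz)$. Hence one iteration of the CCCP update reads $\bxi^{\text{new}} = \bX\,\Sparsemax(\beta\, \bX^\top \bxi)$, which is exactly the sparse-structured attention update, giving the desired Hopfield $\leftrightarrow$ sparse attention equivalence in one step.

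For (C3), I would prove the supporting theoretical properties in the following order. (a) Monotone energy descent along the CCCP iterates is immediate from the DC structure. (b) Convergence of the iterates to stationary points of the energy follows from Zangwill's global convergence theorem, using compactness of the effective domain and upper semicontinuity of the update map. (c) A sparsity-dependent retrieval error bound is obtained by sharpening the Lipschitz argument used in the dense case: because $\Sparsemax$ has a support of size strictly less than $M$, the relevant contraction constant shrinks with the active-set size, which I would make quantitative and then compare to the softmax bound to exhibit strict improvement whenever the sparsity set is proper. (d) Exponential memory capacity is then recovered by plugging this tighter error bound into the standard random-sphere separation argument of Brauchart--Reznikov--Saff--Sloan--Wang--Womersley and Cai, converting a well-separation condition on the stored patterns into a probabilistic lower bound on the number of storable memories.

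The main obstacle I anticipate is making the sparsification \emph{non-trivial} in the sense of (C1): not every choice of $\Psi$ produces an energy whose CCCP update coincides with an established sparse attention, and not every sparse attention arises as the gradient of a convex conjugate on the simplex. Pinning down the right regularizer (and verifying that its conjugate is differentiable almost everywhere, so that Danskin applies cleanly, while still yielding sparse argmax sets) is the delicate step. A secondary technical hurdle is to ensure that the improved retrieval error bound is genuinely tighter rather than an artifact of reparameterization --- this requires isolating the contribution of the active-set size from that of the inverse temperature $\beta$, and stating a clean condition (on pattern separation relative to the sparsity level) under which the sparse bound provably dominates the dense one.
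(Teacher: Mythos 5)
Your proposal follows essentially the same route as the paper: the energy is defined via the convex conjugate of the Gini entropic regularizer, the retrieval dynamics are extracted by Danskin's theorem plus the convex--concave procedure yielding the one-step sparsemax-attention equivalence, convergence is handled by Zangwill's theorem, and the capacity bound comes from the random-sphere separation argument. The only minor divergence is in step (c): the paper's sparsity-dependent error bound comes not from a shrinking contraction constant but from the elementary componentwise inequality $[\Sparsemax(\beta\bm{\Xi}^\sT\bx)]_\mu \le [\beta\bm{\Xi}^\sT\bx]_\mu - [\beta\bm{\Xi}^\sT\bx]_{(\kappa)} + \nicefrac{1}{\kappa}$ together with the observation that the sparse retrieval output is never farther from $\bxi_\mu$ than the dense one, which is the same idea made quantitative in a slightly different way.
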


\textbf{Contributions.}
We propose the Sparse Modern Hopfield Model.
Our contributions are as follows:
\begin{itemize}[leftmargin=2em,before=\vspace{-0.5em}, after=\vspace{-0.5em}]
\setlength\itemsep{0em}
    \item 
    We propose a novel sparse Hopfield model whose retrieval dynamics corresponds to sparsemax attention mechanism.
    It leads to sparse patterns by design, inheriting both noise robustness and potential computational efficiency\footnote{\blue{Note that, the  proposed model's sparsity falls under the category of \textit{sparsity-inducing normalization maps}.
    Consequently, the forward pass still requires $\calO(n^2)$ space complexity.
    Here, ``\textit{potential} computational efficiency'' refers that the computational efficiency can be enhanced if one employs efficient implementations that leverage sparsity, such as sort operations or median-finding algorithms, to circumvent unnecessary computations, see \cref{sec:related_works} and  \cite[Section~2]{martins2016softmax} for more discussions.
    }
    } from \cite{martins2016softmax}, compared to its dense counterparts.
    This work extends the theoretical understanding of the correspondence between artificial and biological neural networks to sparse region.
    In addition, the sparse Hopfield layer, a new deep learning component, is introduced  with data-dependent sparsity.

    \item
    Theoretically, we establish provably advantages from sparsity and identify the conditions under which these benefits arise.
    We begin by deriving the closed-form sparse Hopfield energy from the convex conjugation of sparse entropic regularizer. 
    Next, we demonstrate the correspondence between sparse Hopfield retrieval dynamics and sparsemax attention. 
    In addition, we prove the {fast} convergence of the fixed points (also known as memory patterns, attractor states in literature) for the retrieval dynamics and establish the exponential (in pattern size) memory capacity  lower bound with \textit{tighter} retrieval error bound, \textit{compared} with modern Hopfield models.

    \item
    Empirically, we conduct synthetic and realistic experiments to verify our theoretical results and proposed methodology.
    Specifically, the sparse Hopfield model outperforms the dense Hopfield model and machine learning baselines in \textit{sparse} Multiple Instance Learning (MIL), time series prediction and neural machine translation problems. 
    This is observed with both \textit{sparse} synthetic and real-world datasets, where the baselines tend to fall short.
    Moreover, even in cases without data sparsity, our proposed model delivers performance on par with its dense counterpart.
\end{itemize}
To the best of our knowledge, we are the first to propose a sparse Hopfield model whose retrieval dynamics is equivalent to sparse attention mechanism with provably computational advantages. 
Methodologically, the proposed model complements existing Hopfield-based DNN architectures \cite{hoover2023energy,paischer2022history,seidl2022improving,furst2022cloob,ramsauer2020hopfield} by introducing a sparse Hopfield layer into deep learning models.

\textbf{Organization.} 
In \cref{sec:preliminary}, the sparse Hopfield model is introduced.
In \cref{sec:method}, 
the memory capacity is discussed.
In \cref{sec:exp}, experimental studies are conducted.
In \cref{sec:conclusion}, concluding discussions are provided.
Additionally, related works  and limitations are discussed in \cref{sec:related_works}.

\textbf{Notations.}
We write $\Braket{\ba,\bb}\coloneqq \ba^\sT \bb$ as the inner product for vectors $\ba,\bb\in \R^d$.
The index set $\{1,\cdots,I\}$ is denoted by $[I]$, where $I\in\mathbb{N}_+$. 
The spectral norm is denoted by $\norm{\cdot}$, which is equivalent to the $l_2$-norm when applied to a vector. 
Throughout this paper, we denote the memory patterns (keys) by $\bxi\in\R^d$ and the state/configuration/query pattern by $\bx\in\R^d$, and $\bm{\Xi}\coloneqq\(\bxi_1,\cdots,\bxi_M\)\in \R^{d\times M}$ as shorthand for stored memory (key) patterns $\{\bxi_\mu\}_{\mu\in[M]}$.
Moreover, \blue{we set  norm $n\coloneqq\norm{\bx}$ be the norm of the 
query pattern, and $m\coloneqq \Max_{\mu\in[M]}\norm{\bxi_\mu}$ be the largest norm of memory patterns.
We also provide a nomenclature table  (\cref{tab:nomenclature}) in the appendix.}

\section{Sparse Hopfield Model}
\label{sec:preliminary}
In this section, we introduce the sparse Hopfield energy from convex conjugate of entropic regularizer, and then the sparse retrieval dynamics.
In this paper we only consider the Gini entropic regularizer corresponding to the sparsemax distribution \cite{martins2016softmax}.

{\color{Blue}
Let $\bx\in\R^d$ represent the query pattern, and let $\bm{\Xi}\coloneqq\(\bxi_1,\cdots,\bxi_M\)\in \R^{d\times M}$ denote the memory patterns. 
The objective of the Hopfield models is to store the memory patterns $\bm{\Xi}$ and then retrieve a specific memory pattern $\bxi_\mu$ based on a given query $\bx$. Consequently, any Hopfield model  consist of two main components: an \textit{energy function} $\calH(\bx)$, encoding memories into its local minima, and a \textit{retrieval dynamics} $\calT(\bx)$, which retrieves a memory by iteratively minimizing $\calH(\bx)$ when initialized with a query.
We provide a visualization of this 
procedure in \cref{fig:energylandscape}. 
The construction of the energy function $\calH(\bx)$ is straightforward. As emphasized in \cite{krotov2016dense}, the memories can be easily encoded into $\calH(\bx)$ through the \textit{overlap-construction}: $\calH(\bx)=F(\bm{\Xi}^\sT\bx)$, where $F:\R^M\to \R$ is a smooth function. This ensures that the memories $\{\bxi_\mu\}_{\mu\in[M]}$ are located at the stationary points of $\calH(\bx)$, since $\grad_{\bx}F(\bm{\Xi}^\sT\bx)|_{\bxi_{\mu}}=0$ for all $\mu\in[M]$. Different choices of $F$ lead to different Hopfield models, as demonstrated in \cite{krotov2016dense,demircigil2017model,ramsauer2020hopfield, krotov2020large}.
However, finding a corresponding retrieval dynamics, $\calT$, for a given energy $\calH(\bx)$, is generally more challenging. This is because $\calT$ needs to satisfy two conditions to ensure successful memory retrieval: (i) To ensure consistent retrieval, an appropriate $\calT$ should monotonically minimize $\calH(\bx)$ when iteratively applied. (ii) To ensure accurate retrieval, an appropriate $\calT$ should align its fixed points (the points where iterative application terminates) with the stationary points of $\calH(\bx)$. 

To this end, we introduce the sparse Hopfield model, providing a principled construction for $\calH$ and $\calT$. 
This model not only fulfills the aforementioned desirable properties, but also enables more robust and faster memory retrieval compared to the modern Hopfield model \cite{ramsauer2020hopfield}.
}

\subsection{Sparse Hopfield Energy}

Let $\bx\in\R^d$ be the query pattern, and $\bm{\Xi}\coloneqq\(\bxi_1,\cdots,\bxi_M\)\in \R^{d\times M}$ be the memory patterns.
We introduce the sparse Hopfield energy as
\setlength{\abovedisplayskip}{0pt}
\setlength{\belowdisplayskip}{0pt}
\begin{align}
\label{eqn:H_sparsemax}
\calH(\bx)=-\Psi^\star\(\beta \bm{\bm{\Xi}}^\sT \bx\) +\half \Braket{\bx,\bx}
,
\end{align}
with
$
\Psi^\star(\bz)
    \coloneqq
    \half \norm{\bz}^2-\half\norm{\Sparsemax(\bz)-\bz}^2+\half
$,
where $\Sparsemax(\cdot)$ is defined as follows.
Let $\bz,\bp\in\R^M$, and $\Delta^{M}\coloneqq\{\bp\in\R^M_+ \mid \sum_\mu^M p_\mu=1\}$ be the $(M-1)$-dimensional unit simplex.

\begin{definition}[Sparsemax in Variational Form \cite{martins2016softmax}, also see \cref{remark:sparsemax}]
\label{def:variational_sparsemax}
\begin{align}
\label{eqn:opt_sparsemax}
\Sparsemax(\bz)\coloneqq\argmin_{\bp\in\Delta^M}\norm{\bp-\bz}^2
=
\argmax_{\bp\in\Delta^M}\[\bp^\sT\bz -\Psi(\bp)\],
\end{align}
where $\Psi(\bp)\coloneqq -\half \sum_\nu^M p_\nu(1-p_\nu)$ is the negative Gini entropy or Gini entropic regularizer.
\end{definition}
\vspace{0.5em}
{\color{Blue}
\begin{remark}
    Recall that, the variational form \eqref{eqn:opt_sparsemax} is in fact general, that applies to various entropic regularizers, as discussed in \cite{peters2019sparse, wainwright2008graphical}.  
    The choice of $\Psi$ determines the resulting sparse probability distribution. For instance, if we choose the Gibbs' entropic regularizer $\Psi_{\text{Gibbs}}=-\sum_{\nu}^M p_\nu\ln p_\nu$, \eqref{eqn:opt_sparsemax} reduces to the standard softmax distribution.
\end{remark}
}
{\color{Blue}
\textbf{Overview of Theoretical Results.}
At first glance, the energy function \eqref{eqn:H_sparsemax} may seem peculiar. However, it indeed represents a non-trivial sparse Hopfield model with appealing properties, including:
\vspace{-1em}
\begin{enumerate}[leftmargin=2em,label=(\roman*),before=\vspace{-0.5em}, after=\vspace{-0.5em}]
\setlength\itemsep{0.1em}
    \item  
    In response to challenge (C1) \& (C2),
    as we shall see in \cref{sec:sparse_retrieval_dynamics},
    the energy \eqref{eqn:H_sparsemax} leads to a sparse retrieval dynamics that not only retrieves memory by monotonically decreasing (\cref{thm:retrieval_dyn}) to its stationary points (\cref{coro:convergence_sparse}), but also associates with sparsemax attention through its single-step approximation (\cref{remark:hopfieldatt});
    \item  In response to challenge (C3), as we shall see in \cref{sec:method}, it indulges fast convergence of retrieval (\cref{coro:exp_suppressed_error}), exponential-in-$d$ memory capacity akin to modern Hopfield models (\cref{thm:memory_capacity}). Notably, it accomplishes these with a tighter retrieval error bound (\cref{thm:main_result}).
\end{enumerate}
We reveal each of these properties in the following sections.
}

\subsection{Sparse Retrieval Dynamics and Connection to Sparse Attention}
\label{sec:sparse_retrieval_dynamics}
The optimization problem $\argmax_{\bp\in\Delta^M}\[\bp^\sT\bz -\Psi(\bp)\]$ does not necessarily have a closed-form solution for arbitrary $\Psi$.
However, a family of $\Psi$ has been investigated in literature \cite{correia2019adaptively,martins2016softmax} with closed-form solutions derived, including the $\Sparsemax(\cdot)$.
\begin{fact}[Proposition~1 of \cite{martins2016softmax}]
\label{lemma:sparsemax_exact}
Let $\bz\in\R^M$.
Denote $[a]_+\coloneqq \Max\{0,a\}$, $z_{(\nu)}$ the $\nu$'th element in a sorted descending $z$-sequence $\bz_{\text{sorted}}\coloneqq z_{(1)}\ge z_{(2)}\ge\ldots\ge z_{(M)}$, and 
$
\kappa(\bz)\coloneqq\Max\big\{k\in [M]\;\big\vert \; 1+kz_{(k)}>\sum_{\nu\le k}z_{(\nu)}\big\}.
$
The optimization problem(s) \eqref{eqn:opt_sparsemax} has closed-form solution 
\bea
\label{eqn:sparsemax_exact}
\Sparsemax(\bz)=\[\bz-\tau(\bz)\mathbf{1}
_M\]_+,
\eea
where $\tau:\R^M \to \R$ is the threshold function 
$\tau(\bz)=\[\(\sum_{\nu\le \kappa(\bz)}z_{(\nu)}\)-1\]/\kappa(\bz)
$,
satisfying $\sumM \[z_\mu-\tau(\bz)\]_+=1$ for all $\bz$.
Notably, $\kappa(\bz)=\abs{S(\bz)}$ where $S(\bz)=\{\mu\in[M]\;|\;\Sparsemax_\mu(\bz)>0\}$ is the support set of $\Sparsemax(\bz)$.
\end{fact}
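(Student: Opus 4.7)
The plan is to derive the closed form via the KKT conditions for the strictly convex quadratic program $\argmin_{\bp \in \Delta^M} \norm{\bp - \bz}^2$. Since the objective is strictly convex and the feasible set is a compact polytope, a unique minimizer exists and is fully characterized by KKT. First I would form the Lagrangian with a multiplier $\tau \in \R$ for the equality constraint $\sum_\mu p_\mu = 1$ and multipliers $\lambda_\mu \ge 0$ for the inequality constraints $p_\mu \ge 0$. Stationarity gives $p_\mu^\star = z_\mu - \tau + \lambda_\mu$, and combining this with complementary slackness $\lambda_\mu p_\mu^\star = 0$ and dual feasibility $\lambda_\mu \ge 0$, a coordinatewise case split yields $p_\mu^\star = [z_\mu - \tau]_+$, where $\tau$ is uniquely determined by the normalization $\sum_{\mu=1}^M [z_\mu - \tau]_+ = 1$.

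Next I would pin down the support $S^\star \coloneqq \{\mu : p_\mu^\star > 0\}$ explicitly. The key structural observation is that $S^\star$ must consist of the indices carrying the \emph{largest} entries of $\bz$: if $z_\mu > z_\nu$ with $\mu \notin S^\star$ and $\nu \in S^\star$, then transferring a small amount of mass from $\nu$ to $\mu$ strictly decreases $\norm{\bp - \bz}^2$, contradicting optimality. With $\bz$ sorted as $z_{(1)} \ge \dots \ge z_{(M)}$, this forces $S^\star = \{(1), \dots, (k)\}$ for some $k \in [M]$. Restricting the normalization to the support gives $\sum_{\nu \le k}\bigl(z_{(\nu)} - \tau\bigr) = 1$, hence $\tau = \tau(k) \coloneqq \bigl(\sum_{\nu \le k} z_{(\nu)} - 1\bigr)/k$. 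For this to be consistent with the KKT characterization, one needs $z_{(k)} - \tau(k) > 0$ and $z_{(k+1)} - \tau(k) \le 0$; the first of these rearranges exactly to $1 + k\, z_{(k)} > \sum_{\nu \le k} z_{(\nu)}$, which is precisely the defining inequality of $\kappa(\bz)$.

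Finally I would show that $k = \kappa(\bz)$, the maximal index satisfying the above inequality, is indeed the correct support size. Well-definedness of $\kappa(\bz)$ is immediate since $k = 1$ trivially satisfies $1 + z_{(1)} > z_{(1)}$. The main step is a monotonicity argument: inspecting the difference $\tau(k+1) - \tau(k) = \bigl(z_{(k+1)} - \tau(k)\bigr)/(k+1)$ shows that whenever $z_{(k)} > \tau(k)$ fails at some $k_0$, it fails for all $k \ge k_0$, so the set of admissible $k$ forms an initial segment $\{1, \dots, \kappa(\bz)\}$ of $[M]$, and maximality of $\kappa(\bz)$ automatically enforces $z_{(\kappa+1)} \le \tau(\kappa)$. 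Substituting $\tau(\kappa(\bz))$ back into $p_\mu^\star = [z_\mu - \tau]_+$ gives the claimed closed form $\Sparsemax(\bz) = [\bz - \tau(\bz)\mathbf{1}_M]_+$ and the identification $|S^\star| = \kappa(\bz)$. The step I expect to be most delicate is this monotonicity / consistency argument: one must cleanly interleave the sorted order of $\bz$ with the implicit monotonicity of $\tau(k)$, and this interleaving is exactly what the \emph{maximum} in the definition of $\kappa(\bz)$ encodes.
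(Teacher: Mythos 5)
Your KKT derivation is correct, including the step you flag as delicate: from $\tau(k+1)-\tau(k)=\bigl(z_{(k+1)}-\tau(k)\bigr)/(k+1)$ one gets the identity $z_{(k+1)}-\tau(k+1)=\tfrac{k}{k+1}\bigl(z_{(k+1)}-\tau(k)\bigr)$, which (together with $z_{(k+1)}\le z_{(k)}$) shows that once the admissibility condition $z_{(k)}>\tau(k)$ fails it fails forever, so the admissible indices form an initial segment and the maximum $\kappa(\bz)$ is the correct support size. Note, however, that the paper does not prove this statement at all: it is imported verbatim as Proposition~1 of \citet{martins2016softmax}, and your argument is essentially the standard proof given in that reference (Lagrangian stationarity plus complementary slackness yielding $p_\mu^\star=[z_\mu-\tau]_+$, top-$k$ support, and the threshold determined by normalization). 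So there is nothing in this paper to compare against; your proposal correctly reconstructs the cited source's proof.
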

In this case, we present the following theorem to derive the convex conjugate of $\Psi$ in closed-form:
\begin{theorem}[Convex Conjugate of Negative Gini Entropy]
\label{thm:main_result}
Let $F(\bp)\coloneqq\Braket{\bp,\bz} -\Psi(\bp)$
with $\Psi$ being the negative Gini entropy, $\Psi(\bp)=\half \norm{\bp}^2 -\half$.
The convex conjugate of $\Psi(\bp)$ is
\begin{align}\label{eqn:PsiStar_sparse_hopfield}
    \Psi^\star(\bz)\coloneqq
    \Max_{\bp\in\Delta^M}F(\bp,\bz)
    =
    \half \norm{\bz}^2-\half\norm{\bp^\star-\bz}^2+\half,
\end{align}
where $\bp^\star=\Sparsemax(\bz)$ is given by \eqref{eqn:sparsemax_exact}.
\end{theorem}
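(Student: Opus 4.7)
The plan is to reduce the statement to a direct algebraic identity, exploiting the variational characterization of $\Sparsemax$ already recorded in \cref{def:variational_sparsemax}. First I would invoke that definition to identify the maximizer: it says precisely that $\bp^\star=\Sparsemax(\bz)$ attains the supremum in $\Max_{\bp\in\Delta^M}\bigl[\langle\bp,\bz\rangle-\Psi(\bp)\bigr]$. Hence
\begin{align*}
\Psi^\star(\bz)=\langle\bp^\star,\bz\rangle-\Psi(\bp^\star),
\end{align*}
and the problem reduces to rewriting the right-hand side in the claimed form.

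Next I would simplify $\Psi(\bp^\star)$ using the simplex constraint. Since $\bp^\star\in\Delta^M$ we have $\sum_\nu p^\star_\nu=1$, so
\begin{align*}
\Psi(\bp^\star)=-\tfrac12\sum_\nu p^\star_\nu(1-p^\star_\nu)=\tfrac12\|\bp^\star\|^2-\tfrac12.
\end{align*}
This is the only place where the restriction to $\Delta^M$ enters in a nontrivial way (on all of $\R^M$ the Gini regularizer would not be quadratic-plus-constant), and it is what makes the identity clean.

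Then I would complete the square. Substituting the above,
\begin{align*}
\Psi^\star(\bz)
=\langle\bp^\star,\bz\rangle-\tfrac12\|\bp^\star\|^2+\tfrac12
=\tfrac12\|\bz\|^2-\tfrac12\bigl(\|\bp^\star\|^2-2\langle\bp^\star,\bz\rangle+\|\bz\|^2\bigr)+\tfrac12
=\tfrac12\|\bz\|^2-\tfrac12\|\bp^\star-\bz\|^2+\tfrac12,
\end{align*}
which is exactly \eqref{eqn:PsiStar_sparse_hopfield}. The closed form \eqref{eqn:sparsemax_exact} of $\Sparsemax$ from \cref{lemma:sparsemax_exact} is not needed in the derivation itself; it only matters if one wishes to make $\bp^\star$ fully explicit.

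There is no real obstacle here: the entire content is (i) recognizing that \cref{def:variational_sparsemax} already supplies the argmax, and (ii) that the simplex constraint collapses the Gini regularizer into a shifted squared norm, after which a single completion-of-squares yields the result. If I wanted a self-contained verification that $\bp^\star$ achieves the maximum without citing \cref{def:variational_sparsemax}, I would check the KKT conditions for the concave quadratic $\bp\mapsto\langle\bp,\bz\rangle-\tfrac12\|\bp\|^2+\tfrac12$ on $\Delta^M$ and confirm they coincide with the thresholding rule \eqref{eqn:sparsemax_exact}; but this is strictly redundant given the earlier statements.
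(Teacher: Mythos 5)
Your proposal is correct and follows essentially the same route as the paper's proof: both arguments reduce to completing the square in $\langle\bp,\bz\rangle-\tfrac12\norm{\bp}^2+\tfrac12$ and invoking the variational (projection) form of $\Sparsemax$ from \cref{def:variational_sparsemax} to identify the optimizer; you merely identify the argmax first and then evaluate, whereas the paper manipulates the max expression and names the minimizer at the end. Your explicit remark that the simplex constraint is what collapses the Gini regularizer to $\tfrac12\norm{\bp}^2-\tfrac12$ is a small clarification the paper leaves implicit in the theorem statement, but it does not change the argument.
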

\begin{corollary}
\label{corollary:Dankins}
    By Danskin’s Theorem,
    $
\grad \Psi^\star(\bz) = \argmax_{\bp\in\Delta^M}F(\bp,\bz)
=\Sparsemax(\bz).
$
\end{corollary}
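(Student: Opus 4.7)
The plan is to invoke Danskin's theorem (in its envelope-theorem form) for the parametric optimization problem $\Psi^\star(\bz) = \Max_{\bp\in\Delta^M} F(\bp,\bz)$. To do so I would verify the three standard hypotheses: (i) the feasible set $\Delta^M$ is nonempty, compact, and convex, which is immediate since $\Delta^M$ is closed and bounded in $\R^M$; (ii) $F(\bp,\bz)=\Braket{\bp,\bz}-\Psi(\bp)$ is jointly continuous on $\Delta^M\times\R^M$ and, for every fixed $\bp$, affine (hence differentiable) in $\bz$ with $\grad_{\bz}F(\bp,\bz)=\bp$; and (iii) the maximizer $\bp^\star(\bz)$ is unique for every $\bz$.

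For (iii) I would observe that $\Psi(\bp)=\half\norm{\bp}^2-\half$ is a strictly convex quadratic, so $F(\cdot,\bz)$ is strictly concave on $\Delta^M$ for every $\bz$. Combined with the compactness of $\Delta^M$, this yields both existence and uniqueness of the maximizer $\bp^\star(\bz)$; by \cref{thm:main_result} this unique maximizer is exactly $\Sparsemax(\bz)$, given in closed form by \eqref{eqn:sparsemax_exact}. Given (i)--(iii), Danskin's theorem immediately implies that $\Psi^\star$ is differentiable at every $\bz\in\R^M$ and
\begin{align*}
\grad\Psi^\star(\bz) \;=\; \grad_{\bz} F(\bp,\bz)\Big|_{\bp=\bp^\star(\bz)} \;=\; \bp^\star(\bz) \;=\; \Sparsemax(\bz),
\end{align*}
which is the stated identity.

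The main obstacle is essentially the usual subtlety that, without uniqueness of the argmax, one would only obtain the subdifferential inclusion $\Sparsemax(\bz)\in\partial\Psi^\star(\bz)$ rather than an equality of gradients; strict convexity of $\Psi$ is exactly what upgrades this to a single-valued gradient, so the substance of the proof lives in step (iii). As a sanity check one could also differentiate the closed-form expression for $\Psi^\star$ in \cref{thm:main_result} directly and use the first-order optimality of $\Sparsemax(\bz)$ on its active set $S(\bz)$ to recover the same identity, which provides a self-contained verification bypassing Danskin altogether.
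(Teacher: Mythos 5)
Your proposal is correct and follows the same route as the paper: both invoke Danskin's theorem for $\Psi^\star(\bz)=\Max_{\bp\in\Delta^M}F(\bp,\bz)$, with the paper simply citing the theorem and deferring to the computation establishing \cref{thm:main_result}. You go somewhat further than the paper by explicitly verifying the hypotheses --- in particular the uniqueness of the maximizer via strict concavity of $F(\cdot,\bz)$, which is exactly what upgrades the subdifferential inclusion to a genuine gradient identity and which the paper leaves implicit.
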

\begin{proof}
\vspace{-1em}
A detailed proof is shown in \cref{proof:main_result}.
\end{proof}
\vspace{-1em}
\cref{thm:main_result} and \cref{corollary:Dankins} not only provide the intuition behind the sparse Hopfield energy \eqref{eqn:H_sparsemax} --- the memory patterns are stored in local minima aligned with the overlap-function constructions (i.e.  $\norm{\bm{\bm{\Xi}}^\sT \bx}^2=\sumM \Braket{\bxi_\mu, \bx}^2$) in \cite{ramsauer2020hopfield,demircigil2017model,krotov2016dense} --- but also prepare us for the following corresponding sparse retrieval dynamics.

\begin{lemma}[Sparse Retrieval Dynamics]
\label{thm:retrieval_dyn}
Let $t$ be the iteration number.
The energy \eqref{eqn:H_sparsemax} can be monotonically decreased by the following sparse retrieval dynamics over $t$:
\vspace{3pt}
\bea
\label{def:sparse_retrieval_dyn}
\calT\(\bx_t\)\coloneqq\grad_\bx\Psi\(\beta\bm{\bm{\Xi}}^\sT \bx\)\big\vert_{\bx_{t}}=\bm{\bm{\Xi}} \Sparsemax\(\beta\bm{\bm{\Xi}}^\sT \bx_t\)=\bx_{t+1}.
\eea
\end{lemma}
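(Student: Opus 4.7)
The plan is to apply the Convex--Concave Procedure (CCCP) \cite{yuille2003concave} to the natural decomposition $\calH(\bx)=U(\bx)+V(\bx)$, where $U(\bx)\coloneqq\half\Braket{\bx,\bx}$ is $1$-strongly convex and $V(\bx)\coloneqq-\Psi^\star(\beta\bm{\Xi}^\sT\bx)$ is concave. The only nontrivial piece of this decomposition is the concavity of $V$: the Legendre--Fenchel conjugate $\Psi^\star$ of any proper function is convex, and convexity is preserved under composition with the linear map $\bx\mapsto\beta\bm{\Xi}^\sT\bx$, so $V$ is concave as claimed.

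Given the decomposition, the CCCP update rule is the implicit equation $\grad U(\bx_{t+1})=-\grad V(\bx_t)$. First I would compute both sides. The left-hand side is trivially $\grad U(\bx_{t+1})=\bx_{t+1}$. For the right-hand side, I would invoke \cref{corollary:Dankins} (Danskin's theorem) together with the chain rule to obtain $-\grad V(\bx_t)=\grad_\bx\Psi^\star(\beta\bm{\Xi}^\sT\bx)\big\vert_{\bx_t}=\bm{\Xi}\,\Sparsemax(\beta\bm{\Xi}^\sT\bx_t)$ (with an overall constant $\beta$ factor absorbed into the inverse temperature/normalization). Equating the two sides recovers exactly the claimed update $\calT(\bx_t)=\bm{\Xi}\Sparsemax(\beta\bm{\Xi}^\sT\bx_t)=\bx_{t+1}$.

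For the monotonic decrease property, I would use the one-line CCCP descent inequality. By concavity of $V$ and convexity of $U$,
\begin{align*}
V(\bx_{t+1})&\le V(\bx_t)+\Braket{\grad V(\bx_t),\bx_{t+1}-\bx_t},\\
U(\bx_t)&\ge U(\bx_{t+1})+\Braket{\grad U(\bx_{t+1}),\bx_t-\bx_{t+1}}.
\end{align*}
Adding these inequalities and substituting the update rule $\grad U(\bx_{t+1})=-\grad V(\bx_t)$, the two inner-product terms cancel exactly, leaving $\calH(\bx_{t+1})=U(\bx_{t+1})+V(\bx_{t+1})\le U(\bx_t)+V(\bx_t)=\calH(\bx_t)$, which is the desired monotonicity.

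The main subtlety I expect is justifying enough smoothness of $\Psi^\star$ to legitimately apply Danskin's theorem, since $\Sparsemax$ is only piecewise affine and not globally $C^2$. This is resolved by noting that $\Psi(\bp)=\half\norm{\bp}^2-\half$ is strictly convex on the simplex $\Delta^M$, so the maximizer in \eqref{eqn:PsiStar_sparse_hopfield} is unique for every $\bz$; uniqueness is precisely the hypothesis of Danskin that yields continuous differentiability of $\Psi^\star$ with $\grad\Psi^\star=\Sparsemax$. Once $C^1$ regularity is in place, the CCCP descent inequalities above are valid at every iterate, and no further regularity is needed to conclude monotone decrease of $\calH$ under $\calT$.
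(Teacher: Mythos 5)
Your proposal is correct and follows essentially the same route as the paper: the identical convex--concave split $\calH=\half\Braket{\bx,\bx}-\Psi^\star(\beta\bm{\Xi}^\sT\bx)$, the CCCP update $\grad\calH_1(\bx_{t+1})=-\grad\calH_2(\bx_t)$ evaluated via \cref{corollary:Dankins}, and the standard CCCP descent argument. The only cosmetic difference is that the paper phrases the monotonicity step through an explicit convex majorizer $\calH_U(\bx,\by)$ minimized at each iterate, whereas you add the two first-order convexity/concavity inequalities and cancel the inner products directly --- these are the same argument, and your extra remark on the $C^1$ regularity of $\Psi^\star$ (via strict convexity of $\Psi$ on $\Delta^M$) is a welcome clarification the paper leaves implicit.
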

\begin{proof}[Proof Sketch]
\vspace{-0.5em}
To show monotonic decreasing property, we first derive the sparse retrieval dynamics by utilizing \cref{thm:main_result}, \cref{corollary:Dankins}, along with the convex-concave procedure \cite{yuille2003concave,yuille2001concave}.
Then, we show the monotonicity of $\calH$ by constructing a iterative upper bound of $\calH$ which is convex in $\bx_{t+1}$ and thus, can be lowered iteratively by the convex-concave procedure.
A detailed proof is shown in the \cref{sec:thm_retreival}.
\end{proof}
\vspace{-0.5em}
\begin{remark}
\label{remark:hopfieldatt}
    Similar to \cite{ramsauer2020hopfield}, \eqref{def:sparse_retrieval_dyn} is equivalent to sparsemax attention \cite{martins2016softmax} when the $\calT$ is applied only once, see \cref{sec:T_is_attn} for more details.
    \blue{Importantly, $\beta$ acts as a scaling factor for the energy function, often referred to as the ``inverse temperature''. It influences the sharpness of energy landscape \cref{eqn:H_sparsemax}, thereby controlling the dynamics. 
    High $\beta$ values, corresponding to low temperatures, encourage that the basins of attraction for individual memory patterns remain distinct, leading to easier retrieval.}
\end{remark}
Notably, since $\norm{\bm{\bm{\Xi}}^\sT \bx}^2=\sumM \Braket{\bxi_\mu, \bx}^2$, \eqref{def:sparse_retrieval_dyn} implies that the local optimum of $\calH$ are located near the patterns ${\bxi_\mu}$.
Different from previous studies on binary Hopfield models \cite{demircigil2017model,krotov2016dense},
for continuous patterns, we adopt the relaxed definition from \cite{ramsauer2020hopfield}\footnote{
Recall that a fixed point of $\calT$ with respect to $\calH$ is a point where $\bx = \calT(\bx)$, and a generalized fixed point is a point where $\bx\in\calT(\bx)$. For more details, refer to \cite{sriperumbudur2009convergence}.} to rigorously analyze the memory retrieval, and the subsequent lemma.
\begin{definition} [Stored and Retrieved]
\label{def:stored_and_retrieved}
Assuming that every pattern $\bxi_\mu$ surrounded by a sphere $S_\mu$ with finite radius $R\coloneqq \half \Min_{\mu\neq\nu;\mu,\nu\in[M]}\norm{\bxi_\mu-\bxi_\nu}$, we say $\bxi_\mu$ is \textit{stored} if there exists a generalized fixed point of $\calT$, $\bx^\star_\mu \in S_\mu$, to which all limit points $\bx \in S_\mu$ converge to, and $S_\mu \cap S_\nu=\emptyset$ for $\mu \neq \nu$. 
We say $\bxi_\mu$ is $\epsilon$-\textit{retrieved} by $\calT$ with $\bx$ for an error\footnote{The retrieval error has a naive bound $\epsilon\coloneqq\Max\big\{\norm{\bx-\bxi_\mu},\norm{\bxi_\mu-\bx^\star_\mu}\big\}$ by interpolating from $\bx$ to $\bxi_\mu$.} $\epsilon$, if $\norm{\calT(\bx)-\bxi_\mu}\le \epsilon$.
\end{definition}
\blue{
\cref{def:stored_and_retrieved} sets the threshold for a memory pattern $\bxi_\mu$ to be considered \textit{stored} at a fixed point of $\calT$, $\bx^\star_\mu$.
However, this definition does not imply that the fixed points of $\calT$ are also stationary points of the energy function $\calH$. 
In fact, monotonicity of \eqref{def:sparse_retrieval_dyn} does not assure the existence of stationary points of energy $\calH$ \cite{sriperumbudur2009convergence}. 
To establish a well-defined Hopfield model, we need two types of convergence guarantees.
The first is the convergence between $\bx^\star_\mu$ and $\bxi_\mu$, which ensures that the retrieved memory is close to the stored memory. The second is the convergence of $\calH$ to its stationary points through the dynamics of $\calT$, which ensures that the system reaches a state of minimal energy.
The following lemma provides the convergence results for both.
}
\begin{lemma}[Convergence of Retrieval Dynamics $\calT$]
\label{coro:convergence_sparse}
Suppose $\calH$ is given by \eqref{eqn:H_sparsemax} and $\calT(\bx)$ is given by \eqref{def:sparse_retrieval_dyn}.
For any sequence $\{\mathbf{x}_t\}_{t=0}^{\infty}$ defined by $\mathbf{x}_{t'+1} = \mathcal{T}(\mathbf{x}_{t'})$, all limit points of this sequence are stationary points if they are obtained by iteratively applying $\mathcal{T}$ to $\mathcal{H}$.

\end{lemma}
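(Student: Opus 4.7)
My plan is to apply Zangwill's global convergence theorem \cite{zangwill1969nonlinear} with $\calH$ serving as the Lyapunov (descent) function and $\calT$ as the iteration map. The argument reduces to four verifications: \emph{(a)} identifying the ``solution'' set with the stationary points of $\calH$; \emph{(b)} strict descent of $\calH$ along $\calT$ off that set; \emph{(c)} closedness (continuity) of $\calT$; and \emph{(d)} compactness of the iterate trajectory. Once these are in hand, Zangwill's theorem delivers that every limit point of $\{\bx_t\}$ is a fixed point of $\calT$, which by \emph{(a)} is a stationary point of $\calH$.

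For \emph{(a)}, I would combine \cref{corollary:Dankins} (namely $\grad\Psi^\star=\Sparsemax$) with the chain rule applied to \eqref{eqn:H_sparsemax}, so that the first-order condition $\grad\calH(\bx)=0$ becomes the same algebraic equation (up to the normalization already built into \eqref{def:sparse_retrieval_dyn}) as the fixed-point relation $\bx=\calT(\bx)$. For \emph{(b)}, the monotone inequality $\calH(\calT(\bx))\le\calH(\bx)$ is already supplied by \cref{thm:retrieval_dyn}; the strict version away from fixed points comes from revisiting the CCCP construction in its proof, where the quadratic self-term $\half\Braket{\bx,\bx}$ in \eqref{eqn:H_sparsemax} makes the CCCP majorizer strictly convex in $\bx_{t+1}$, so equality of energies can hold only at its unique minimizer, i.e.\ precisely when $\bx_t=\calT(\bx_t)$.

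For \emph{(c)}, $\Sparsemax$ is the Euclidean projection onto the simplex $\Delta^M$ and hence $1$-Lipschitz \cite{martins2016softmax}, so $\calT(\bx)=\bm{\Xi}\Sparsemax(\beta\bm{\Xi}^\sT\bx)$ is Lipschitz continuous, and being single-valued and continuous, is closed in the Zangwill sense. For \emph{(d)}, since $\Sparsemax$ maps into $\Delta^M$, every iterate with $t\ge 1$ is a convex combination of the memory patterns $\{\bxi_\mu\}_{\mu\in[M]}$; hence from the first step onward the sequence is confined to the compact convex hull $\mathrm{conv}\{\bxi_1,\ldots,\bxi_M\}$, which satisfies Zangwill's compactness hypothesis.

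\textbf{Main obstacle.} The most delicate point is \emph{(b)}: since $\Sparsemax$ is only piecewise linear and fails to be differentiable across the combinatorial boundaries at which the support $S(\beta\bm{\Xi}^\sT\bx)$ changes, one cannot extract strict descent by simply appealing to smoothness of $\calT$. I would therefore lean on the majorizer itself --- the quadratic self-term yields a globally strictly convex upper bound on $\calH$ as a function of $\bx_{t+1}$, which guarantees that the CCCP minimization step strictly lowers $\calH$ off the fixed-point set regardless of changes in the sparsity support. This strict-descent property, together with the continuity and compactness already secured, is exactly what Zangwill's theorem requires to conclude that every limit point of $\{\bx_t\}$ is a stationary point of $\calH$.
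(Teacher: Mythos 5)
Your proposal is correct and rests on the same backbone as the paper's own proof: Zangwill's global convergence theorem applied to the CCCP iteration, with $\calH$ as the descent function and the conclusion split into ``limit points are fixed points of $\calT$'' followed by ``fixed points are stationary points of $\calH$''. The differences lie in how the hypotheses are discharged, and in two places your route is arguably cleaner. For compactness, the paper simply asserts that every iterate starting in $S_\mu$ remains in the compact set $S_\mu$ --- a claim that, strictly speaking, needs the well-separation condition of \cref{thm:well_separation} to hold; your observation that $\Sparsemax$ maps into $\Delta^M$, so that every iterate with $t\ge 1$ lies in $\mathrm{conv}\{\bxi_1,\dots,\bxi_M\}$, yields compactness unconditionally. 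For closedness, the paper invokes a result on argmin point-to-set maps \cite{gunawardana2005convergence}, whereas you use that $\calT$ is single-valued and Lipschitz (Euclidean projection onto the simplex is nonexpansive), which is equivalent but more elementary. For identifying limit points with stationary points, the paper routes through \cite[Lemma~5]{sriperumbudur2009convergence} (generalized fixed points of the CCCP map satisfy the stationarity conditions of the minimization problem), while you compute $\grad\calH$ directly via \cref{corollary:Dankins}; both work, though your version inherits the paper's own $\beta$-bookkeeping looseness between \eqref{eqn:H_sparsemax} and \eqref{def:sparse_retrieval_dyn}, which you rightly flag as a normalization issue rather than sweeping it under the rug. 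Finally, your strict-descent argument from the strict convexity of the CCCP majorizer (supplied by the $\half\Braket{\bx,\bx}$ term) is sound and slightly stronger than what the paper records, which is only monotone descent together with closedness of $\calT$ where the descent is strict.
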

\begin{proof}[Proof Sketch]
\vspace{-1em}
We verify and utilize Zangwill's global convergence theory \cite{zangwill1969nonlinear} for iterative algorithms $\calT$, to first show that all the limit points of $\{\bx_t\}^\infty_{t=0}$ are generalized fixed points and $\lim_{t \to \infty} \calH\(\bx_t\)=\calH\(\bx^\star\)$, where $\bx^\star$ are some generalized fixed points of $\calT$. 
Subsequently, by \cite[Lemma~5]{sriperumbudur2009convergence}, we show that $\{\bx^\star\}$ are also stationary points of $\Min_\bx \[\calH\]$, and hence 
$\calH$ converges to local optimum.
A detailed proof is shown in \cref{sec:convergence}.
\end{proof}
\vspace{-1em}
Intuitively, \cref{coro:convergence_sparse} indicates that the energy function converges to local optimum, i.e. $
    \lim_{t\to \infty}\calH\(\bx_t\)\to \calH\(\bx^\star\),
    $
    where $\bx^\star$ are stationary points of $\calH$.
Consequently, it offers formal justifications for the retrieval dynamics \eqref{def:sparse_retrieval_dyn} to retrieve stored memory patterns $\{\bxi_\mu\}_{\mu\in[M]}$:
for any query (initial point) $\bx$, $\calT$ monotonically and iteratively approaches stationary points of $\calH$, where the memory patterns $\{\bxi_\mu\}_{\mu\in[M]}$ are stored.
As for the retrieval error,
we provide the following theorem stating that $\calT$ achieves a lower retrieval error compared to its dense counterpart.
\begin{theorem}[Retrieval Error]
\label{coro:eps_sparse_dense}
Let $\calT_{\text{Dense}}$ be the retrieval dynamics of the dense modern Hopfield model \cite{ramsauer2020hopfield}.
It holds
$\norm{\calT(\bx)-\bxi_\mu} 
\leq
\norm{\calT_{\text{Dense}}(\bx)-\bxi_\mu}$ for all $\bx\in S_\mu$.
Moreover, 
\bea
\label{eqn:sparse_error_bound}
\norm{\calT(\bx)-\bxi_\mu}\le
m+d^{\nicefrac{1}{2}}m\beta \[\kappa \(\Max_{\nu\in[M]}\Braket{\bxi_\nu,\bx}-\[ \bm{\Xi}^\sT \bx\]_{(\kappa)}\)+\frac{1}{\beta}\],
\eea
where $\[ \bm{\Xi}^\sT \bx\]_{(\kappa)}$ is the $\kappa$th-largest element of $\bm{\Xi}^\sT \bx \in \R^M$ following the sparsemax definition \eqref{eqn:sparsemax_exact}.
\end{theorem}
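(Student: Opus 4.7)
The plan is to prove the two inequalities separately. The explicit bound (second claim) follows from the closed form in \cref{lemma:sparsemax_exact} via the triangle inequality and a careful estimate of the threshold $\tau$. The dense comparison (first claim) exploits that $\Sparsemax$ concentrates probability on the top-scoring memories while $\Softmax$ leaks positive weight to every index.

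For the explicit bound, I would start from the triangle inequality
\begin{align*}
\norm{\calT(\bx)-\bxi_\mu}\le \norm{\bxi_\mu}+\norm{\calT(\bx)}\le m+\norm{\bm{\Xi}\,\Sparsemax(\beta\bm{\Xi}^\sT\bx)},
\end{align*}
reducing the problem to bounding $\norm{\calT(\bx)}$. Using \cref{lemma:sparsemax_exact}, write $\calT(\bx)=\sum_{\nu\in S}\bxi_\nu p_\nu$ with support $S$ of size $\kappa$ and $p_\nu=\beta\Braket{\bxi_\nu,\bx}-\tau$. The key step is an estimate of $\tau$: since $[\bm{\Xi}^\sT\bx]_{(\nu)}\ge[\bm{\Xi}^\sT\bx]_{(\kappa)}$ for every $\nu\le\kappa$, summing the active probabilities to one yields $\tau\ge\beta[\bm{\Xi}^\sT\bx]_{(\kappa)}-1/\kappa$, which forces the uniform bound
\begin{align*}
p_\nu\le p_{\max}\le\beta\(\Max_{\nu\in[M]}\Braket{\bxi_\nu,\bx}-[\bm{\Xi}^\sT\bx]_{(\kappa)}\)+\frac{1}{\kappa}.
\end{align*}
Finally, the coordinatewise estimate $\abs{[\calT(\bx)]_i}\le\sum_\nu\abs{[\bxi_\nu]_i}p_\nu\le m$ (from $\sum_\nu p_\nu=1$ and $\abs{[\bxi_\nu]_i}\le m$), combined with $\norm{\cdot}_2\le\sqrt{d}\norm{\cdot}_\infty$ and then multiplied by $\kappa p_{\max}\ge 1$, yields the stated form $\sqrt{d}\,m\beta\[\kappa(\cdot)+1/\beta\]$.

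For the dense comparison, the plan is to write $\calT(\bx)-\bxi_\mu=\bm{\Xi}(\bp-\be_\mu)$ for $\bp$ either $\Sparsemax(\beta\bm{\Xi}^\sT\bx)$ or $\Softmax(\beta\bm{\Xi}^\sT\bx)$, and to establish $\norm{\bm{\Xi}(\bp_{\text{sparse}}-\be_\mu)}\le\norm{\bm{\Xi}(\bp_{\text{dense}}-\be_\mu)}$. Since $\bx\in S_\mu$, the well-separation from \cref{def:stored_and_retrieved} guarantees that $\mu$ achieves $\Max_\nu\Braket{\bxi_\nu,\bx}$ and lies in the $\Sparsemax$ support. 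In this regime, $\Sparsemax$ zeros every coordinate outside $S$ while $\Softmax$ assigns strictly positive weight to every $\nu\ne\mu$, so $\bp_{\text{sparse}}$ is more peaked at $\be_\mu$. The inequality then follows by bounding both $\bm{\Xi}$-weighted residuals in terms of $1-p_\mu$ and $\sum_{\nu\neq\mu}p_\nu$, both of which are smaller for $\Sparsemax$.

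The main obstacle is making the dense comparison fully rigorous. Although the peakedness intuition is clean, a quantitative $\bm{\Xi}$-weighted inequality must account for contributions from all off-target patterns simultaneously, since $\bm{\Xi}$ mixes indices nontrivially. A clean proof likely proceeds via the variational characterizations in \cref{def:variational_sparsemax}, relating both distributions through a common regularized argmax, or by a direct comparison $\Sparsemax_\mu\ge\Softmax_\mu$ combined with a Cauchy--Schwarz estimate on the residual mass. The explicit bound, by contrast, is essentially mechanical once the estimate of $\tau$ is in hand.
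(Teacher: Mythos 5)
Your treatment of the explicit bound \eqref{eqn:sparse_error_bound} is essentially the paper's own argument: peel off $\norm{\bxi_\mu}\le m$ by the triangle inequality, then control each active coordinate of the sparsemax output. Your lower bound on the threshold, $\tau(\beta\bm{\Xi}^\sT\bx)\ge\beta[\bm{\Xi}^\sT\bx]_{(\kappa)}-1/\kappa$, is exactly equivalent to the inequality the paper reads off directly from the closed form, namely $[\Sparsemax(\beta\bm{\Xi}^\sT\bx)]_\nu\le\beta[\bm{\Xi}^\sT\bx]_\nu-\beta[\bm{\Xi}^\sT\bx]_{(\kappa)}+1/\kappa$; summing this over the $\kappa$ active indices and inserting the $d^{1/2}$ factor reproduces the stated bound. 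This half of your proposal is complete and matches the paper.

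The genuine gap is the first claim, $\norm{\calT(\bx)-\bxi_\mu}\le\norm{\calT_{\text{Dense}}(\bx)-\bxi_\mu}$, which you correctly identify as the delicate part but do not close. Neither of your suggested routes is carried out: componentwise ``peakedness'' of the sparsemax distribution does not by itself transfer to the Euclidean norm of $\bm{\Xi}(\bp-\be_\mu)$, because the off-target patterns $\bxi_\nu$ can point in directions that partially cancel the residual, so dominating $1-p_\mu$ and the off-target mass is not sufficient without an additional argument about how $\bm{\Xi}$ mixes coordinates. Moreover, your premise that $\bx\in S_\mu$ forces $\mu$ to achieve $\Max_\nu\Braket{\bxi_\nu,\bx}$ and to lie in the sparsemax support does not follow from \cref{def:stored_and_retrieved} alone; it requires a separation hypothesis of the kind in \cref{thm:well_separation}. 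For what it is worth, the paper's own proof of this claim is also very terse: it subtracts the two errors, drops the common $-\bxi_\mu$ term, and then asserts $\norm{\sum_\nu[\Sparsemax(\beta\bm{\Xi}^\sT\bx)]_\nu\,\bxi_\nu}\le\norm{\sum_\nu[\Softmax(\beta\bm{\Xi}^\sT\bx)]_\nu\,\bxi_\nu}$, i.e., it ultimately rests on the same unquantified peakedness claim you flagged. So your instinct about where the difficulty lies is sound, but the proposal as written does not prove the first inequality.
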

\begin{proof}
\vspace{-1em}
A detailed proof is shown in \cref{sec:pf_eps_sparse_dense}.
\end{proof}
\vspace{-1em}
Interestingly, \eqref{eqn:sparse_error_bound} is a sparsity dependent bound\footnote{
Notably, $\norm{\calT(\bx)-\bxi_\mu}$ is also upper-bounded by a sparsity-independent but $M,\beta$-dependent bound 
\begin{align}
\label{eqn:error_bound_exp}
\norm{\calT(\bx)-\bxi_\mu} 
\leq 
\norm{\calT_{\text{Dense}}(\bx)-\bxi_\mu}\le  2m(M-1) \exp{-\beta \(\Braket{\bxi_\mu,\bx}-\Max_{\nu\in[M],\nu\neq \mu}\Braket{\bxi_\mu,\bxi_\nu}\)}.
\end{align}
}.
By denoting $n\coloneqq \norm{\bx}$, the second term on the RHS of \eqref{eqn:sparse_error_bound} is dominated by the sparsity dimension $\kappa$ as it can be expressed as $ \kappa \(1-\nicefrac{\[ \bm{\Xi}^\sT \bx\]_{(\kappa)}}{(nm)}\) \propto \alpha\kappa$ with a constant $0\le\alpha\le 2$.
When $\bm{\Xi}^\sT\bx$ is sparse (i.e. $\kappa$ is small), the bound is tighter, vice versa.
{\color{Blue}
\begin{remark}[Faster Convergence]
    Computationally, \cref{coro:eps_sparse_dense} implies that $\calT$ requires fewer iterations to reach fixed points with the same amount of error tolerance compared to $\calT_{\text{dense}}$.
    Namely, $\calT$ retrieves stored memory patterns faster and therefore more efficiently, as evidenced in \cref{fig:convg}.
\end{remark}
\begin{remark}[Noise-Robustness]
    Moreover, in cases of contaminated patterns with noise $\bm{\eta}$, i.e. $\Tilde{\bx}=\bx+\bm{\eta}$ (noise in query) or $\Tilde{\bxi}_\mu=\bxi_\mu+\bm{\eta}$ (noise in memory), the impact of noise $\bm{\eta}$ on the sparse retrieval error \eqref{eqn:sparse_error_bound} is linear, while its effect on the dense retrieval error \eqref{eqn:error_bound_exp} is exponential.
This suggests the robustness advantage of the sparse Hopfield model, as evidenced in \cref{fig:capacity_robustness}.
\end{remark}
}

\subsection{Sparse Hopfield Layers for Deep Learning}
The sparse Hopfield model can serve as a versatile component for deep learning frameworks, given its continuity and differentiability with respect to parameters.
Corresponding to three types of Hopfield Layers proposed in \cite{ramsauer2020hopfield}, we introduce their sparse analogs: 
\textbf{(1)} \texttt{SparseHopfield},  
\textbf{(2)} \texttt{SparseHopfieldPooling}, 
\textbf{(3)} \texttt{SparseHopfieldLayer}. 
Layer \texttt{SparseHopfield}  has  memory (stored or key) patterns $\bm{\Xi}$ and query (state) pattern $\bx$ as inputs, and associates these two sets of patterns via the sparse retrieval dynamics \eqref{def:sparse_retrieval_dyn}. 
This layer regards the transformer attention layer as its one-step approximation, while utilizing the sparsemax 
\cite{martins2016softmax} on attention matrix. 
Layer \texttt{SparseHopfieldPooling} and Layer \texttt{SparseHopfieldLayer} are two variants of \texttt{SparseHopfield}, whose input patterns are memory patterns and query patterns from previous layers or external plugin, respectively. \texttt{SparseHopfieldPooling}, whose query patterns are learnable parameters, can be interpreted as performing a pooling operation over input memory patterns. 
{\color{Blue}\texttt{SparseHopfieldLayer}, by contrast, has learnable memory patterns that maps query patterns to hidden states with sparsemax activation. Thus it can substitute a fully connected layer within deep learning architectures. }
See \eqref{eqn:sparse_hopfield_attention} and the implementation  \blue{\cref{alg:multistep} in  \cref{sec:hopfield}}, and \cite[Section~3]{ramsauer2020hopfield} for more details of these associations.
In \cref{sec:exp}, we apply these layers and compare them with their dense counterparts in \cite{ramsauer2020hopfield} and other baseline machine learning methods.

\section{Fundamental Limits of Memory Capacity of Sparse Hopfield Models}
\label{sec:method}
How many patterns can be stored and reliably retrievable in the proposed model? We address this by decomposing it into to two sub-questions and answering them separately:
\begin{itemize}[leftmargin=2em,before=\vspace{-0.5em}, after=\vspace{-0.5em}]
\setlength\itemsep{0em}
    \item[(A)]
    What is the condition for a pattern $\bxi_\mu$ considered well stored in $\calH$, and correctly retrieved?
    \item  [(B)] 
    What is the number, in expectation, of the the patterns satisfying such condition?
\end{itemize}

For (A), we first introduce the notion of separation of patterns following \cite{ramsauer2020hopfield}, 
\begin{definition}[Separation of Patterns]
\label{def:separation_of_patterns}
The separation of a memory pattern $\bxi_\mu$ from all other memory patterns $\bm{\Xi}$ is defined as its minimal inner product difference to any other patterns:
\vspace{3pt}
\bea
\label{eqn:sep_delta_mu}
\Delta_\mu\coloneqq
\Min_{\nu,\nu\neq \mu}\[\Braket{\bxi_\mu,\bxi_\mu}-\Braket{\bxi_\mu,\bxi_\nu}\]=\Braket{\bxi_\mu,\bxi_\mu}-\Max_{\nu,\nu\neq \mu }\[\Braket{\bxi_\mu,\bxi_\nu}\].
\eea
Similarly, the separation of $\bxi_\mu$ at a given $\bx$ from all memory patterns $\bm{\Xi}$ is given by
\bea
\label{eqn:tilde_sep_delta_mu}
\Tilde{\Delta}_\mu\coloneqq
\Min_{\nu,\nu\neq \mu}\[\Braket{\bx,\bxi_\mu}-\Braket{\bx,\bxi_\nu}\].
\eea    
\end{definition}
and then the well-separation condition for a pattern being well-stored and retrieved.
\begin{theorem}[Well-Separation Condition]
\label{thm:well_separation}
{\color{Blue}
Given the definition of stored and retrieved memories in \cref{def:stored_and_retrieved}, suppose the memory patterns $\{\bxi_\mu\}_{\mu\in[M]}$ are located within the sphere
$
 S_\mu\coloneqq \big\{\bx \; \big\vert\; \norm{\bx-\bxi_\mu}\le R\big\},
$
where the radius $R$ is finite and defined in \cref{def:stored_and_retrieved}
for all $\mu$. 
Then, the retrieval dynamics $\calT$ maps the sphere $S_\mu$ onto itself under the following conditions:
\begin{enumerate}[leftmargin=*,before=\vspace{-0.2em}, after=\vspace{-0.2em}]
\setlength\itemsep{0em}
\item The initial query $\bx$ is located within the sphere $S_\mu$, i.e., $\bx\in S_\mu$.
\item The \textit{well-separation} condition is satisfied, which is given by:
$$\Delta_\mu \ge 
mn+2mR-
\[ \bm{\Xi}^\sT \bx\]_{(\kappa)}-
\frac{1}{\kappa}
\(\frac{R-m-md^{\nicefrac{1}{2}}}{m\beta d^{\nicefrac{1}{2}}}\).$$
\end{enumerate}
}
\end{theorem}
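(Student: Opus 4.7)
The plan is to derive the well-separation condition as a direct corollary of the sparse retrieval error bound in \cref{coro:eps_sparse_dense}. Since $\bx\in S_\mu$ holds by assumption, it suffices to show $\norm{\calT(\bx)-\bxi_\mu}\le R$, which together with $\bx\in S_\mu$ guarantees $\calT(\bx)\in S_\mu$, so that $\calT$ maps $S_\mu$ onto itself.

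First, I would invoke \cref{coro:eps_sparse_dense} to write
\begin{equation*}
\norm{\calT(\bx)-\bxi_\mu}\le m + md^{\nicefrac{1}{2}} + md^{\nicefrac{1}{2}}\beta\kappa\bigl(\Max_{\nu}\Braket{\bxi_\nu,\bx}-[\bm{\Xi}^\sT\bx]_{(\kappa)}\bigr),
\end{equation*}
require the right-hand side to be at most $R$, and isolate the sparsity-sensitive gap. This yields the intermediate sufficient condition
\begin{equation*}
\Max_{\nu}\Braket{\bxi_\nu,\bx}-[\bm{\Xi}^\sT\bx]_{(\kappa)}\le \frac{1}{\kappa}\cdot\frac{R - m - md^{\nicefrac{1}{2}}}{m\beta d^{\nicefrac{1}{2}}},
\end{equation*}
which is an explicit bound in terms of query-dependent inner products. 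At this point the $\kappa$-dependence and temperature dependence on the right-hand side are already in the form that appears in the claim.

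Next, I would upper-bound $\Max_\nu\Braket{\bxi_\nu,\bx}$ using $\bx\in S_\mu$ to bring in the pattern separation $\Delta_\mu$. Cauchy--Schwarz with $n=\norm{\bx}$ and $\Max_\nu\norm{\bxi_\nu}\le m$ gives the crude bound $\Max_\nu\Braket{\bxi_\nu,\bx}\le mn$. To tie this to $\Delta_\mu$, I would then write $\Braket{\bxi_\nu,\bx}=\Braket{\bxi_\nu,\bxi_\mu}+\Braket{\bxi_\nu,\bx-\bxi_\mu}$ and combine the definition of $\Delta_\mu$ with $\norm{\bxi_\mu-\bxi_\nu}\le 2m$ and $\norm{\bx-\bxi_\mu}\le R$. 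This exchange introduces a $2mR$ slack that converts the intermediate condition into the inequality $mn+2mR-\Delta_\mu\le [\bm{\Xi}^\sT\bx]_{(\kappa)}+\tfrac{1}{\kappa}\tfrac{R-m-md^{\nicefrac{1}{2}}}{m\beta d^{\nicefrac{1}{2}}}$, which upon rearrangement is exactly the stated well-separation condition.

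The hard part will be the third step: carefully quantifying the $2mR$ slack that bridges the pattern-pattern separation $\Delta_\mu$ and the query-dependent margin, while simultaneously preserving the role of the $\kappa$-th largest element $[\bm{\Xi}^\sT\bx]_{(\kappa)}$ inherited from the sparsemax support set. Since $\kappa$ couples nontrivially with the error tolerance $R - m - md^{\nicefrac{1}{2}}$ and the scaling $m\beta d^{\nicefrac{1}{2}}$, care is needed to retain the inverse-$\kappa$ dependence of the bound so that the sparsity advantage of \cref{coro:eps_sparse_dense} is not lost in the derivation.
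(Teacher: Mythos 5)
Your proposal is correct and follows essentially the same route as the paper's proof: start from the sparse retrieval error bound of \cref{coro:eps_sparse_dense}, demand that it be at most $R$, and convert $\Max_{\nu}\Braket{\bxi_\nu,\bx}$ into a $\Delta_\mu$-dependent quantity with a $2mR$ Cauchy--Schwarz slack from $\bx\in S_\mu$. The only cosmetic difference is that the paper packages that slack through the query-dependent separation $\Tilde{\Delta}_\mu$ of \cref{def:separation_of_patterns} via $\Tilde{\Delta}_\mu\ge\Delta_\mu-2mR$, whereas you obtain the same inequality by directly decomposing $\Braket{\bxi_\nu,\bx}=\Braket{\bxi_\nu,\bxi_\mu}+\Braket{\bxi_\nu,\bx-\bxi_\mu}$.
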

\begin{corollary}
\label{coro:general_well_separation_cond}
Let $\delta\coloneqq \norm{\calT_{\text{Dense}}-\bxi_\mu}-\norm{\calT-\bxi_\mu}$.
The {well-separation} condition can be expressed as
$
\Delta_\mu \ge 
\frac{1}{\beta}\ln(\frac{2(M-1)m}{R+\delta})+2mR
$, which reduces to that of the dense Hopfield model when $\delta=0$.
\end{corollary}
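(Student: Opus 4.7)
The plan is to reformulate the sparse well-separation condition from \cref{thm:well_separation} as a dense well-separation condition with a slightly \emph{relaxed} sphere radius, and then read off the stated inequality using the known dense retrieval error bound \eqref{eqn:error_bound_exp}. The key hook is the definition of $\delta$: since $\delta\ge 0$ by \cref{coro:eps_sparse_dense}, it measures exactly how much extra slack the sparse dynamics buys us relative to the dense one, and we should expect it to enter only as an additive correction to $R$.

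First I would observe that the hypothesis ``$\calT$ maps $S_\mu$ onto itself'' is equivalent to $\norm{\calT(\bx)-\bxi_\mu}\le R$ for every $\bx\in S_\mu$. By the definition $\delta\coloneqq \norm{\calT_{\text{Dense}}(\bx)-\bxi_\mu}-\norm{\calT(\bx)-\bxi_\mu}$, this is in turn equivalent to $\norm{\calT_{\text{Dense}}(\bx)-\bxi_\mu}\le R+\delta$. In other words, the sparse storage requirement with radius $R$ converts into a dense storage requirement with inflated radius $R+\delta$, which is precisely where $\delta$ will enter the final bound.

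Second, I would invoke the dense retrieval error bound from the footnote containing \eqref{eqn:error_bound_exp}, namely
\begin{align*}
\norm{\calT_{\text{Dense}}(\bx)-\bxi_\mu}\le 2(M-1)m\exp\!\left(-\beta\left(\Braket{\bxi_\mu,\bx}-\Max_{\nu\neq\mu}\Braket{\bxi_\mu,\bxi_\nu}\right)\right).
\end{align*}
Cauchy--Schwarz on $\bx\in S_\mu$ yields $\Braket{\bxi_\mu,\bx}\ge \Braket{\bxi_\mu,\bxi_\mu}-mR$, while \cref{def:separation_of_patterns} gives $\Max_{\nu\neq\mu}\Braket{\bxi_\mu,\bxi_\nu}=\Braket{\bxi_\mu,\bxi_\mu}-\Delta_\mu$. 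Combined with the self-mapping requirement on $\calT(\bx)$, which contributes an extra $mR$ via the same displacement accounting already used in the proof of \cref{thm:well_separation}, the exponent is lower-bounded by $\Delta_\mu-2mR$. Enforcing $2(M-1)m\exp(-\beta(\Delta_\mu-2mR))\le R+\delta$ and taking logarithms then gives $\Delta_\mu\ge \frac{1}{\beta}\ln\!\left(\frac{2(M-1)m}{R+\delta}\right)+2mR$, as claimed. Specializing to $\delta=0$ collapses the right-hand side to the dense Hopfield well-separation condition of \cite{ramsauer2020hopfield}, confirming the asserted reduction.

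The main obstacle I anticipate is the bookkeeping of the $2mR$ coefficient: one factor of $mR$ enters transparently through $\bx\in S_\mu$ via Cauchy--Schwarz, but the second $mR$ requires re-running the self-mapping/closure argument for $\calT(\bx)$ inside $S_\mu$, which is the technical heart of \cref{thm:well_separation}. The economical move is to treat $\delta$ purely as an additive radius slack throughout that argument, rather than to reprove the sparse error bound from scratch; this is what makes the $\delta\to 0$ limit collapse cleanly onto the dense condition and what keeps the corollary a one-line consequence of the theorem.
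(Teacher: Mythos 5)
Your proposal is correct and follows essentially the same route as the paper: treat $\delta$ as pure radius slack, so that the sparse self-mapping requirement $\norm{\calT(\bx)-\bxi_\mu}\le R$ becomes the dense requirement $\norm{\calT_{\text{Dense}}(\bx)-\bxi_\mu}\le R+\delta$, then apply the dense exponential bound and solve for $\Delta_\mu$. One correction on the point you flagged as your anticipated obstacle: the second factor of $mR$ does \emph{not} come from any self-mapping displacement of $\calT(\bx)$. The exponent in the dense bound is the separation evaluated at the query, $\Tilde{\Delta}_\mu=\Min_{\nu\neq\mu}\[\Braket{\bx,\bxi_\mu}-\Braket{\bx,\bxi_\nu}\]$, and the paper passes to $\Tilde{\Delta}_\mu\ge\Delta_\mu-2mR$ by applying Cauchy--Schwarz twice to $\bx\in S_\mu$: once for $\Braket{\bxi_\mu-\bx,\bxi_\mu}\le mR$ (the term you identified) and once for $\Braket{\bxi_\mu-\bx,\bxi_\nu}\le mR$ over the competing patterns $\nu\neq\mu$. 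Both corrections are consequences of $\norm{\bx-\bxi_\mu}\le R$ alone, so no re-run of the closure argument from \cref{thm:well_separation} is needed; with that bookkeeping fixed, your derivation matches the paper's line for line.
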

\begin{proof}[Proof Sketch]
\vspace{-0.5em}
The proofs proceed by connecting $\Delta_\mu$ with $\norm{\calT(\bx)-\bxi_\mu}$.
To do so, we utilize \cref{coro:eps_sparse_dense} to incorporate the $\Delta_\mu$-dependent bound on the retrieval error of both sparse and dense Hopfield models \cite{ramsauer2020hopfield}.
A detailed proof is shown in \cref{sec:pf_well_separation}.
\end{proof}
\vspace{-0.5em}
Together with \cref{coro:convergence_sparse}, the well-separated condition serves as the necessary condition for pattern $\bxi_\mu$ to be well-stored at the stationary points of $\calH$, and can be retrieved with at most $\epsilon=R$ by $\calT$, as per \cref{def:stored_and_retrieved}.
We make the following three observations about the blessings from sparsity.
\begin{enumerate}[leftmargin=*,before=\vspace{-0.2em}, after=\vspace{-0.2em}]
\setlength\itemsep{0.5em}
    \item
     In general, to appreciate the blessings of sparsity, we
     rearrange the well-separation condition as
     \bea
     \Delta_\mu \ge 
    2mR+
    \underbrace{\(mn-
    \[ \bm{\Xi}^\sT \bx\]_{(\kappa)}\)}_{\coloneqq\alpha nm\;\;\text{with }0\le\alpha\le2}-
    \frac{1}{\kappa}
    \(\frac{R-m-md^{\nicefrac{1}{2}}}{m\beta d^{\nicefrac{1}{2}}}\),
     \eea
     and 
     observe the two competing terms, $\alpha nm$ and $
     \nicefrac{(R-m-md^{\nicefrac{1}{2}})}{(\kappa m\beta d^{\nicefrac{1}{2}})}$.
     Sparsity proves advantageous when the latter term surpasses the former, i.e. the sparse well-separation condition is consistently lower than its dense counterpart.
     The condition under which sparsity benefits are more likely to emerge (i.e., when the well-separation condition is more readily satisfied) is thereby:
    \bea
    \half\Min_{\mu,\nu\in[M]}\norm{\bxi_\mu-\bxi_\nu}
    \ge 
    md^{\nicefrac{1}{2}}\(1+\alpha \beta nm\kappa\)+m,\quad\text{with }0\le\alpha\le 2.
    \eea
    Intuitively,  the sparser $\bm{\Xi}^\sT\bx$ is, the easier it is for the above condition to be fulfilled.
    \item 
    \textbf{Large $M$ limit:}
    For large $M$,
    the dense well-separation condition (\cref{coro:general_well_separation_cond}) explodes while the sparse one (\cref{thm:well_separation}) saturates to the first three $M$-independent terms.
    This suggests that the hardness of distinguishing patterns can be tamed by the sparsity, preventing an increase of $\Delta_\mu$ with $M$ as observed in the dense Hopfield model.
    We numerically confirm this in \cref{fig:capacity_robustness}.
    \item 
    \textbf{$\beta \to \infty$ Limit:}
    In the region of low temperature, where $\beta \to \infty$ and hence all patterns can be \textit{error-free} retrieved as per \eqref{eqn:error_bound_exp},
    we have $\Delta_\mu \ge 2mR +\alpha nm $ with $0\le\alpha\le 2$. 
    Here, the second term on the RHS represents the sparsity level of $\bm{\Xi}^\sT \bx$, i.e. a smaller $\alpha$ indicates a higher degree of sparsity in $\bm{\Xi}^\sT \bx$.
    Hence, the higher the sparsity, the easier it is to separate patterns.

\end{enumerate}

For (B), equipped with \cref{thm:well_separation} and \cref{coro:general_well_separation_cond}, we provide a lower bound for the number of patterns being well-stored and can be \textit{at least} $R$-retrieved in the next lemma\footnote{
Following the convention in memory capacity literature \cite{ramsauer2020hopfield,demircigil2017model,krotov2016dense}, we assume that all memory patterns $\{\bxi_\mu\}$ are sampled from a $d$-sphere of radius $m$.}:
\begin{lemma}[Memory Capacity Lower Bound]
\label{thm:memory_capacity}
Let $1-p$ be the probability of successfully storing and retrieving a pattern.
The number of patterns randomly sampled from a sphere of radius $m$ that the sparse Hopfield model can store and retrieve is lower-bounded by
\bea
M \ge \sqrt{p}C^\frac{d-1}{4},
\eea
where $C$ is the solution to $C=\nicefrac{b}{W_0({\exp\{a+\ln{b}\}})}$ with $W_0(\cdot)$ being the principal branch of Lambert $W$ function, $a\coloneqq (\nicefrac{4}{d-1})\(\ln\[\nicefrac{2m(\sqrt{p}-1)}{(R+\delta)}\]+1\)$ and $b\coloneqq \nicefrac{4m^2\beta}{5(d-1)}$.
For sufficiently large $\beta$, the sparse Hopfield model exhibits a larger lower bound on the exponential memory capacity compared to its dense counterpart \cite{ramsauer2020hopfield}:
$
M\ge M_{\text{Dense}}.
$
\end{lemma}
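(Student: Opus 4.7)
The plan is to follow the template established in \cite{ramsauer2020hopfield} for exponential memory capacity, but combined with the \emph{tighter} well-separation condition of \cref{coro:general_well_separation_cond}, so that the improvement $\delta \coloneqq \|\calT_{\text{Dense}}-\bxi_\mu\|-\|\calT-\bxi_\mu\|\ge 0$ propagates into the final bound. Concretely, I would view memory capacity as the largest $M$ for which a random memory set $\{\bxi_\mu\}_{\mu\in[M]}$, sampled uniformly from the sphere of radius $m$ in $\R^d$, simultaneously satisfies the well-separation condition from \cref{coro:general_well_separation_cond} with probability at least $1-p$. The argument is therefore: (i) express ``well-stored and $R$-retrievable'' as an event on the geometry of $\{\bxi_\mu\}$; (ii) bound its failure probability using concentration of inner products of random points on a sphere; (iii) invert the resulting inequality to solve for $M$.

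First, by \cref{thm:well_separation} and \cref{coro:general_well_separation_cond}, a pattern $\bxi_\mu$ is well-stored and $R$-retrievable whenever
\begin{align}
\Delta_\mu \;\geq\; \frac{1}{\beta}\ln\!\left(\frac{2(M-1)m}{R+\delta}\right) + 2mR .
\end{align}
Because the $\bxi_\mu$ are i.i.d.\ uniform on the $d$-sphere of radius $m$, I would use the standard fact (see \cite{brauchart2018random,cai2012phase} and the computations in \cite[App.~A]{ramsauer2020hopfield}) that for independent points on the sphere, $\Pr[\Delta_\mu \le t]$ is controlled via the distribution of the maximum inner product $\max_{\nu\neq\mu}\langle\bxi_\mu,\bxi_\nu\rangle$, which admits a Gaussian-type tail of the form $\exp\!\left(-\tfrac{(d-1)}{2m^2}\cdot\tfrac{t^{2}}{\text{const}}\right)$. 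A union bound over the $M$ patterns and the failure probability $p$ then yields an inequality of the schematic shape
\begin{align}
p \;\geq\; M\cdot \exp\!\left(-\tfrac{d-1}{5m^{2}}\,\Delta_\mu^{2}\right),
\end{align}
which, substituting the threshold value of $\Delta_\mu$ from the well-separation condition (with the $2mR$ term absorbed into constants for sufficiently large $\beta$), becomes an implicit inequality in $M$.

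Second, I would isolate $M$ from this implicit inequality. Writing $C \coloneqq M^{4/(d-1)}/p^{2/(d-1)}$ so that $M=\sqrt{p}\,C^{(d-1)/4}$, the inequality reduces to an equation of the form $C\,\ln C = b\,e^{a}$ (up to constants), where $a = (4/(d-1))(\ln[2m(\sqrt{p}-1)/(R+\delta)]+1)$ absorbs the $R,\delta,p$ dependence and $b = 4m^{2}\beta/(5(d-1))$ absorbs the temperature/geometry dependence. Equations of the type $x\ln x=y$ are solved precisely by the principal branch $W_{0}$ of the Lambert $W$ function, giving $C = b/W_{0}(e^{a+\ln b})$ and hence the stated bound $M\ge\sqrt{p}\,C^{(d-1)/4}$.

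Finally, for the comparison $M\ge M_{\text{Dense}}$, the key observation is that $\delta\ge 0$ by \cref{coro:eps_sparse_dense}, so the argument of the logarithm in $a$ is no larger than the corresponding quantity for the dense model. Since $W_0$ is monotone increasing and $C$ depends on $a$ through $\exp\{a+\ln b\}$ in a way that makes $C$ monotone non-decreasing in $-\delta$, the sparse $C$ is at least as large as the dense $C$, and for sufficiently large $\beta$ the $R,2mR$ contributions become negligible so the inequality becomes strict. The main obstacle I anticipate is not the Lambert $W$ manipulation, which is algebraic, but rather the careful probabilistic step: ensuring that the tail bound for $\Delta_\mu$ on the sphere is applied with the correct constants so that, once combined with the \emph{sparsity-refined} well-separation condition, the $\delta$-dependence survives cleanly into the definition of $a$ and does not get absorbed into sub-leading corrections.
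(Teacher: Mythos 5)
Your skeleton---well-separation condition, a probabilistic estimate for i.i.d.\ patterns on the sphere, inversion via the Lambert $W$ function, and a monotonicity argument for the dense comparison---matches the paper's strategy, but the central probabilistic step is a genuine gap, and not one of constants: the tail bound you propose has the wrong functional form and cannot produce the stated capacity formula. You posit a Gaussian-type tail and a union bound over the $M$ patterns, arriving at $p \ge M e^{-\frac{d-1}{5m^2}\Delta_\mu^2}$. The paper instead controls the \emph{minimal pairwise angle} $\theta_{\min}$ of $M$ random points on the sphere via \cite[Lemma~3.5]{brauchart2018random}, after rewriting $\Delta_{\min}=m^2(1-\cos{\theta_{\min}})$ and using the elementary bound $\cos{\theta_{\min}}\le 1-\theta_{\min}^2/5$. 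That lemma yields a \emph{polynomial} (spherical-cap-measure) estimate carrying an $M^2$ factor, schematically $1-p\ge 1-\tfrac{1}{2}\gamma_{d-1}\,5^{(d-1)/2}M^2m^{-(d-1)}[\,\cdot\,]^{(d-1)/2}$. It is precisely this $M^2\cdot[\,\cdot\,]^{(d-1)/2}$ structure that, after substituting $M=\sqrt{p}\,C^{(d-1)/4}$ (so that $p$ cancels) and raising to the power $2/(d-1)$, collapses to the scalar inequality $\tfrac{5C}{m^2\beta}\bigl(\ln[\,\cdot\,]+1\bigr)\le 1$, i.e.\ an equation of the form $aC+C\ln{C}-b=0$, which is what the helper lemma solves as $C=b/W_0(e^{a+\ln{b}})$. (Note this is $C\ln{C}=b-aC$, not $C\ln{C}=be^{a}$ as you wrote.) Your inequality inverts to $M\le p\,e^{c\Delta_\mu^2}$, which has neither the $\sqrt{p}$ prefactor nor the $C^{(d-1)/4}$ exponent, so the claimed bound is unreachable along your route.

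A smaller but real issue is the comparison $M\ge M_{\text{Dense}}$. You argue by monotonicity in $\delta$ alone, and as written in the wrong direction: since $W_0$ is increasing, $C=b/W_0(e^{a+\ln{b}})$ is \emph{decreasing} in $a$, so a larger $\delta$ (which lowers $a$) \emph{raises} $C$; your claim that $C$ is non-decreasing in $-\delta$ would give the opposite conclusion. More importantly, the paper does not compare $a$ against ``$a$ with $\delta=0$''; it compares $a$ against the structurally different exponent $\Tilde{a}=\tfrac{2}{d-1}\[1+\ln{(2\beta m^2 p)}\]$ inherited from \cite[Theorem~3]{ramsauer2020hopfield}, and it invokes the asymptotic expansion $W_0(z)\simeq\ln{z}-\ln{\ln{z}}$ for large argument---this is where the ``sufficiently large $\beta$'' hypothesis enters---to argue the comparison is dominated by $a$ versus $\Tilde{a}$. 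Both ingredients are missing from your proposal.
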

\begin{proof}[Proof Sketch]
\vspace{-1em}
Our proof is built on \cite{ramsauer2020hopfield}.
The high-level idea is to \blue{utilize} the separation of random patterns sampled from spheres \cite{cai2012phase,brauchart2018random} and the asymptotic expansion of the Lambert $W$ function \cite{corless1996lambert}. Firstly, we link the well-separation condition to cosine similarity distance, creating an inequality for the probability of a pattern being well-stored and retrieved.
Next, we identify and prove conditions for the exponential memory capacity $M=\sqrt{p}C^{\nicefrac{(d-1)}{4}}$ to hold. 
Finally, we analyze the scaling behaviors of $C$ using its asymptotic expansion and show that $M\ge M_{\text{Dense}}$.
A detailed proof is shown in \cref{sec:pf_thm_main}.
\end{proof}
\vspace{-0.8em}
Intuitively, the benefits of sparsity arises from the increased energy landscape separation provided by the sparse Hopfield energy function, which enables the separation of closely correlated patterns, resulting in a tighter well-separation condition for distinguishing such patterns and hence a larger lower bound on the memory capacity. 
Moreover, the sparse Hopfield model also enjoys the properties of fast convergence and exponentially suppressed retrieval error provided by the following corollary.
\begin{corollary}[Fast Convergence and Exponentially Suppressed Retrieval Error] 
\label{coro:exp_suppressed_error}
For any query $\bx$, $\calT$ approximately retrieves a memory \blue{pattern} $\bxi_\mu$ with retrieval error $\epsilon$ exponentially suppressed by $\Delta_\mu$:
$
\norm{\calT(\bx)-\bxi_\mu}
\le 
2 m(M-1)\exp{-\beta\(\Delta_\mu-2m\Max\[\norm{\bx-\bxi_\mu},\norm{\bx-\bx^\star_\mu}\]\)}.
\nonumber
$
\end{corollary}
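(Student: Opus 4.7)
The plan is to chain together the sparse-to-dense retrieval error comparison from \cref{coro:eps_sparse_dense} with the dense exponential bound \eqref{eqn:error_bound_exp}, and then re-express the resulting exponent in terms of the separation $\Delta_\mu$ by a direct Cauchy--Schwarz estimate. This reduces the corollary to essentially a bookkeeping manipulation of already-established results.

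First, I would invoke \cref{coro:eps_sparse_dense} to pass from the sparse to the dense dynamics, $\norm{\calT(\bx)-\bxi_\mu}\le\norm{\calT_{\text{Dense}}(\bx)-\bxi_\mu}$, and then apply the dense exponential bound \eqref{eqn:error_bound_exp}, which gives
\begin{align*}
    \norm{\calT(\bx)-\bxi_\mu}\le 2m(M-1)\exp{-\beta\(\Braket{\bxi_\mu,\bx}-\Max_{\nu\neq\mu}\Braket{\bxi_\mu,\bxi_\nu}\)}.
\end{align*}

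Next, I would rewrite the exponent via the identity $\Braket{\bxi_\mu,\bx}=\Braket{\bxi_\mu,\bxi_\mu}+\Braket{\bxi_\mu,\bx-\bxi_\mu}$, so that by \cref{def:separation_of_patterns} it becomes $\Delta_\mu+\Braket{\bxi_\mu,\bx-\bxi_\mu}$. A single application of Cauchy--Schwarz together with $\norm{\bxi_\mu}\le m$ yields the intermediate lower bound $\Braket{\bxi_\mu,\bx}-\Max_{\nu\neq\mu}\Braket{\bxi_\mu,\bxi_\nu}\ge \Delta_\mu-m\norm{\bx-\bxi_\mu}$. I would then loosen this via the trivial inequality $m\norm{\bx-\bxi_\mu}\le 2m\Max\{\norm{\bx-\bxi_\mu},\norm{\bx-\bx^\star_\mu}\}$; because the minus sign in the exponent turns a lower bound on the exponent into an upper bound on the exponential, this loosening produces the stated form of the corollary.

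There is no substantive obstacle --- every ingredient has already been proven. The only design choice worth flagging is the final loosening step: the sharper intermediate estimate depends only on $\norm{\bx-\bxi_\mu}$, but I would deliberately replace it with the looser expression involving the generalized fixed point $\bx^\star_\mu$ so that the final bound aligns with the naive retrieval-error convention introduced in \cref{def:stored_and_retrieved}, and remains convenient to deploy along iteration trajectories, where successive iterates $\bx_t$ may drift closer to $\bx^\star_\mu$ than to $\bxi_\mu$.
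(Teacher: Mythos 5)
Your proposal is correct and follows essentially the same route as the paper, which simply chains \cref{coro:eps_sparse_dense} (sparse error bounded by dense error) with the dense exponential bound of \cite[Theorem~4]{ramsauer2020hopfield}; you merely make explicit the Cauchy--Schwarz step converting the inner-product exponent of \eqref{eqn:error_bound_exp} into the $\Delta_\mu$-form, which the paper outsources to the cited theorem. Your intermediate estimate $\Delta_\mu - m\norm{\bx-\bxi_\mu}$ is in fact slightly sharper than the stated bound, and your final loosening to $2m\Max\[\norm{\bx-\bxi_\mu},\norm{\bx-\bx^\star_\mu}\]$ is valid and recovers the corollary exactly.
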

\begin{proof}
\vspace{-0.5em}
This results from  \cref{coro:eps_sparse_dense}, \cref{coro:convergence_sparse}, and \cite[Theorem~4]{ramsauer2020hopfield}.
\end{proof}
\vspace{-0.8em}
\cref{coro:exp_suppressed_error} suggests that, with a sufficient $\Delta_\mu$, $\calT$ can approximately retrieve patterns after a single \textit{activation}, allowing the integration of sparse Hopfield models into deep learning architectures similarly to \cite{hoover2023energy,seidl2022improving,furst2022cloob,ramsauer2020hopfield}.

\section{Proof of Concept Experimental Studies}
\label{sec:exp}
We demonstrate the validity of our theoretical results and method by testing them on various experimental settings with both synthetic and real-world datasets.

\subsection{Experimental Validation of Theoretical Results}
\label{sec:exp_theory}
We conduct experiments to verify our theoretical findings, and report the results in \cref{fig:capacity_robustness}.
For the memory capacity (the top row of \cref{fig:capacity_robustness}), we test the proposed sparse model on retrieving half-masked patterns comparing with the Dense (Softmax) and 10th order polynomial Hopfield models \cite{millidge2022universal,krotov2016dense} on MNIST (high sparsity), Cifar10 (low sparsity) and ImageNet (low sparsity) datasets. 
For all Hopfield models, we set $\beta=1$.\footnote{However, as pointed out in \cite{millidge2022universal}, this is in fact \textit{not} fair to compare modern Hopfield with $\beta=1$ with higher order polynomial Hopfield models.
}
A query is regarded as correctly retrieved if its cosine similarity error is below a set threshold.
In addition, for the robustness against noisy queries (the bottom row of \cref{fig:capacity_robustness}), we inject Gaussian noises with varying variances ($\sigma$) into the images.
Plotted are the means and standard deviations of 10 runs. 
The results show that the proposed sparse Hopfield model excels when memory
patterns exhibit a high degree of sparsity and the signal-to-noise ratio in patterns is low, aligning with our theoretical results.
\begin{figure*}[t]
    \centering
    {    \includegraphics[width=0.32\textwidth]{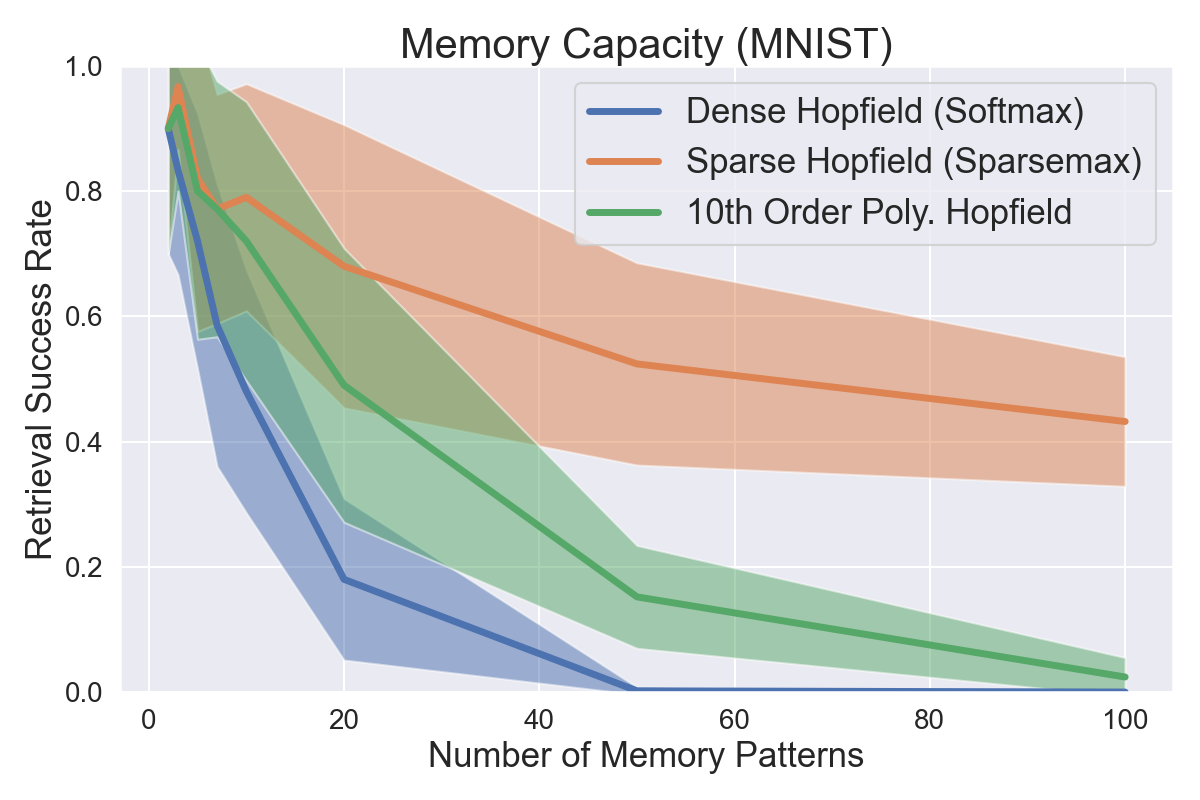}
    }
    \hfill
    {    \includegraphics[width=0.32\textwidth]{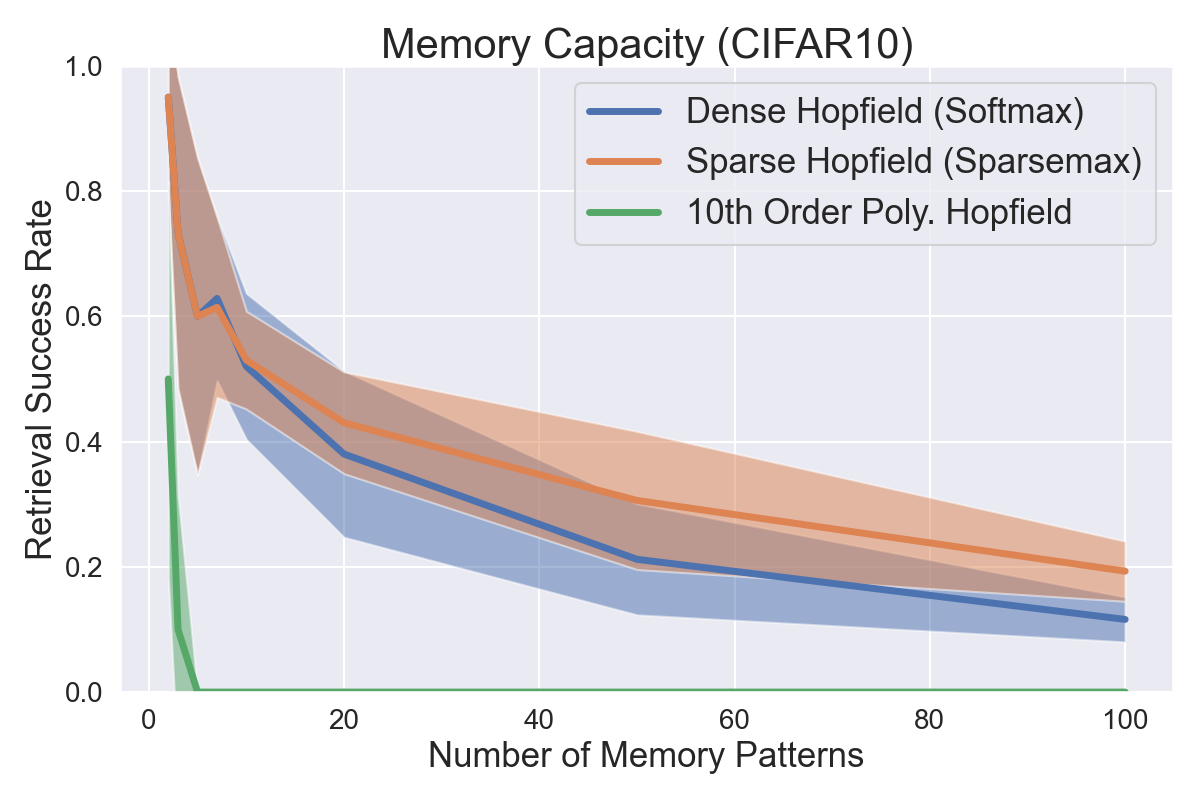}
    }
    \hfill
    {    \includegraphics[width=0.32\textwidth]{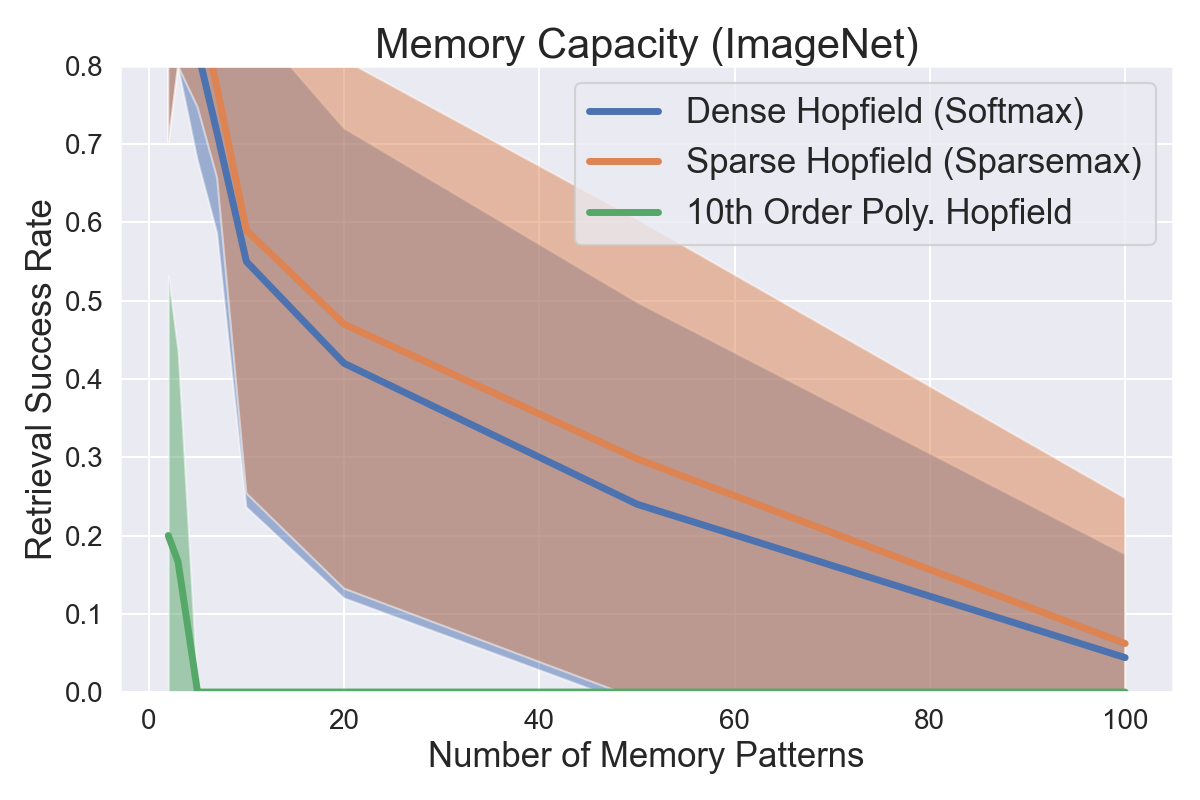}
    }
    \\
    \vspace{-0.3em}
    {    \includegraphics[width=0.32\textwidth]{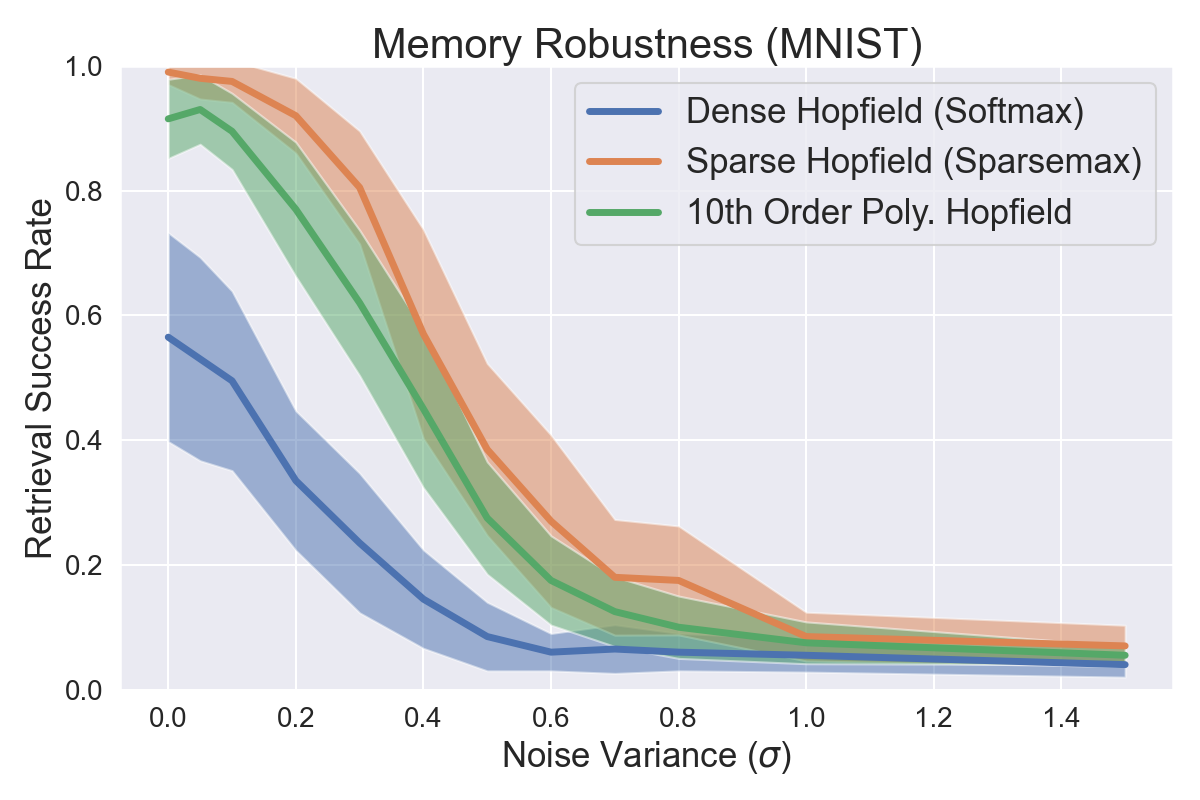}
    }
    \hfill
    {    \includegraphics[width=0.32\textwidth]{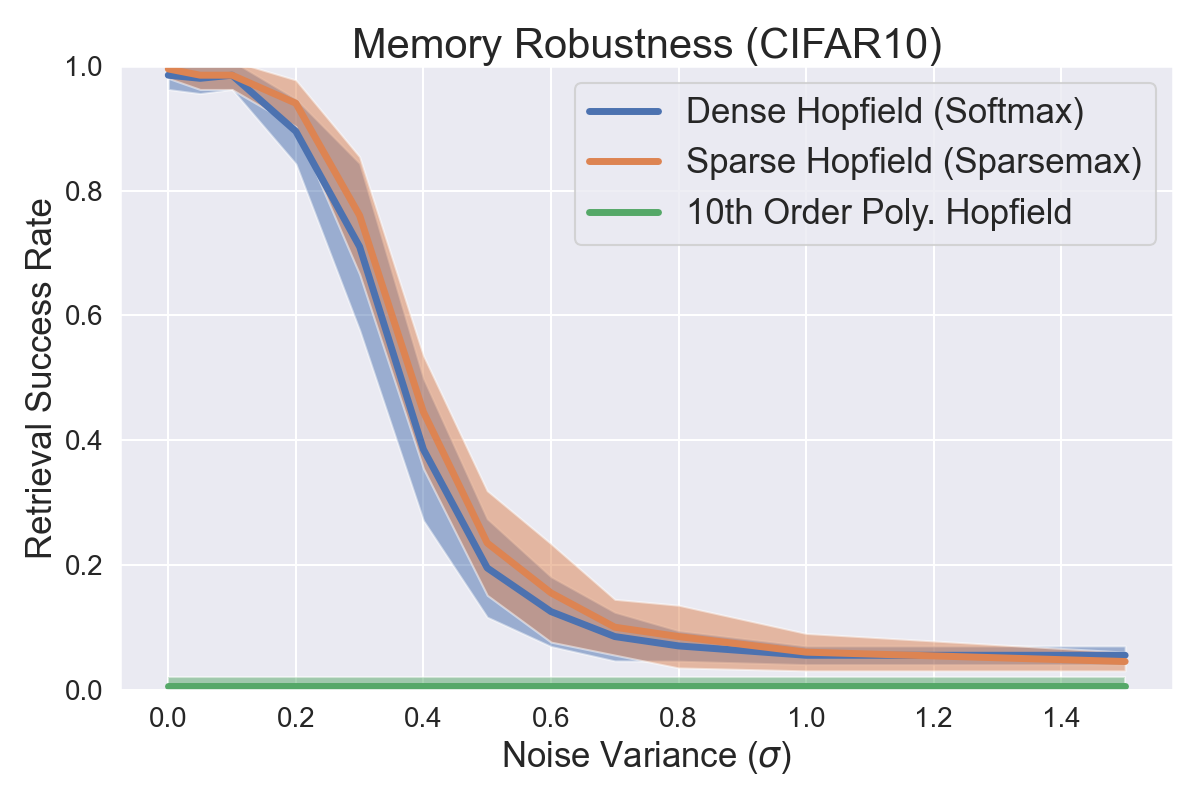}
    }
    \hfill
    {    \includegraphics[width=0.32\textwidth]{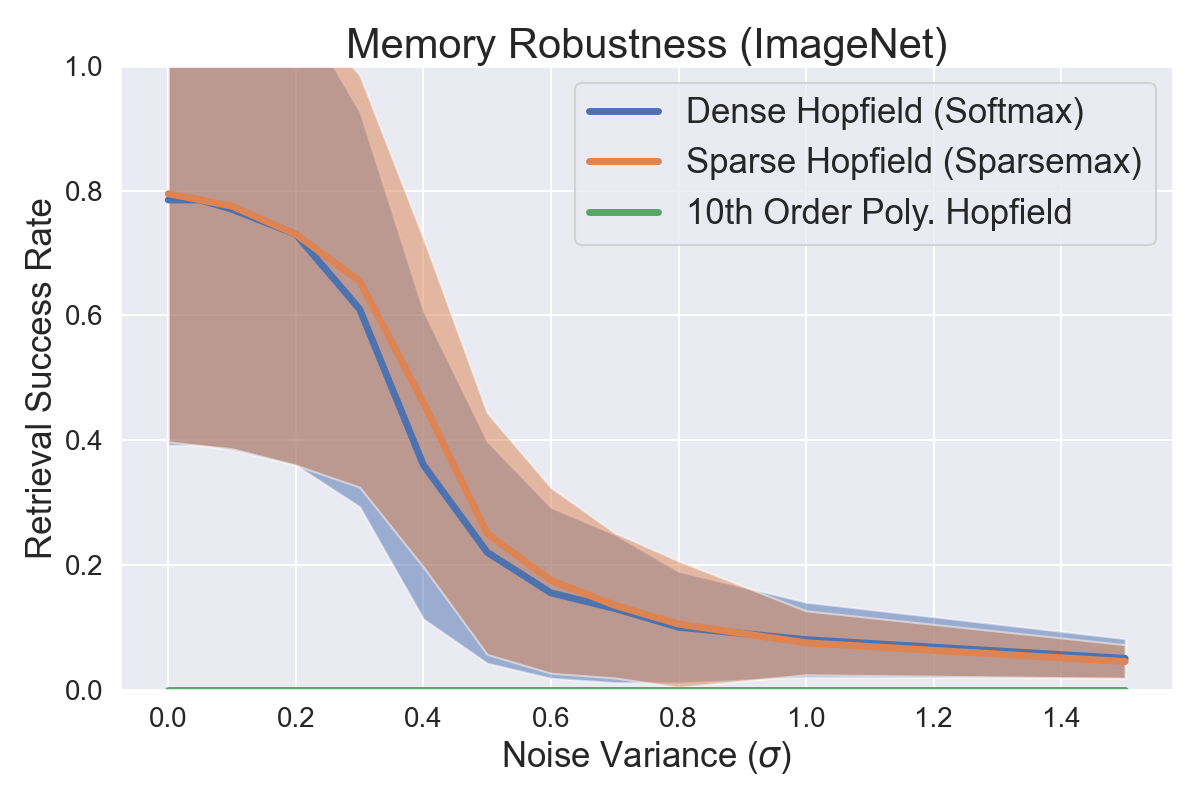}
    }
    \vspace{-2em}
    \caption{\textbf{Top:} Memory Capacity measured by successful half-masked retrieval rates. \textbf{Bottom:} Memory Robustness measured by retrieving patterns with varying levels of Gaussian noise.
    For all Hopfield models, we set $\beta=.01/0.1/0.1$ (for MNIST/CIFAR10/ImageNet) for better visualization.
    A query pattern is deemed correctly retrieved if its squared Euclidean distance is below a set threshold.
    For MNIST/CIFAR10/ImageNet datasets, we set the error thresholds to be 10/20/20 to cope with different sparse levels in data.
    Plotted are the means and standard deviations of 10 runs. 
    The results suggest that the sparse Hopfield model excels when memory patterns exhibit a high degree of sparsity and the signal-to-noise ratio in patterns is low.
    }
    \label{fig:capacity_robustness}
    \vspace{-1.5em}
\end{figure*}

\vspace{-1em}
\subsection{Multiple Instance Learning Tasks}
\citet{ramsauer2020hopfield} point out that the memory-enhanced Hopfield layers present a promising approach for Multiple Instance Learning (MIL) tasks. 
{\color{Blue}
Multiple Instance Learning (MIL) \cite{ilse2018attention,carbonneau2018multiple} is a variation of supervised learning where the training set consists of labeled bags, each containing multiple instances.
The goal of MIL is to predict the bag labels based on the instances they contain, which makes it particularly useful in scenarios where labeling individual instances is difficult or impractical, but bag-level labels are available.
Examples of such scenarios include medical imaging (where a bag could be an image, instances could be patches of the image, and the label could indicate the presence or absence of disease) and document classification (where a bag could be a document, instances could be the words or sentences in the document, and the label could indicate the topic or sentiment of the document).
}
In this subsection, we implement our sparse Hopfield layers and applied them to MIL tasks on one synthetic and four real-world settings.

\vspace{-0.5em}
\subsubsection{Synthetic Experiments} 
\label{sec:syn_exp}
We use a synthetic MIL dataset, the bit pattern dataset, to demonstrate the effectiveness of the sparse Hopfield model. 
Each bag in this synthetic dataset contains a set of binary bit strings. 
The positive bag includes at least one of the positive bit patterns. 
We compare the performance of the \texttt{SparseHopfield} and \texttt{SparseHopfieldPooling}  to their dense counterparts and vanilla attention \cite{vaswani2017attention}. 
We report the mean test accuracy of 10 runs. 
To demonstrate the effectiveness of sparse Hopfield model, we vary two hyperparameters of the bit pattern dataset corresponding to two perspectives: bag sparsity (sparsity in data) and bag size (number of memory patterns, $M$). 
For \textbf{bag sparsity}, we fix the bag size as 200, and inject from 2 to 80 positive patterns in a positive bag, results in 1 to 40 percent of positive patterns in each positive bag.
For \textbf{bag size}, we fix the number of positive pattern in a bag to be 1, and vary bag size from 20 to 300.
We report results of \texttt{SparseHopfieldPooling} in \cref{table:bags}, and implementation details in \cref{appendix:syn}.
A more complete version of \cref{table:bags}, including the results of \texttt{Hopfield} and attention, is in \cref{sec:syn_additional}.
The sparse Hopfield model demonstrates a better performance across all sparsity and all bag sizes.

\begin{table}[h]
\vspace{-1.0em}
    \centering
    \caption{\textbf{Top (Bag Size):} Accuracy comparison on bit pattern dataset for sparse and dense Hopfield model. 
    We report the average accuracy over 10 runs.
    The results suggest that the sparse Hopfield model demonstrates a better performance when facing a bag size increase.
    \textbf{Bottom (Bag Sparsity):}
    Performance comparison on bit pattern dataset for sparse and dense Hopfield model with varying bag sparsity. 
    We report the average accuracy over 10 runs. 
    The results suggest that the sparse Hopfield model demonstrates a better performance across all sparsity.}
    \vspace*{0.05truein} 

\resizebox{ \textwidth}{!}{    
    \begin{tabular}{cccccccc}
    \toprule
    \toprule
    \makecell{\rotatebox{0}{\small Bag Size}} 
         & \multicolumn{1}{c}{20} 
         & \multicolumn{1}{c}{50}
         & \multicolumn{1}{c}{100} 
         & \multicolumn{1}{c}{150} 
         & \multicolumn{1}{c}{200}
         & \multicolumn{1}{c}{300}
        \\
        \midrule
        \makecell{\rotatebox{0}{\small Dense Hopfield Pooling}} 
        & 100.0 $\pm$ 0.00
        & 100.0 $\pm$ 0.00
        & 100.0 $\pm$ 0.00
        & 76.44 $\pm$ 0.23
        & 49.13 $\pm$ 0.01
        & 52.88 $\pm$ 0.01
        \\
        \makecell{\rotatebox{0}{\small Sparse Hopfield Pooling}} 
        & 100.0 $\pm$ 0.00
        & 100.0 $\pm$ 0.00
        & 100.0 $\pm$ 0.00
        & \textbf{99.76 $\pm$ 0.00}
        & \textbf{99.76 $\pm$ 0.00}
        & \textbf{99.76 $\pm$ 0.00}
        \\

    \end{tabular}
    \vspace*{-0.5truein}
}
\resizebox{ \textwidth}{!}{    
    \begin{tabular}{ccccccccc}
    \toprule
    \toprule
    \makecell{\rotatebox{0}{\small Bag Sparsity}} 
         & \multicolumn{1}{c}{1\%} 
         & \multicolumn{1}{c}{5\%} 
         & \multicolumn{1}{c}{10\%} 
         &  \multicolumn{1}{c}{20\%}
         &  \multicolumn{1}{c}{40\%}
        \\
        \midrule
        \makecell{\rotatebox{0}{\small Dense Hopfield Pooling}} 
        & 49.20 $\pm$ 0.00
        & 85.58 $\pm$ 0.10
        & 100.0 $\pm$ 0.00        
        & 100.0 $\pm$ 0.00
        & 99.68 $\pm$ 0.00
        \\
        \makecell{\rotatebox{0}{\small Sparse Hopfield Pooling}} 
        & \textbf{73.40 $\pm$ 0.06}
        & \textbf{99.68 $\pm$ 0.00}
        & 100.0 $\pm$ 0.00
        & 100.0 $\pm$ 0.00
        & \textbf{100.0 $\pm$ 0.00}
        \\ 
        \bottomrule\bottomrule
    \end{tabular}
    \vspace*{-0.5truein}
}
\label{table:bags}
\vspace{-1em}
\end{table}

\textbf{Convergence Analysis.}
In \cref{fig:convg}, we numerically examine the convergence of the sparse and dense Hopfield models, plotting their loss and accuracy for the \textbf{bag size} tasks in above on the bit pattern dataset.
We include multiple bag sizes to assess the effect of increasing memory patterns (i.e. $M$) on the loss curve.
The plotted are the loss and accuracy curves of \texttt{SparseHopfieldPooling}.
\blue{We refer results of \texttt{Hopfield} and more details to 
\cref{sec:syn_abliation_convegence}.}
The results (\cref{fig:convg}) show that, sparse Hopfield model surpasses its dense counterpart in all bag sizes. 
Moreover, for the same bag size, the sparse Hopfield model always reaches the minimum validation loss faster than dense Hopfield model, validating our \cref{eqn:sparse_error_bound}.

\textbf{Sparsity Generalization.}
We also evaluate the models' generalization performance with shifting information sparsity, by training dense and sparse Hopfield models with a specific bag sparsity and testing them on the other.
We report the results in \cref{table:sparsity_generalization} and refer more details to \cref{sec:syn_abliation_convegence}.
\begin{figure*}[h]
\vspace{-1em}
    \centering
    \hfill
    {    \includegraphics[width=1\textwidth]{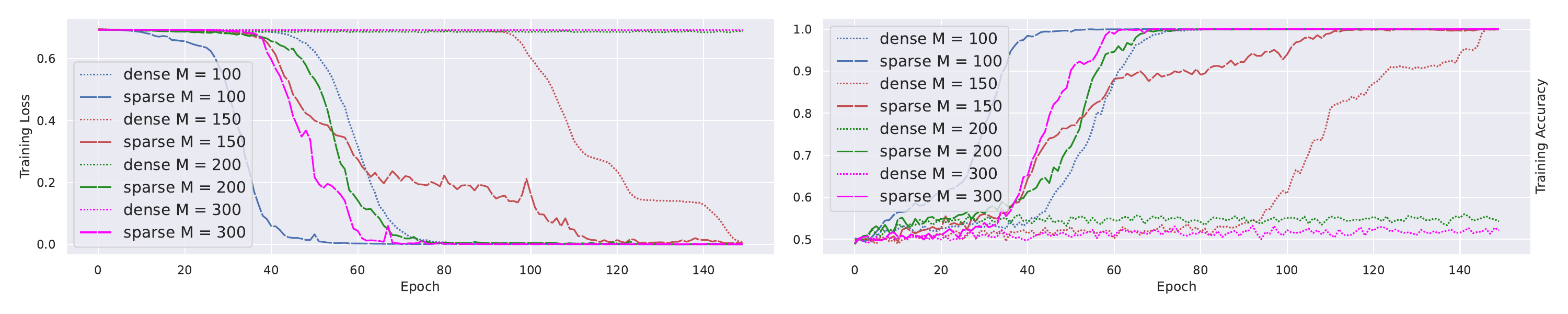}
    }
    \\
    \vspace{-2em}
    \hfill
    {    \includegraphics[width=1\textwidth]{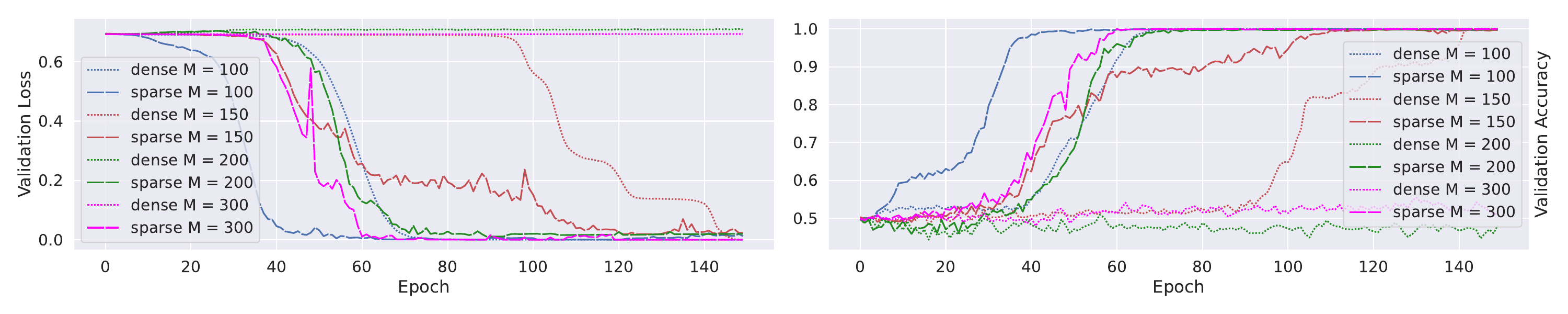}
    }
    \vspace{-2em}
    \caption{\textbf{Top:} The training loss and accuracy curve of dense and sparse Hopfield models with different bag sizes. 
    \textbf{Bottom:} The validation loss and accuracy curve of dense and sparse Hopfield models with different bag sizes. 
    The plotted are the mean of 10 runs.
    The results indicate that the sparse Hopfield model converges faster than the dense model and also yields superior 
    accuracy.
    }
    \label{fig:convg}
    \vspace{-1em}
\end{figure*}

\vspace{-0.5em}
\subsubsection{Real-World MIL Tasks} 
Next, we demonstrate that the proposed method achieves near-optimal performance on four realistic (\textit{non-sparse}) MIL benchmark datasets: Elephant, Fox and Tiger for image annotation \cite{ilse2018attention}, UCSB breast cancer classification \cite{kandemir2014empowering}. 
We use \texttt{Hopfield} and \texttt{SparseHopfield} to construct a similar model architecture proposed in \cite{ramsauer2020hopfield} and \blue{a detailed description of this experiment as well as its training and evaluating process can be found in Appendix \ref{appendix:mil_benchmark}.}
As shown in Table \ref{table:mil_benchmark}, both Sparse and Dense Hopfield achieve near-best results on Tiger, Elephant and UCSB datasets, despite the low sparsity in data. 
The sparse Hopfield model outperforms the dense Hopfield model by a small margin on three out of four datasets.
\begin{wraptable}{r}{0.6\textwidth}
\vspace{-0em}
    \centering
    \caption{Results for MIL benchmark datasets in terms of AUC score. The baselines are Path encoding \cite{kuccukacsci2018bag}, MInD \cite{cheplygina2015dissimilarity}, MILES \cite{chen2006miles}, APR \cite{dietterich1997solving}, Citation-KNN \cite{wang2000solving} and DD \cite{maron1997framework}. Results for baselines are taken from \cite{ramsauer2020hopfield}.
    The results suggest the proposed model achieves near-optimal performance even when the data is not sparse.  }
    \resizebox{0.6\textwidth}{!}{
    \begin{tabular}{l*{4}{c}}
    \toprule
    \toprule
            Method & \multicolumn{1}{c}{Tiger}
             & \multicolumn{1}{c}{Fox}
             & \multicolumn{1}{c}{Elephant}
             & \multicolumn{1}{c}{UCSB}\\ 
            \midrule
            Dense Hopfield
            & $0.878\pm0.028$
            & $0.600\pm0.011$
            & $0.907\pm0.022$
            & $ 0.880 \pm0.013$ \\
            Sparse Hopfield
            & $0.892\pm0.021$
            & $0.611\pm0.010$
            & $0.912\pm0.016$
            & $0.877\pm0.009$\\
            \midrule
            Path encoding 
            & $0.910\pm0.010$
            & $0.712\pm0.014$
            & $0.944\pm0.007$
            & $0.880\pm0.022$\\
            MInD 
            & $0.853\pm0.011$
            & $0.704\pm0.016$
            & $0.936\pm0.009$
            & $0.831\pm0.027$
            \\
            MILES 
            & $0.872\pm0.017$
            & $0.738\pm0.016$
            & $0.927\pm0.007$
            & $0.833\pm0.026$
            \\
            APR 
            & $0.778\pm0.007$
            & $0.541\pm0.009$
            & $0.550\pm0.010$
            & $\overline{\ \ \ }$
            \\
            Citation-kNN 
            & $0.855\pm0.009$
            & $0.635\pm0.015$
            & $0.896\pm0.009$
            & $0.706\pm0.032$
            \\
            DD 
            & $0.841$
            & $0.631$
            & $0.907$
            & $\overline{\ \ \ }$\\ 
            \bottomrule
            \bottomrule
    \end{tabular}
    }
    \label{table:mil_benchmark}
    \vspace{-1.2em}
\end{wraptable}

\vspace{-1.2em}
\section{Conclusion}
\label{sec:conclusion}
We present a sparse Hopfield model with a memory-retrieval dynamics that corresponds to the sparse-structured attention mechanism.
This model is capable of merging into deep learning architectures with data-dependent sparsity.
Theoretically, we introduce a principled construction for modern Hopfield models,
based on the convex conjugate of different entropy regularizers.
It allows us to easily recover the dense modern Hopfield model \cite{ramsauer2020hopfield} using Gibbs entropy.
Moreover, we introduce the sparse Hopfield model using the Gini entropic regularizer, and explore its theoretical advantages, delineating conditions that favor its use.
Empirically, we demonstrate our theoretical results and methodology to be effective on various synthetic and realistic settings.
This work extends the correspondence between artificial and biological neural networks to sparse domain, potentially paving the way for future Hopfield-based methodologies and bio-inspired computing systems.
\paragraph{Post-Acceptance Note [November 28, 2023].}
After the completion of this work, the authors learn of two upcoming works ---  \cite{anonymous2023stanhop} at ICLR'24 and \cite{martins2023sparse} in the Associative Memory \& Hopfield Networks Workshop at NeurIPS'23 ---  both addressing similar topics.
Both of these works explore  theoretical generalizations of this work.
In addition, 
\cite{anonymous2023stanhop} further presents a (sparse) Hopfield-based deep learning model for multivariate time series prediction.
We thank the authors of \cite{martins2023sparse} for enlightening discussions and for sharing their preliminary manuscript.

\section*{Acknowledgments}
JH would like to thank Tim Tsz-Kit Lau, Robin Luo and Andrew Chen for enlightening discussions on related topics, and Jiayi Wang for invaluable support in facilitating experimental deployments.
The authors would like to thank the anonymous reviewers and program chairs for constructive comments.

JH is partially supported by the Walter P. Murphy Fellowship.
HL is partially supported by NIH R01LM1372201, NSF CAREER1841569, DOE DE-AC02-07CH11359, DOE LAB 20-2261 and a NSF TRIPODS1740735.
This research was supported in part through the computational resources and staff contributions provided for the Quest high performance computing facility at Northwestern University which is jointly supported by the Office of the Provost, the Office for Research, and Northwestern University Information Technology.
The content is solely the responsibility of the authors and does not necessarily represent the official
views of the funding agencies.

\bibliographystyle{plainnat}
\bibliography{refs}

\newpage  %

\titlespacing*{\section}{0pt}{*1}{*1}
\titlespacing*{\subsection}{0pt}{*1.25}{*1.25}
\titlespacing*{\subsubsection}{0pt}{*1.5}{*1.5}

\setlength{\abovedisplayskip}{10pt}
\setlength{\abovedisplayshortskip}{10pt}
\setlength{\belowdisplayskip}{10pt}
\setlength{\belowdisplayshortskip}{10pt}

\normalsize
\appendix	
\label{sec:append}
\part*{Appendix}

{
\setlength{\parskip}{-0em}
\startcontents[sections]
\printcontents[sections]{ }{1}{}
}

\clearpage
\section{Nomenclature Table}
We summarize our notations in the following table for easy reference.
   
\begin{table}[h]
    \caption{Mathematical Notations and Symbols}
    \centering
    \resizebox{ \textwidth}{!}{ 
    \begin{tabular}{c l}
    \toprule
        Symbol & Description \\
    \midrule
        $\Braket{\ba,\bb}$ & Inner product for vectors $\ba,\bb\in \R^d$ \\
        $[I]$ & Index set $\{1,\cdots,I\}$, where $I\in\mathbb{N}^+$ \\
        $\norm{\cdot}_2$ & Spectral norm, equivalent to the $l_2$-norm when applied to a vector \\
        \midrule
        $d$ & Dimension of patterns
        \\
        $M$ & Number of stored memory patterns
        \\
        $\beta$ &
        A scaling factor of the energy function that controls the learning dynamics
        \\
        \midrule
        $\bx$ & State/configuration/query pattern in $\R^d$ \\
        $\bxi$ & Memory patterns (keys) in $\R^d$ \\
        $\bm{\Xi}$ & Shorthand for $M$ stored memory (key) patterns $\{\bxi_\mu\}_{\mu\in[M]}$ in $\R^{d\times M}$ \\
        $\bm{\Xi}^\sT \bx$ & $M$-dimensional overlap vector  $\(\Braket{\bxi_1,\bx},\cdots,\Braket{\bxi_\mu,\bx},\cdots,\Braket{\bxi_M,\bx}\)$ in $\R^{M}$ \\
        $\[\bm{\Xi}^\sT \bx\]_\kappa$ & The $\kappa$-th element of  $\bm{\Xi}^\sT \bx$ \\
        \midrule
        $n$ & Norm of $\bx$, denoted as $n\coloneqq\norm{\bx}$ \\
        $m$ & Largest norm of memory patterns, denoted as $m\coloneqq \Max_{\mu\in[M]}\norm{\bxi_\mu}$ 
        \\
        $\kappa$ & The number of non-zero element of $\Sparsemax$,
        defined in \eqref{eqn:sparsemax_exact}
        \\
        \midrule
        $R$ &
        The minimal Euclidean distance across all possible pairs of memory patterns, $R\coloneqq \half \Min_{\mu,\nu\in[M]}\norm{\bxi_\mu-\bxi_\nu}$
        \\
        $S_\mu$ &
        The sphere centered at the memory pattern $\bxi_\mu$ with finite radius $R$
        \\
        $\bx^\star_\mu$ & The fixed point of $\calT$ covered by $S_\mu$, i.e. $\bx_\mu^\star \in S_\mu$
        \\
        $\Delta_\mu$ & The separation of a memory pattern $\bxi_\mu$ from all other memory patterns $\bm{\Xi}$, defined in \eqref{eqn:sep_delta_mu}
        \\
        $\Tilde{\Delta}_\mu$&
        The separation of $\bxi_\mu$ at a given $\bx$ from all memory patterns $\bm{\Xi}$, defined in \eqref{eqn:tilde_sep_delta_mu}
        \\
    \bottomrule
    \end{tabular}
    }
    \label{tab:nomenclature}
\end{table}

\section{Broader Impacts and Future Directions: Brain Science and Foundation Models}
The primary theme of our research is to perceive any data representation (set of patterns) as analogous to the neural responses of a global brain reacting to a vast range of external stimuli (queries). This perspective presents exciting opportunities to study large generative foundational models, such as large language models, within a rigorous scientific framework inspired by contemporary brain science research. 

We believe this work could be impactful in several respects, even though it is  foundational research and not tied to specific applications:
\textbf{(Cognition.)} This research could contribute to our understanding of a memory-enhanced model's predictive capacity when given either in-context input (like historical data) or external stimuli (such as real-time events).
\textbf{(Memory.)} It may also shed light on the inherent limits of artificial neural networks' memorization capabilities and how to augment them with external memory modules for rapid responses to potential external stimuli.
\textbf{(Network.)} This research could enable models to better assess the intricate network of cross-sectional brain activity among different variables and infer its dynamic structural alterations to identify possible systematic properties.

\section{Related Works and Limitations}
\label{sec:related_works}
\paragraph{Sparse Hopfield Models.}
Our work is closely related to and motivated by \cite{foldiak1990forming}, which proposes a local anti-Hebbian learning rule for sparse representations in associative memory networks. This rule enhances storage capacity and retrieval capabilities but has limitations: (i) fixed sparsity based on local similarity of receptive fields, (ii) difficulty in scaling up and integration with modern DNNs \cite{makhzani2015winner}, (iii) lack of a solid theoretical foundation for convergence and stability, and (iv) inherently unsupervised retrieval dynamics, limiting its applicability for supervised learning or other paradigms like reinforcement learning or semi-supervised learning.
On the other hand, another line of related work, not specifically focusing on sparsifying Hopfield models, centers on sparse coding \cite{palm2013neural,olshausen1997sparse}, introducing sparsity to associative memory models through thresholding memory patterns. 
These studies offer insights into the relationship between the sparseness of the stored memory patterns and the robustness of the network but sufferers from the issues related to scalability, sparsity level bias, and noise vulnerability \cite{mairal2010online,rubinstein2010dictionaries,elad2010sparse,olshausen1997sparse}.
In contrast, our approach is theoretically grounded and has data-dependent sparsity leading to better scalability, more meaningful and robust representations of patterns and allows the model to focus on the most relevant information for each specific instance.

\paragraph{Hopfield Models and Connection to Attention.}
Hopfield Models \cite{hopfield1984neurons,hopfield1982neural,krotov2016dense} have seen renewed interest in the machine learning community due to advances in memory storage capacity understanding \cite{krotov2016dense,demircigil2017model}, architectural innovations \cite{hoover2023energy,seidl2022improving,furst2022cloob,ramsauer2020hopfield},
and biological plausibility \cite{kozachkov2022building,krotov2020large}.
Notably, Modern Hopfield Networks \cite{ramsauer2020hopfield}\footnote{Also see the well-written blog post \cite{hopfeildblog2021}.}, a new subclass, highlight the equivalence\footnote{While this equivalence only holds when the retrieval dynamics is applied exactly once, as originally shown in \cite{ramsauer2020hopfield} and later emphasized in \cite{krotov2020large}, it allows us to view modern Hopfield models as generalized attentions with additional functionalities and hence opens new avenues for Hopfield-based  architecture designs.} between their memory retrieval dynamics and attention mechanisms in transformers.
With this hindsight, it becomes clear that
transformers and modern Hopfield models share some high-level similarities, as well as differences.
Both architectures are designed for denoising input, with transformers typically pre-trained on masked-token tasks and the modern Hopfield model aimed at completing incomplete or contaminated patterns. 
However, the modern Hopfield models are recurrent networks with a global energy function that ensures convergence to a fixed-point attractor, while transformers are generally viewed as feed-forward networks without such dynamics.
It is natural to ask whether such equivalence is fundamental.
Although, apart from Hopfield-side investigations \cite{hoover2023energy,krotov2020large,ramsauer2020hopfield}, there have been studies viewing transformers as dynamical systems, including the deep equilibrium models \cite{bai2019deep}, and unfolded optimization \cite{yang2022transformers}, none exhibit similar converge-to-memory dynamics as in Hopfield models (hence missing the connection between dynamical memory retrieval and transformers), nor do they address sparsity.
Building on the established equivalence in \cite{ramsauer2020hopfield}, our work serves as an initial attempt to push such equivalence toward sparse models, both theoretically and empirically.
It lays the groundwork for future Hopfield-based methodologies, architecture designs and biological computers (as in \cite{kozachkov2022building}).

\paragraph{Sparse Attention.}
Attention-based seq2seq models excel in various applications like large language models \cite{chowdhery2022palm,brown2020language}, time series prediction \cite{zhou2022film,zhou2021informer}, and biomedical science \cite{ji2021dnabert}, primarily due to their versatility in framing tasks as source-to-target sequence transformations with potentially varying lengths.
However, the original transformer architecture utilizes a dense, quadratic attention score matrix, which can be computationally demanding (with $\calO(n^2)$ complexity for input sequence length $n$), memory-intensive, and challenging to interpret for long sequences. 
To combat these issues, there is a large amount of literature works leverages various
sparsifying methods for attention and transformers to enhance computational efficiency while preserving the models' expressiveness, see \cite{tay2022efficient} for an overview.
Here, we classify sparse Transformers into two distinct categories based on the different kinds of sparsities.
The first category focuses on structured-sparsity \cite{beltagy2020longformer, qiu2019blockwise, child2019generating}, which involves creating a sparse attention score matrix in a pre-determined manner. 
In these approaches, each token in the sequence attends to a fixed subset of other tokens, rather than the entire sequence. 
The second category obtains sparsity through the sparsity-inducing normalization maps \cite{peters2019sparse,correia2019adaptively,krotov2016dense} that encourage the models to focus on a subset of relevant input elements, thereby fostering sparsity, scalability and interpretability.
Compared to the first category, while these approaches still have $\calO(n^2)$ space complexity, they offer the advantage of producing sparsity patterns that are more adaptive to the data.
Our work is closely related to the second and
utilizes \textit{sparsity-inducing alternatives} to the softmax function in modern Hopfield models.

\subsection{Limitations}

Since our model aligns with sparsemax attention, it also grapples with $\calO(d^2)$ complexity, a characteristic typical of the sparsity-inducing normalization map category of sparse attention.
In addition, we opt not to impose any assumptions on the data (patterns) to maintain the general applicability of our model. 
This decision, however, prevents us from providing a rigorous characterization of how data-dependent sparsity explicitly impacts retrieval error, the well-separation condition, and memory capacity. 
Specifically, a detailed analysis of $\[\bm{\Xi}^\sT\bx\]_{(\kappa)}$ is a problem of order statistics \cite{david2004order} that hinges on the distribution of patterns.
Instead, we offer qualitative discussions in Section \ref{sec:method} to provide insights into the behavior of the sparse model under various conditions, aiding in a better understanding and application of the model.

\section{Modern Hopfield Model and Its Connection to Attention Mechanism}
\label{sec:hopfield}

\label{sec:T_is_attn}
\blue{
  \citet{ramsauer2020hopfield} generalize the  exponential-interaction-based energy function proposed in \cite{demircigil2017model} to continuous patterns and states with a strong link the attention mechanism.  
 In this section, we provide an overview for both of them and then draw the connection of the modern Hopfield model to attention mechanism.
 }

\begin{figure*}[h]
    \centering
    \includegraphics[scale=0.2]{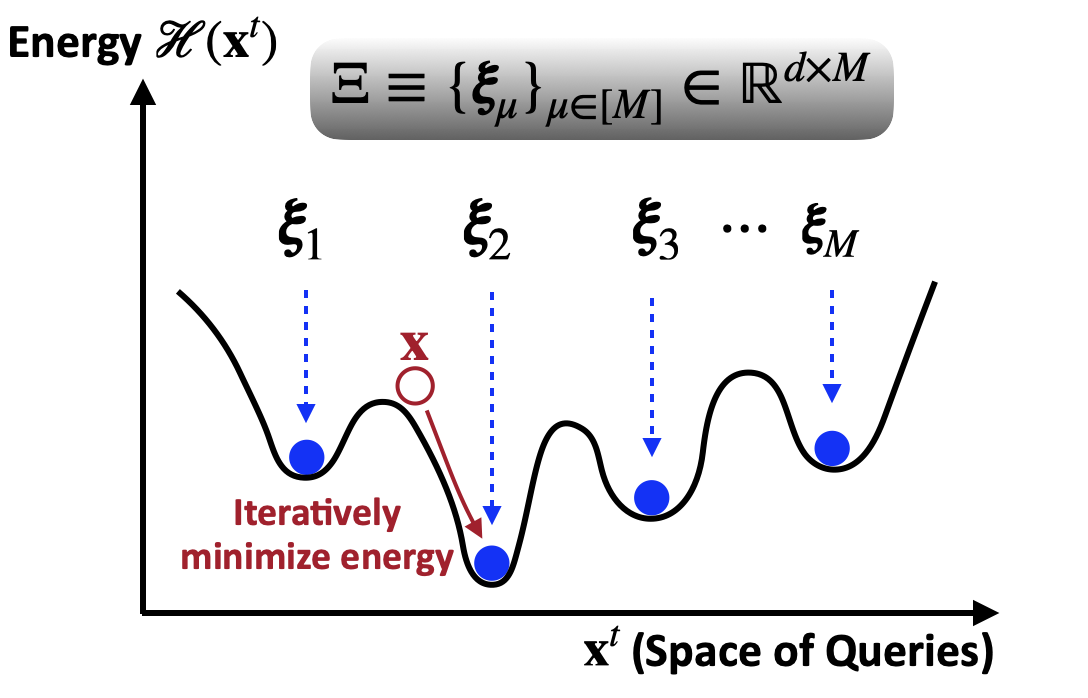}
    \caption{\textbf{Visualizing Hopfield Models.} Let $\bx\in\R^d$ represent the query pattern, and let $\bm{\Xi}\coloneqq\(\bxi_1,\cdots,\bxi_M\)\in \R^{d\times M}$ denote the memory patterns. 
The objective of the Hopfield models is to store the memory patterns $\bm{\Xi}$ and then retrieve a specific memory pattern $\bxi_\mu$ based on a given query $\bx$.
They achieve these by embedding the memories $\bm{\Xi}$ in the energy landscape $\calH(\bx)$ of a physical system (e.g., the Ising model in \cite{hopfield1982neural} and its higher-order generalizations \cite{lee1986machine,peretto1986long,newman1988memory}), where each memory $\bxi_\mu$ corresponds to a local minimum. 
When a query $\bx$ is presented, the model initiates energy-minimizing retrieval dynamics $\calT$ at the query $\bx$, which then navigate the energy landscape to find the nearest local minimum, effectively retrieving the memory most similar to the query.
}
    \label{fig:energylandscape}
\end{figure*}
\subsection{Modern Hopfield Model}
We first introduce the log-sum-exponential ($\lse$) function for any given vector $\bz=(z_1,\cdots,z_M)$ and $\beta>0$:
\bea
\lse\(\beta,\bz\)\coloneqq \frac{1}{\beta}\log\(\sumM \exp{\beta z_\mu}\),
\eea
which is an important representation of the softmax function which can be derived by considering the ``argmax function'' under entropy regularization, see \cite{gao2017properties} and references therein.

\paragraph{Exponential Binary Hopfield Model.}
With $\lse(\cdot)$, the exponential-Hopfield model for binary patterns $\bxi,\bx\in\{\pm 1\}^d$ proposed in \cite{demircigil2017model} can be written as, denoting $\bm{\Xi}\coloneqq\(\bxi^1,\cdots,\bxi^M\)$,
\bea
\calH(\bx)= -\sumM\exp{\Braket{\bxi^\mu,\bm{\sigma}}}=-\exp{\lse\(1,\bm{\Xi}^\sT \bm{\sigma}\)},
\eea
which leads to the super-linear memory capacity of $M \propto 2^{d/2}$.

\paragraph{Modern Hopfield Model.}
For continuous patterns
 $\bx,\{\bxi^\mu\}\in \R^d$, \citet{ramsauer2020hopfield} propose the continuous\footnote{\blue{Note that, there are also many continuous Hopfield models prior than \cite{ramsauer2020hopfield}, including \cite{krotov2016dense,hopfield1984neurons}.}} modern Hopfield model 
\begin{align}
\label{eqn:modern_hopfield}
    \calH(\bx)\coloneqq-\lse\(1,\bm{\Xi}^\sT \bx\)
    +\half \Braket{\bx,\bx}+ \frac{1}{\beta}\log M + \half m^2,
\end{align}
with retrieval dynamics
\bea
\label{eqn:modern_retrieval_dyn}
\bx^{\text{new}}= \calT_{\text{Dense}}(\bx)=\bm{\Xi}\cdot \Softmax\(\beta \bm{\Xi}^\sT \bx\),
\eea
where $\half \Braket{\bx,\bx}$ is a regularizer introduced for ensuring configuration vector $\bx$ being finite, and $m\coloneqq \max_\mu\norm{\bxi^\mu}$ is the largest norm of memory patterns.
Moreover, they show that 
(i) the modern Hopfield \eqref{eqn:modern_hopfield} has an exponential memory capacity in $d$, 
(ii) the retrieval dynamics \eqref{eqn:modern_retrieval_dyn} can consistently retrieve patterns with high accuracy with only one step, and 
(iii) surprisingly, the retrieval dynamics \eqref{eqn:modern_retrieval_dyn} is connected to the attention mechanism in transformer giving rise to a new methodology --- the Hopfield DNN layer.

\subsection{Memory Retrieval Dynamics $\calT_{\text{Dense}}\leftrightarrow$ Self-Attention Mechanism}

Following \cite{ramsauer2020hopfield,hopfeildblog2021}, we say $\bX$ and $\bm{\Xi}$ are in the associative space (embedded space), as they are mapped from the \textit{raw} query ${\bR}$ and $\bY$ memory patterns, respectively, via
\begin{align}
\bX^\sT&={\bR}\bW_Q\coloneqq \bQ,\\
\bm{\Xi}^\sT&=\bY \bW_K \coloneqq \bK,
\end{align}
with some $\bW_Q$ and $\bW_K$.
Therefore, we can express $\calT_{\text{Dense}}$ as
\begin{align}
\(\bQ^{\text{new}}\)^\sT = \bK^\sT \Softmax\(\beta \bK \bQ^\sT\).
\end{align}
Taking transpose to above, we have
\bea
\bQ^{\text{new}}= \Softmax\(\beta  \bQ\bK^\sT\)\bK.
\eea
Projecting $\bK$ to $\bV$ with $\bW_V$, we have
\begin{align}
    \bZ\coloneqq \bQ^{\text{new}} \bW_V&=\Softmax\(\beta  \bQ\bK^\sT\)\bK \bW_V
    \\
    &=\Softmax\(\beta  \bQ\bK^\sT\)\bV,
\end{align}
which leads to the self-attention mechanism.

Plugging back the raw patterns $\bR$ and $\bY$, 
we arrive the foundation of the Hopfield layer,
\bea
\bZ=\Softmax\(\beta  {\bR}\bW_Q \bW_K^\sT\bY^\sT\)\bY\bW_K \bW_V.
\eea

The same construction applies to the sparse retrieval dynamics \eqref{def:sparse_retrieval_dyn}, 
\bea
\bZ'=\Sparsemax\(\beta  {\bR}\bW_Q' \bW_K'^\sT\bY^\sT\)\bY\bW_K' \bW_V'.
\label{eqn:sparse_hopfield_attention}
\eea

resulting in a sparse Hopfield layer that can be seamlessly integrated into deep learning architectures.

{
\color{Blue}
\subsection{Algorithm of Multi-Step $\mathtt{SparseHopfield}$ Layer}
Here, we present an algorithm for implementing the $\mathtt{SparseHopfield}$ layer with multi-step updates (i.e. multiple iterative retrievals).  
The algorithm, summarized in \cref{alg:multistep} below, outlines the process for $U$ update steps.
Similar to \cite{ramsauer2020hopfield}, the $\mathtt{SparseHopfield}$  takes as input the matrices $\bR, \bY$, and the weight matrices $\bW_Q', \bW_K', \bW_V'$.

\begin{algorithm}
\caption{Multi-Step $\mathtt{SparseHopfield}$ Layer}\label{alg:multistep}
\begin{algorithmic}
\Require $U \in \mathbb{R} \geq 1, \bR, \bY.$ 
\State $\bQ \gets \bR \bW_Q'$
\For{$i \rightarrow 1 \text{ to } U $}
\State $\bQ^{\text{new}} \gets \text{Sparsemax}\( \beta \bQ \bW_K'^T \bY^T \) \bY \bW_V' \bW_K' $ \hfill{\textit{Hopfield Update} as \ref{eqn:sparse_hopfield_attention}}
\State $\bQ \gets \bQ^{\text{new}}$
\EndFor \\
\Return $\bQ$
\end{algorithmic}
\end{algorithm}
}

{
\color{Blue}
Here we explain the usage of the above algorithm w.r.t. different settings.
\begin{enumerate}
    \item \textbf{Memory Retrieval.}
The memory retrieval is a learning-free setting.
Thus, we can exclude the use of weight matrices $\bW_K, \bW_Q, \bW_V$ (by setting them to identity matrices).
And let the input (corrupted image) to be our $\bR$, stored patterns as $\bY$ for retrieval.

\item \textbf{$\mathtt{SparseHopfield}$.}
The $\mathtt{SparseHopfield}$ has two inputs, $\bR, \bY$.
Since the $\mathtt{SparseHopfield}$ can be used to replace attention mechanism in models, we make the weight matrices $\bW_K', \bW_Q', \bW_V'$ learnable, and $\bR, \bY, \bY$ be the source of query, key, value, respectively.
Note that the self-attention-liked mechanism can be realized by setting $\bR = \bY$.

\item \textbf{$\mathtt{SparseHopfieldPooling}$.}
The $\mathtt{SparseHopfieldPooling}$ layer has one input, $\bY$, where $\bQ$ is the learnable \textbf{prototype pattern} and fixed during inference, and $\bY$ is the stored patterns we want to perform pooling over.
Note that the $\bQ$ here is independent from the input and can be seen as part of the learnable parameter of the $\mathtt{SparseHopfieldPooling}$ layer.
Here since we replace the query pattern ($\bR \bW_Q'$) with a static \textbf{prototype pattern} $\bQ$, the learnable weight matrices here will only be $\bW_K', \bW_V'$.

\item 
\textbf{$\mathtt{SparseHopfieldLayer}$.}
The $\mathtt{SparseHopfieldLayer}$ layer has one input, $\bR$.
Where $\bR$ is the query pattern.
And we have learnable weight matrices $\bW_K', \bW_V'$ served as our stored patterns and pattern projections, leading our key and value independent to the input.
In other words, following the notation in \cref{alg:multistep}, $\bY$ can be seen as an identity matrix.

\end{enumerate}
}
\clearpage
\section{Proofs of Main Text}
\label{sec:proofs}

\subsection{\cref{thm:main_result}}
\label{proof:main_result}
\begin{proof}[Proof of Theorem~\ref{thm:main_result}]
\begin{align}
    \Max_{\bp\in\Delta^d}\[\Braket{\bp,\bz} -\half\norm{\bp}^2+\half\]
    &=
    \Max_{\bp\in\Delta^d}\[\half\norm{\bz}^2+\Braket{\bp,\bz} -\half\norm{\bp}^2-\half\norm{\bz}^2+\half\]\\
    &=
    \Max_{\bp\in\Delta^d}\[\half\norm{\bz}^2+\half
    -\half\norm{\bp-\bz}^2
    \]\\
    &= 
    \half\norm{\bz}^2+\half
    -\Min_{\bp\in\Delta^d}\[\half\norm{\bp-\bz}^2
    \]\\
    &=
    \half\norm{\bz}^2
    -\half\norm{\bp^\star-\bz}^2+\half=\Psi^\star(\bz),
\end{align}
with $\bp^\star$ given by \eqref{eqn:sparsemax_exact}.
\end{proof}

\subsection{\cref{thm:retrieval_dyn}}
\label{sec:thm_retreival}
\begin{proof}[Proof of \cref{thm:retrieval_dyn}]

To show monotonic decreasing property of the energy \eqref{eqn:H_sparsemax}, we first derive the sparse retrieval dynamics by utilizing the aforementioned \cref{thm:main_result}, \cref{corollary:Dankins}, along with the convex-concave procedure \cite{yuille2003concave,yuille2001concave}.
Then, we show the monotonicity of $\calH$ by constructing an iterative upper bound of $\calH$ which is convex in $\bx_{t+1}$ and thus, lowered iteratively by the CCCP method.

By convex conjugate, $\Psi^*$, the conjugate convex of $\Psi$, is always convex, and hence $-\Psi^*$ is a concave function. 
Therefore, the energy function $\calH$ is by construction the sum of the convex function $\calH_1(\bx)\coloneqq\half\Braket{\bx,\bx}$ and the concave function $\calH_2(\bx)\coloneqq-\Psi^\star\(\bm{\Xi}^\sT\bx\)$.
In addition, $\calH$ is differentiable by definition.

Applying the convex-concave procedure to $\calH$ gives
\begin{align}
\grad_\bx\calH_1\(\bx_{t+1}\)=-\grad_{\bx}\calH_2\(\bx_t\),
\end{align}
which leads to
\bea
\bx_{t+1}=\grad_{\bx}\Psi\(\bm{\Xi}\bx_t\)= \bm{\Xi}\Sparsemax\(\bm{\Xi}^\sT \bx_t\),
\eea
by \cref{thm:main_result} and \cref{corollary:Dankins}.

Following  \cite{yuille2003concave,yuille2001concave}, we show the monotonic decreasing of \eqref{eqn:H_sparsemax} over $t$ with
by considering the problem of energy minimization:
\bea
\label{eqn:energy_minimizaiton}
\Min_\bx \[\calH(\bx)\]&=&\Min_\bx \[\calH_1(\bx)+\calH_2(\bx)\],
\eea
which, in the convex-concave procedure, is solved by iteratively computing
{
\bea
\label{eqn:iterative_argmin}
\bx_{t+1} &\in& \argmin_{\bx} \[\calH_1(\bx)+
\Braket{\bx,\grad_\bx \calH_2\(\bx_t\)}
\],
\eea
for all $t$.
}
{
The intuition behind this is 
to linearize the concave $\calH_2$ around the current iteration's solution $\bx_t$, making $\calH_1(\bx_{t+1})+\Braket{\bx_{t+1},\grad_\bx \calH_2(\bx_t)}$ convex in $\bx_{t+1}$.

By convexity and concavity of $\calH_1$ and $\calH_2$, we have
\begin{align}
\calH_1(\bx)&\ge \calH_1(\by)+\Braket{\(\bx-\by\),\grad_{\bx}\calH_1(\by)},\\
\calH_2(\bx)&\le \calH_2(\by)+\Braket{\(\bx-\by\),\grad_{\bx}\calH_2(\by)},
\end{align}
for all $\bx,\by$.
}
Therefore, it holds
\begin{align}
\calH(\bx)&=\calH_1(\bx)+\calH_2(\bx)\\
&\le \calH_1(\bx)+\calH_2(\by)  +\Braket{(\bx-\by),\grad_\bx \calH_2(\by)}
\coloneqq \calH_U\(\bx,\by\),
\label{eqn:H_U}
\end{align}
where $\calH_U$ is the upper bound of $\calH$.
Then, for each iteration $t$, we have
\bea
\bx_{t+1} \in \argmin_{\bx} \[\calH_U(\bx,\bx_t)\]=\argmin_{\bx}\[\calH_1(\bx)+\Braket{\bx,\grad_\bx \calH_2(\bx_t)}\],
\eea
which lowers the upper bound $\calH_U$ iteratively and hence decreases the value of $\calH$ monotonically, i.e.
\begin{align}
\calH(\bx_{t+1}) &\leq \calH_U(\bx_{t+1},\bx_t) 
\annot{By \eqref{eqn:H_U}}
\\
&\leq \calH_U(\bx_t,\bx_t)
\annot{Set $\bx=\by$ in \eqref{eqn:H_U}}
\\
&=\calH(\bx_t),
\end{align}
for all $t$.
This completes the proof that $\calH$ can be monotonically decreased by $\calT(\bx)$ given by 
\eqref{def:sparse_retrieval_dyn}.
\end{proof}

\subsection{\cref{coro:eps_sparse_dense}}
\label{sec:pf_eps_sparse_dense}
\begin{proof}[Proof of \cref{coro:eps_sparse_dense}]
Let $\calT_{\text{Dense}}$ be the retrieval dynamics of the dense modern Hopfield model \cite{ramsauer2020hopfield},
and $\norm{\calT(\bx)-\bxi_\mu}$ and $\norm{\calT_{\text{Dense}}(\bx)-\bxi_\mu}$ be the retrieval error of sparse and dense Hopfield model, respectively.

We observe
\begin{align}
&\norm{\calT(\bx)-\bxi_\mu}-\norm{\calT_{\text{Dense}}(\bx)-\bxi_\mu}
\nonumber
\\
&=\norm{\sum_{\nu=1}^{\kappa} \bxi_\nu \[\Sparsemax\(\beta \bm{\Xi}^\sT \bx\) \]_\nu-\bxi_\mu}-\norm{\sum_{\nu=1}^{\kappa} \bxi_\nu \[\Softmax\(\beta \bm{\Xi}^\sT \bx\) \]_\nu-\bxi_\mu}\\
&\leq
\norm{\sum^{\kappa}_{\nu=1}\[\Sparsemax(\beta\bm{\Xi}^\sT \bx)\]_\nu\bxi_\nu }-
\norm{\sum^{\kappa}_{\nu=1} \[\Softmax\(\beta \bm{\Xi}^\sT \bx\)\]_\nu \bxi_\nu}\\
&\leq 0,
\end{align}
which gives
\bea
\label{eqn:eps_sparse_dense}
\norm{\calT(\bx)-\bxi_\mu} 
&\leq& 
\norm{\calT_{\text{Dense}}(\bx)-\bxi_\mu}.
\eea

Next, we provide an upper bound of the sparse retrieval error for a query $\bx\in S_\mu$ given memory patterns $\{\bxi_\nu\}_{\nu\in[M]}$.

According to the \eqref{eqn:sparsemax_exact}, it holds
\begin{align}
[ \Sparsemax \(\beta \bm{\Xi}^\sT \bx\) ]_\mu 
&\le 
\[\beta \bm{\Xi}^\sT \bx \]_{\mu}-\[\beta \bm{\Xi}^\sT \bx\]_{(\kappa)}+\frac{1}{\kappa},
\label{eqn:sparsemax_upper_identity}
\end{align}
for all $\mu\in[M]$.
Then, the sparse retrieval error is 
\begin{align}
\norm{\calT\(\bx\)-\bxi^\mu}&=
\norm{\bm{\bm{\Xi}}\Sparsemax\(\beta \bm{\bm{\Xi}}^\sT \bx\)-\bxi^\mu}
=\norm{\sum_{\nu=1}^{\kappa} \bxi_{(\nu)} \[\Sparsemax\(\beta \bm{\bm{\Xi}}^\sT \bx\) \]_{(\nu)}-\bxi^\mu}\nonumber\\
&\le 
m+m\beta \norm{\sum^{\kappa}_{\nu=1} \(\[ \bm{\Xi}^\sT \bx \]_{(\nu)}-\[ \bm{\Xi}^\sT \bx\]_{(\kappa)}+\frac{1}{\beta\kappa}\)\frac{\bxi_{(\nu)}}{m}}
\annot{By \eqref{eqn:sparsemax_upper_identity}}\\
&= 
m+d^{\nicefrac{1}{2}}m\beta \[\sum^{\kappa}_{\nu=1}\( \[ \bm{\Xi}^\sT \bx \]_{(\nu)}-\[ \bm{\Xi}^\sT \bx\]_{(\kappa)}+\frac{1}{\beta\kappa}\)\]
\\
&\le 
m+d^{\nicefrac{1}{2}}m\beta \[\kappa \(\Max_{\nu\in[M]}\Braket{\bxi_\nu,\bx}-\[ \bm{\Xi}^\sT \bx\]_{(\kappa)}\)+\frac{1}{\beta}\].
\end{align}
\end{proof}

\subsection{\cref{coro:convergence_sparse}}
\label{sec:convergence}
In order to prove \cref{coro:convergence_sparse},  we need the following two auxiliary lemmas.
    \begin{lemma}[\cite{gunawardana2005convergence}, Proposition 7]
    \label{Second condition of generalizaed fixed points}
    Let $\bx_t\in\calX_t$ and $\bx_{t+1}\in\calX_{t+1}$.
    Given a real-valued continuous function $\calH_U$ on $\calX_{t} \times \calX_{t+1}$, define the point-to-set map $\calT: \calX_t \to \calX_{t+1}$ by
    \begin{align}
    \calT(\bx_{t}) &\coloneqq \argmin_{\bx'_{t+1} \in \calX_{t+1}} \calH_U(\bx_t,\bx'_{t+1})\\
    &= \{\bx_{t+1} \;\vert\; \calH_U(\bx_t,\bx_{t+1}) \leq \calH_U(\bx_t,\bx'_{t+1}), \forall \bx'_{t+1}\in \calX_{t+1} \}.
    \end{align}
    Then $\calT$ is a closed map at $\bx_t$ if $\calT(\bx)$ is non-empty.
    \end{lemma}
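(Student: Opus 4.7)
The plan is to verify the definition of a closed point-to-set map directly from the argmin characterization of $\calT$. Recall that $\calT$ is said to be closed at $\bx_t$ if, for every sequence $\{\bx_t^{(k)}\}\subset \calX_t$ with $\bx_t^{(k)}\to \bx_t$ and every sequence $\{\bx_{t+1}^{(k)}\}\subset \calX_{t+1}$ with $\bx_{t+1}^{(k)}\to \bx_{t+1}$ and $\bx_{t+1}^{(k)}\in \calT(\bx_t^{(k)})$, we have $\bx_{t+1}\in \calT(\bx_t)$. The non-emptiness hypothesis on $\calT(\bx_t)$ ensures that the argmin is not vacuous, so the characterization of $\calT(\bx_t)$ as the sublevel set $\{\bx_{t+1}\mid \calH_U(\bx_t,\bx_{t+1})\le \calH_U(\bx_t,\bx'_{t+1}),\;\forall \bx'_{t+1}\in\calX_{t+1}\}$ is the correct object to check membership against.

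First, I would fix arbitrary convergent sequences $\bx_t^{(k)}\to \bx_t$ and $\bx_{t+1}^{(k)}\to \bx_{t+1}$ satisfying $\bx_{t+1}^{(k)}\in \calT(\bx_t^{(k)})$. Unpacking the argmin definition, this gives for every $k$ and every test point $\bx'_{t+1}\in \calX_{t+1}$ the inequality
\begin{equation}
\calH_U\(\bx_t^{(k)},\bx_{t+1}^{(k)}\)\;\le\;\calH_U\(\bx_t^{(k)},\bx'_{t+1}\).
\end{equation}
Next, I would pass to the limit $k\to\infty$ on both sides. Since $\calH_U$ is jointly continuous on $\calX_t\times \calX_{t+1}$, the left-hand side converges to $\calH_U(\bx_t,\bx_{t+1})$ (using convergence of both coordinates) and the right-hand side converges to $\calH_U(\bx_t,\bx'_{t+1})$ (using convergence of the first coordinate only, since $\bx'_{t+1}$ is held fixed). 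Preservation of non-strict inequalities under limits then yields
\begin{equation}
\calH_U(\bx_t,\bx_{t+1})\;\le\;\calH_U(\bx_t,\bx'_{t+1}),\qquad \forall\,\bx'_{t+1}\in \calX_{t+1}.
\end{equation}
Finally, this is exactly the defining condition of membership in $\calT(\bx_t)$, so $\bx_{t+1}\in \calT(\bx_t)$, completing the verification.

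The argument is essentially a one-shot application of sequential continuity, so there is no serious obstacle — everything reduces to passing a parametric inequality to the limit. The only subtlety worth flagging is that joint continuity of $\calH_U$ (as opposed to separate continuity in each variable) is what legitimizes taking $k\to\infty$ simultaneously in both arguments on the left-hand side; if one only had separate continuity, an extra equicontinuity or uniformity assumption would be needed. The non-emptiness of $\calT(\bx_t)$ is a hypothesis rather than a consequence and is used only to guarantee that the target set is well defined; in practice it is ensured by compactness of $\calX_{t+1}$ together with continuity of $\calH_U(\bx_t,\cdot)$, which will hold in the application to \cref{coro:convergence_sparse} where $\calH_U$ is the majorizer \eqref{eqn:H_U} in the CCCP iteration.
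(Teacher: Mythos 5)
Your proof is correct and is the standard argument: the paper does not prove this lemma itself but imports it as Proposition~7 of \cite{gunawardana2005convergence}, whose proof is exactly your two-step argument of unpacking the argmin into a family of non-strict inequalities indexed by the fixed test point $\bx'_{t+1}$ and passing each inequality to the limit via joint continuity of $\calH_U$. Your closing remarks are also accurate --- joint continuity is precisely what licenses taking the limit simultaneously in both arguments on the left-hand side, and the non-emptiness hypothesis does no work in the limit argument itself (the limit point $\bx_{t+1}$ already witnesses non-emptiness of $\calT(\bx_t)$ whenever the required sequences exist); it is retained in the statement so that $\calT$ is well defined as a point-to-set map in the application of Zangwill's framework.
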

    \begin{lemma}[\cite{sriperumbudur2009convergence}, Lemma 5]
    \label{KKT for stationary points}
    Recall a fixed point of $\calT$ w.r.t. $\calH$ is a point for which $\bx = \calT(\bx)$, and a generalized fixed point is a point for which $\bx\in\calT(\bx)$.
    Suppose $\bx^\star$ is a generalized fixed point of $\calT$, then, $\bx^\star$ is a stationary point of the minimization problem \eqref{eqn:energy_minimizaiton}.
    \end{lemma}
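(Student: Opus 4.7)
The plan is to exploit the CCCP structure of $\calT$: from \eqref{eqn:iterative_argmin}--\eqref{eqn:H_U}, $\calT(\by)$ is precisely the argmin (over the first slot) of the surrogate $\calH_U(\bx, \by) = \calH_1(\bx) + \calH_2(\by) + \Braket{\bx - \by, \grad_\bx \calH_2(\by)}$, which is convex in $\bx$ since $\calH_1$ is convex and the remaining $\bx$-dependence is linear. A generalized fixed point $\bx^\star \in \calT(\bx^\star)$ therefore minimizes $\bx \mapsto \calH_U(\bx, \bx^\star)$ over $\R^d$, so the entire task reduces to reading off what this internal optimality says about $\calH$ itself.

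First I would invoke the first-order optimality condition (the unconstrained form of KKT) at the minimizer: $\grad_{\bx_1} \calH_U(\bx_1, \bx^\star)\big|_{\bx_1 = \bx^\star} = 0$. Next I would compute this gradient directly. Since $\calH_U(\bx_1, \by)$ depends on $\bx_1$ only through $\calH_1(\bx_1)$ and the linear piece $\Braket{\bx_1, \grad_\bx \calH_2(\by)}$, we have $\grad_{\bx_1} \calH_U(\bx_1, \by) = \grad \calH_1(\bx_1) + \grad \calH_2(\by)$. Evaluating at $(\bx^\star, \bx^\star)$ yields $\grad \calH_1(\bx^\star) + \grad \calH_2(\bx^\star) = \grad \calH(\bx^\star) = 0$, which is exactly the statement that $\bx^\star$ is a stationary point of the problem \eqref{eqn:energy_minimizaiton}. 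The whole argument is essentially a one-line consequence of the fact that the linearization used to define $\calH_U$ is a first-order Taylor expansion of $\calH_2$, so at the diagonal $\by = \bx^\star$ the surrogate gradient and the true gradient coincide.

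The main hurdle, modest as it is, will be justifying the smoothness hypotheses needed to apply this first-order argument in the sparse Hopfield setting. Here $\calH_1(\bx) = \half\Braket{\bx, \bx}$ is globally smooth, and $\calH_2(\bx) = -\Psi^\star(\beta \bm{\Xi}^\sT \bx)$ is differentiable because $\Psi^\star$ is (with gradient equal to $\Sparsemax$ by Danskin's theorem, \cref{corollary:Dankins}); so the gradient computation above is valid everywhere. A secondary subtlety is that $\calT$ is set-valued --- since $\calH_U(\cdot, \by)$ need not have a unique minimizer --- so the relation $\bx^\star \in \calT(\bx^\star)$ must be read as membership rather than equality. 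The argument above only uses that $\bx^\star$ is some element of the minimizing set, so this causes no difficulty, and combined with the closedness of $\calT$ from \cref{Second condition of generalizaed fixed points}, the stated conclusion follows.
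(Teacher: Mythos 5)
Your proof is correct and matches the standard argument: the paper itself does not prove this lemma but imports it verbatim from \cite{sriperumbudur2009convergence}, and your derivation --- that the surrogate $\calH_U(\cdot,\bx^\star)$ is a first-order-tight convex majorant, so unconstrained optimality of $\bx^\star$ for the surrogate gives $\grad\calH_1(\bx^\star)+\grad\calH_2(\bx^\star)=\grad\calH(\bx^\star)=0$ --- is exactly the argument in that reference, specialized to the unconstrained, differentiable setting relevant here. The only superfluous element is the final appeal to the closedness of $\calT$ (\cref{Second condition of generalizaed fixed points}), which is needed for the Zangwill convergence step in \cref{coro:convergence_sparse} but plays no role in showing that fixed points are stationary points.
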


\begin{proof}[Proof of \cref{coro:convergence_sparse}]
    From Zangwill global convergence theory for iterative algorithms \cite{zangwill1969nonlinear}, all limit points of $\{\bx_t\}^\infty_{t=0}$ are generalized fixed points\footnote{Recall that, a generalized fixed point of $\calT$ is defined as $\bx^\star \coloneqq \{\bx\;|\; \bx\in\calT(\bx)\}$.}, if the energy function $\calH$ and the retrieval dynamics $\calT$ satisfy the following three conditions.
    \begin{enumerate}
        \item [(i)] 
        For any sequence $\{\mathbf{x}_t\}_{t=0}^{\infty}$ with starting point $\bx_0\in S_\mu$, 
        all points \blue{in this sequence} are in the same compact set $S_\mu$.
        \item [(ii)] $\calH$ is monotonically decreased by $\calT(\bx)$, i.e. $\calH(\bx_{t+1}) \leq \calH(\bx_t), \forall \bx_{t+1}=\calT(\bx_t)$.
        \item [(iii)] For all $t$, if $\calH(\bx_{t+1})<\calH(\bx_t)$, $\calT$ is closed at $\bx_t$.
    \end{enumerate}

    From \cref{def:stored_and_retrieved}, 
    since $S_\mu$ with finite radius $R$ is bounded and closed, every $S_\mu$ is a compact set.
    Namely, for any sequence $\{\mathbf{x}_t\}_{t=0}^{\infty}$, all points are embedded in the compact set $S_\mu$. 
    Therefore, condition (i) is automatically satisfied. Then condition (ii), the monotonic descent property of $\{\bx_t\}^\infty_{t=0}$, has been analyzed in the original paper of CCCP \cite{yuille2003concave}. 
    By our definition on $\calH_1$ and $\calH_2$, we have $\calH_U\(\bx,\by\) \coloneqq \calH_1(\bx)+\calH_2(\by)  +\Braket{(\bx-\by),\grad_\bx \calH_2(\by)}$ is continuous in $\bx$ and $\by$. 
    Consequently, by Lemma \ref{Second condition of generalizaed fixed points}, condition (iii) holds due to the non-empty assumption on the point-to-set map $\calT$. 
    Thus, by Zangwill global convergence theory, all the limit points of $\{\bx_t\}^\infty_{t=0}$ are fixed points \Blue{of $\calT$}. 
    Subsequently, by the results of Lemma \ref{KKT for stationary points}, these fixed points are also the stationary points of the minimization problem \eqref{eqn:energy_minimizaiton}. Therefore, the energy function is ensured to converge to local optimum.
\end{proof}

\clearpage
\subsection{\cref{thm:well_separation} and \cref{coro:general_well_separation_cond}}
\label{sec:pf_well_separation}
\begin{proof}[Proof of \cref{thm:well_separation}]
Recall $n\coloneqq \norm{\bx}$.
By \cref{def:separation_of_patterns}, we have
\bea
\Max_{\mu\in [M]}\Braket{\bxi_\mu,\bx}=\Braket{\bxi_\nu,\bx}-\Tilde{\Delta}_\nu,
\eea
thereby obtaining
\bea
\label{eqn:bound_sparse_general}
\norm{\calT(\bx)-\bxi_\mu} &\le&
m+d^{\nicefrac{1}{2}}m\beta \[\kappa \(\Max_{\nu\in[M]}\Braket{\bxi_\nu,\bx}-\[ \bm{\Xi}^\sT \bx\]_{(\kappa)}\)+\frac{1}{\beta}\]\\
&=& m+d^{\nicefrac{1}{2}}m\beta \[\kappa \(\Braket{\bxi_\mu,\bx}-\Tilde{\Delta}_\mu-\[ \bm{\Xi}^\sT \bx\]_{(\kappa)}\)+\frac{1}{\beta}\]
\eea
Since $n\coloneqq \norm{\bx}$ and $m \coloneqq \max_\mu \norm{\bxi^\mu}$, we have
\bea
m+d^{\nicefrac{1}{2}}m\beta \[\kappa \(\Braket{\bxi_\mu,\bx}-\Tilde{\Delta}_\mu-\[ \bm{\Xi}^\sT \bx\]_{(\kappa)}\)+\frac{1}{\beta}\] \\
\leq
m+d^{\nicefrac{1}{2}}m\beta \[\kappa \(mn-\Tilde{\Delta}_\mu-\[ \bm{\Xi}^\sT \bx\]_{(\kappa)}\)+\frac{1}{\beta}\]
\eea
Then, by the Cauchy-Schwartz inequality
\bea
\abs{\Braket{\bxi_\mu,\bxi_\mu}-\Braket{\bx,\bxi_\mu}}
\leq
\norm{\bxi_\mu-\bx} \cdot \norm{\bxi_\mu}
\leq
\norm{\bxi_\mu-\bx}m,\quad\forall \mu\in [M],
\eea
we observe that $\tilde{\Delta}_\mu$ can be expressed in terms of $\Delta_\mu$: 
\begin{align}
\Tilde{\Delta}_\mu 
&=
\Min_{\nu,\nu\neq \mu}
\[
\Braket{\bx,\bxi_\mu}-\Braket{\bx,\bxi_\nu}
+
\(\Braket{\bxi_\mu,\bxi_\mu}-\Braket{\bxi_\mu,\bxi_\nu}\)
-
\(\Braket{\bxi_\mu,\bxi_\mu}-\Braket{\bxi_\mu,\bxi_\nu}\)
\]
\\
&
\geq \Min_{\nu,\nu\neq \mu}
\[
\Braket{\bxi_\mu,\bxi_\mu}-\Braket{\bxi_\mu,\bxi_\nu}
+
\(\Braket{\bxi_\mu,\bxi_\nu}-
\Braket{\bx,\bxi_\nu}\)
-
\(\Braket{\bxi_\mu,\bxi_\mu}-
\Braket{\bx,\bxi_\mu}\)
\]
\annot{By Cauchy-Schwarz}
\\
&=
\Delta_\mu - 2 \norm{\bxi_\mu-\bx}m=\Delta_\mu-2mR,
\annot{By $\bx\in  S_\mu$}
\end{align}
where $R$ is radius of the sphere $S_{\mu}$.
Inserting the bound on $\Tilde{\Delta}_\mu$
, we obtain
\bea
\norm{\calT(\bx)-\bxi_\mu}
\leq
m+d^{\nicefrac{1}{2}}m\beta \[\kappa \(mn-\Delta_\mu+2mR-\[ \bm{\Xi}^\sT \bx\]_{(\kappa)}\)+\frac{1}{\beta}\].
\eea
For $\calT$ to be a mapping from $S_\mu$ to $S_\mu$, we obtain the inequality:
\bea
m+d^{\nicefrac{1}{2}}m\beta \[\kappa \(mn-\Delta_\mu+2mR-\[ \bm{\Xi}^\sT \bx\]_{(\kappa)}\)+\frac{1}{\beta}\]
\leq
R,
\eea
which gives
\bea
\Delta_\mu
\geq
mn+2mR-
\[ \bm{\Xi}^\sT \bx\]_{(\kappa)}-
\frac{1}{\kappa}
\(\frac{R-m-md^{\nicefrac{1}{2}}}{m\beta d^{\nicefrac{1}{2}}}\).
\eea
Therefore, as long as $\Delta_\mu$ satisfies this inequality, $\calT$ is a mapping from $S_\mu$ onto itself.
\end{proof}

\begin{proof}[Proof of \cref{coro:general_well_separation_cond}]
Let $\calT_{\text{Dense}}$ be the retrieval dynamics of the dense modern Hopfield model \cite{ramsauer2020hopfield},
and $\epsilon_{\Sparsemax}\coloneqq \norm{\calT(\bx)-\bxi_\mu}$ and $\epsilon_{\text{Dense}}\coloneqq\norm{\calT_{\text{Dense}}(\bx)-\bxi_\mu}$ be the retrieval error of sparse and dense Hopfield model, respectively.

First, let's recall \cref{coro:eps_sparse_dense}, which states that 
\bea
\label{eqn:eps_sparse_dense}
\epsilon_{\Sparsemax}=\norm{\calT(\bx)-\bxi_\mu} 
&\leq& \epsilon_{\text{Dense}}=
\norm{\calT_{\text{Dense}}(\bx)-\bxi_\mu}.
\eea

Next we want to find the lower bound of separation $\Delta_\mu$ such that $\calT$ is a mapping from $S_\mu$ onto $ S_\mu$.

To link $\Delta_\mu$ to $\calT$, we first bound $\epsilon_{\text{Dense}}$ via \cite[Lemma~A.4]{ramsauer2020hopfield}:
\begin{align}
\epsilon_{\text{Dense}}&=\norm{\calT_{\text{Dense}}(\bx)-\bxi_\mu}\\
&=
\norm{\bxi_\mu-\sum^M_{\nu=1} [\Softmax(\beta \Xi^\sT \bx)]_\nu \bxi_\nu}\\
&=
\norm{\(1-\[\Softmax(\beta \Xi^\sT \bx)\]_\mu\)\bxi_\mu-\sum^M_{\nu=1, \nu \neq \mu}\[\Softmax(\beta \Xi^\sT \bx)\]_\nu\bxi_\nu}\\
&\leq
\tilde{\epsilon} \norm{\bxi_\mu}+\frac{\tilde{\epsilon}}{M-1}\sum^M_{\nu=1, \nu \neq \mu} \norm{\bxi_\nu}\\
&\le 
\tilde{\epsilon} \( m+\frac{1}{M-1}\sum^M_{\nu=1, \nu \neq \mu} m\)
\\
\label{eqn:bound_eps}
&\leq
2\tilde{\epsilon} m,
\end{align}
where $\tilde{\epsilon}\coloneqq (M-1)\exp{-\beta \tilde{\Delta}_\mu}
=(M-1)\exp{-\beta \(\Braket{\bxi_\mu,\bx}-\Max_{\nu\in[M],\nu\neq \mu }\Braket{\bxi_\mu,\bxi_\nu}\)}
$ and the inequality 
\bea
\[\Softmax(\beta \Xi^\sT \bx)\]_\nu
=\frac{\exp{\beta\(\Braket{\bx,\bxi_\nu}-\Braket{\bx,\bxi_\mu}\)}}{1+\sum_{\nu'\neq \mu} \exp{\beta\(\Braket{\bx,\bxi_{\nu'}}-\Braket{\bx,\bxi_\mu}\)}}
\le 
\exp{-\beta \Tilde{\Delta}_\mu},
\eea is used in the fourth line.

Then, by the Cauchy-Schwartz inequality
\bea
\abs{\Braket{\bxi_\mu,\bxi_\mu}-\Braket{\bx,\bxi_\mu}}
\leq
\norm{\bxi_\mu-\bx} \cdot \norm{\bxi_\mu}
\leq
\norm{\bxi_\mu-\bx}m,\quad\forall \mu\in [M],
\eea
we observe that $\tilde{\Delta}_\mu$ can be expressed in terms of $\Delta_\mu$: 
\begin{align}
\Tilde{\Delta}_\mu 
&=
\Min_{\nu,\nu\neq \mu}
\[
\Braket{\bx,\bxi_\mu}-\Braket{\bx,\bxi_\nu}
+
\(\Braket{\bxi_\mu,\bxi_\mu}-\Braket{\bxi_\mu,\bxi_\nu}\)
-
\(\Braket{\bxi_\mu,\bxi_\mu}-\Braket{\bxi_\mu,\bxi_\nu}\)
\]
\\
&
\geq \Min_{\nu,\nu\neq \mu}
\[
\Braket{\bxi_\mu,\bxi_\mu}-\Braket{\bxi_\mu,\bxi_\nu}
+
\(\Braket{\bxi_\mu,\bxi_\nu}-
\Braket{\bx,\bxi_\nu}\)
-
\(\Braket{\bxi_\mu,\bxi_\mu}-
\Braket{\bx,\bxi_\mu}\)
\]
\annot{By Cauchy-Schwarz}
\\
&=
\Delta_\mu - 2 \norm{\bxi_\mu-\bx}m=\Delta_\mu-2mR,
\annot{By $\bx\in  S_\mu$}
\end{align}
where $R$ is radius of the sphere $S_{\mu}$.

Hence, combining the bound from \eqref{eqn:bound_eps} with \eqref{eqn:eps_sparse_dense} results in
\bea
\norm{\calT(\bx)-\bxi_\mu}
&\leq&
\norm{\calT_{\text{Dense}}(\bx)-\bxi_\mu} 
\leq 
2 \Tilde{\epsilon} m\\
&=&
2(M-1)\exp{-\beta \Tilde{\Delta}_\mu}m\\
&\leq&
2(M-1) \exp{-\beta\(\Delta_\mu-2mR\)}m.
\eea
Therefore, given $\delta\coloneqq \norm{\calT_{\text{Dense}}(\bx)-\bxi_\mu}-\norm{\calT(\bx)-\bxi_\mu}\le 0 $, we have 
\bea
\norm{\calT(\bx)-\bxi_\mu} \le 2(M-1) \exp{-\beta\(\Delta_\mu-2mR +\delta\)}m - \delta
\le \norm{\calT_{\text{Dense}}(\bx)-\bxi_\mu} .
\eea

For $\calT$ to be a mapping from $S_\mu$ onto $S_\mu$, it is sufficient to have
\bea
2(M-1) \exp{-\beta(\Delta_\mu-2mR)}m  - \delta
\leq
R,
\eea
which leads to
\bea
\Delta_\mu
\geq
\frac{1}{\beta}\ln(\frac{2(M-1)m}{R+\delta})+2mR.
\eea
\end{proof}

\subsection{\cref{thm:memory_capacity}}
\label{sec:pf_thm_main}
We begin with a helper lemma.
\begin{lemma}[\cite{ramsauer2020hopfield}]
\label{lemma:W}
Given real numbers $a, b \in \mathbb{R}$. 
If the equation
\begin{equation}
ac + c\ln{c} - b = 0,
\end{equation}
holds, then the solution is
\begin{equation}
c = \frac{b}{W_0(\exp(a+\ln{b}))}.
\end{equation}
\end{lemma}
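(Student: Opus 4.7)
The plan is to reduce the equation $ac + c\ln c - b = 0$ to the defining functional equation $w e^w = x$ of the Lambert $W$ function, and then invert. Concretely, I would first factor the given equation as $c(a + \ln c) = b$, which suggests introducing the auxiliary variable $u \coloneqq a + \ln c$. This substitution has two virtues: it isolates a logarithmic combination, and it expresses $c$ cleanly as $c = e^{u - a}$.

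Substituting $c = e^{u-a}$ into $cu = b$ yields $u \, e^{u - a} = b$, i.e.\
\begin{equation*}
u \, e^u \;=\; b\, e^a \;=\; e^{a + \ln b},
\end{equation*}
where in the last step I assume $b > 0$ so that $\ln b$ is defined (the hypotheses of Lemma~\ref{thm:memory_capacity} via the values $a, b$ used there guarantee positivity). By the definition of the principal branch of the Lambert $W$ function, the equation $u\, e^u = e^{a + \ln b}$ is solved by $u = W_0\!\left(e^{a + \ln b}\right)$.

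It then remains to recover $c$ from $u$. Since $c = e^{u-a}$, I would use the standard identity $e^{W_0(x)} = x / W_0(x)$ (which follows immediately from $W_0(x)\, e^{W_0(x)} = x$) with $x = e^{a+\ln b}$. This gives
\begin{equation*}
c \;=\; e^{u - a} \;=\; \frac{e^{W_0(e^{a+\ln b})}}{e^a} \;=\; \frac{e^{a+\ln b}}{e^a\, W_0\!\left(e^{a+\ln b}\right)} \;=\; \frac{b}{W_0\!\left(e^{a+\ln b}\right)},
\end{equation*}
which is the claimed formula.

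The only subtle point, and the main (minor) obstacle, is the branch choice for $W$. Since the argument $e^{a+\ln b}$ is strictly positive, the equation $u e^u = e^{a+\ln b}$ admits a unique real solution, which lies on the principal branch $W_0$; this justifies writing $W_0$ rather than $W_{-1}$ and guarantees uniqueness of $c > 0$. I would flag this positivity assumption explicitly at the start of the proof so that the inversion step is rigorous.
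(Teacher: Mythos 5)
Your proposal is correct and follows essentially the same route as the paper: both reduce the equation to $w e^{w} = e^{a+\ln b}$ for the quantity $w = b/c = a + \ln c$ (your $u$ is exactly the paper's $b/c$) and then invert via the principal branch $W_0$, differing only in the cosmetic step of recovering $c$ (you use $e^{W_0(x)} = x/W_0(x)$, the paper reads $b/c = W_0(\cdot)$ off directly). Your explicit flagging of the requirement $b>0$ for $\ln b$ to be defined is a small improvement in rigor over the paper's presentation, which states $a,b\in\mathbb{R}$ but implicitly assumes positivity.
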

\begin{proof}
Starting from the given equation, we can rearrange and solve for $c$ as follows:
\begin{align*}
ac + c\ln{c} - b &= 0, \\
a + \ln{c} &= \frac{b}{c}, \\
\frac{b}{c} + \ln\left(\frac{b}{c}\right) &= a + \ln{b}, \\
\frac{b}{c}\exp\left(\frac{b}{c}\right) &= \exp(a + \ln{b}), \\
\frac{b}{c} &= W_0(\exp(a + \ln{b})), \\
c &= \frac{b}{W_0(\exp(a + \ln{b}))}.
\end{align*}
This completes the proof.
\end{proof}

Then we present the proof.
\begin{proof}[Proof of \cref{thm:memory_capacity}]
Equipped with 
$\Delta_\mu \ge 
    \frac{1}{\beta}\ln(\frac{2(M-1)m}{R+\delta})+2mR
    $ from \cref{coro:general_well_separation_cond}, we first write down the probability of success storage and retrieval, i.e. minimal separation $\Delta_{\min}$ satisfies well-separation condition.

Let $\Delta_{\min}=\Min_{\mu\in[M]}\Delta_\mu$, and $\theta_{\mu\nu}$ be the angle between two patterns $\bxi^\mu$ and $\bxi^\nu$.
Intuitively, $\theta_{\mu\nu}\in[0,\pi]$ represent the pairwise correlation of two patterns the two patterns.

We have
\bea
\Delta_{\min}\ge 
\frac{1}{\beta}\ln(\frac{2(M-1)m}{R+\delta})+2mR,
\eea
and
\bea
\Delta_{\min}=\Min_{1\le\mu\le\nu\le M}\[m^2\(1-\cos{\theta_{\mu\nu}}\)\]
=m^2 \[1-\cos{\theta_{\min}}\],
\eea
where $\theta_{\min}\coloneqq \Min_{1\le\mu\le\nu\le M} \theta_{\mu\nu}\in[0,\pi]$.
Then, it holds
\bea
\label{eqn:cos}
m^2 \[1-\cos{\theta_{\min}}\] 
\ge 
\frac{1}{\beta}\ln(\frac{2(M-1)m}{R+\delta})+2mR.
\eea

With \cref{coro:general_well_separation_cond}
, we write down the probability of success storage and retrieval as
\bea
P\(\Delta_\mu\ge \frac{1}{\beta}\ln(\frac{2(M-1)m}{R+\delta})+2mR \)= 1-p.
\eea
By \eqref{eqn:cos}, we have
\bea
\label{success}
P\( m^2 \[1-\cos{\theta_{\min}}\] \ge  \frac{1}{\beta}\ln(\frac{2(M-1)m}{R+\delta})+2mR \) = 1-p.
\eea
By \cite[(4.22.2)]{olver2010nist}, for $0 \le \cos{\theta_{\min}} \le 1$, $\cos{\theta_{\min}}$ can be upper bounded by:
\bea
\cos{\theta_{\min}} \le 1- \frac{\theta_{\min}^2}{5}.
\eea
It holds
\bea
P\( \frac{m^2\theta_{\min}^2}{5} \ge \frac{1}{\beta}\ln(\frac{2(M-1)m}{R+\delta})+2mR\)= 1-p,
\eea
which can be rewritten as
\bea
P\(M^{\frac{2}{d-1}}\theta_{\min} \ge \frac{\sqrt{5}M^{\frac{2}{d-1}}}{m}\[\frac{1}{\beta}\ln(\frac{2(M-1)m}{R+\delta})+2mR\]^\half\) = 1-p.
\label{eqn:ineq1}
\eea
Here, $M^{\nicefrac{2}{d-1}}$ is introduced for later convenience.

Let 
\bea
\omega_d
\coloneqq 
\frac{2\pi^{\nicefrac{d+1}{2}}}{\Gamma\(\frac{d+1}{2}\)},
\eea
be the surface area of a $d$-dimensional unit sphere is, where $\Gamma(\cdot)$ represents the gamma function.

By \cite[Lemma~3.5]{brauchart2018random}, we obtain
\begin{align}
P&\(M^{\frac{2}{d-1}}\theta_{\min} \ge \frac{\sqrt{5}M^{\frac{2}{d-1}}}{m}\[\frac{1}{\beta}\ln(\frac{2(M-1)m}{R+\delta})+2mR\]^{\half}\)
= 1-p
\nonumber\\
&\quad\quad\quad\ge
1-\half\gamma_{d-1}5^{\frac{d-1}{2}}M^2 m^{-(d-1)}\[\frac{1}{\beta}\ln(\frac{2(M-1)m}{R+\delta})+2mR\]^{\frac{d-1}{2}},
\label{eqn:ineq2}
\end{align}
where $\gamma_d$ is defined as the ratio of surface areas of $(d-1)$- and $d$-dimensional unit sphere:
\bea
\gamma_d \coloneqq \frac{1}{d} \frac{\omega_{d-1}}{\omega_d}=\frac{1}{d\sqrt{\pi}}\frac{\Gamma\(\frac{d+1}{2}\)}{\Gamma\(\frac{d}{2}\)}.
\eea
Recall $d, M\in \mathbb{N}_+$, $p\in [0,1]$. 
With some real value $C\in\R$, it holds
\bea
M=\sqrt{p}C^{\frac{d-1}{4}}.
\eea
From \eqref{eqn:ineq2}, we have
\bea
5^{\frac{d-1}{2}}
    \(\sqrt{p}C^{\frac{d-1}{4}}\)^2 m^{-(d-1)}\Bigg\{\frac{1}{\beta}\ln{\[\frac{2\(\sqrt{p}C^{\frac{d-1}{4}}-1\)m}{R+\delta}\]}+\frac{1}{\beta}\Bigg\}^{\frac{d-1}{2}}-p \le 0,
\eea
which leads to
\bea
5^{\frac{d-1}{2}}C^{\frac{d-1}{2}} m^{-(d-1)}\Bigg\{\frac{1}{\beta}\ln{\[\frac{2\(\sqrt{p}C^{\frac{d-1}{4}}-1\)m}{R+\delta}\]}+\frac{1}{\beta}\Bigg\}^{\frac{d-1}{2}}\le 1.
\label{eqn:ineq3}
\eea

To apply \cref{lemma:W}, we first rearrange \eqref{eqn:ineq3} as
\bea
\frac{5C}{m^2\beta}\Bigg\{\ln\[\frac{2\(\sqrt{p}C^{\frac{d-1}{4}}-1\)m}{R+\delta}\]+1\Bigg\}-1 \leq 0,
\eea
and then identify
\bea
\label{eqn:identified_ab}
a\coloneqq \frac{4}{d-1}\Bigg\{\ln\[\frac{2m(\sqrt{p}-1)}{R+\delta}\]+1\Bigg\}, 
\quad 
b\coloneqq \frac{4m^2\beta}{5(d-1)}.
\eea

By \cref{lemma:W}, we have the solution
\bea
\label{eqn:c}
C=\frac{b}{W_0({\exp\{a+\ln{b}\}})},
\eea
where $W_0(\cdot)$  is the upper branch of the Lambert $W$ function.
Since the domain of the Lambert $W$ function is $x>-\nicefrac{1}{e}$ and {the fact $\exp{a+\ln{b}}>0$}, the solution exists.

When $C$ satisfies inequality \eqref{eqn:ineq3}, we arrive a lower bound on the exponential storage capacity $M$:
\bea
M
\geq
\sqrt{p}C^{\frac{d-1}{4}}.
\eea
Notably, the above takes similar form as \cite[Theorem~3]{ramsauer2020hopfield}.
To see the blessings of sparsity, we consider the following asymptotic analysis and compare with results from the dense modern Hopfield model.
To compare with results from dense modern Hopfield model, we denote the dense counterparts of  $a,b$ with $\Tilde{\cdot}$ notation, i.e. 
\bea
\Tilde{a}\coloneqq \frac{2}{d-1} \[1+\ln{\(2\beta m^2 p\)}\],\quad \Tilde{b}=b.
\eea

By \cite{corless1996lambert}, for sufficient large $z$, $W_0(z)$ is asymptotic to
\bea
W_0(z)\simeq \ln{z} - \ln{\ln{z}}+\calO(1).
\eea

Therefore,
for sufficient large $\beta $, we have
\bea
W_0\(\exp{a+\ln b}\)\simeq
a+\ln b-\ln\({a+\ln b}\)+\calO(1),
\eea
which is dominated by $a$.

For $a$, we have
\bea
\Tilde{a}
\leq
a,
\eea
and hence
\bea
W_0\(\exp{\Tilde{a}+\ln \Tilde{b}}\)
\leq
W_0\(\exp{a+\ln b}\).
\eea
Therefore, combining above with  \eqref{eqn:c}, we have
\bea
\Tilde{C} = \frac{b}{W_0\(\exp{\Tilde{a}+\ln \Tilde{b}}\)} \leq \frac{b}{W_0\(\exp\{a+\ln b\}\)} = C,
\eea
which states that the lower bound of the sparse capacity is larger than that of \cite{ramsauer2020hopfield}
\bea
M=\sqrt{p}C^{\frac{d-1}{4}}
\geq
\sqrt{p}\Tilde{C}^{\frac{d-1}{4}}=M_{\text{Dense}}.
\eea
\end{proof}

\section{Auxiliary Theoretical Background}
\label{sec:axuiliary_theory}

\begin{remark}[Remark on \cref{def:variational_sparsemax}]
\label{remark:sparsemax}
To see the equivalence of the two optimization problems, we observe
\begin{align}
    \argmax_{\bp\in\Delta^d} \[\Braket{\bp,\bz} -\Psi(\bp)\]
    &=
    \argmax_{\bp\in\Delta^d} \[\Braket{\bp,\bz} -\half\norm{\bp}^2\]\nonumber\\
    \label{eqn:sparse_equ2}
    &=\argmin_{\bp\in\Delta^d} \[-\half\(\norm{\bp}^2 +\norm{\bz}^2-2\Braket{\bp,\bz} \)\]\\
    &=\argmin_{\bp\in\Delta^d}\[\half\norm{\bp-\bz}^2\]\nonumber,
\end{align}
where the last line is obtained by inserting $\norm{\bz}^2$ as a constant in \eqref{eqn:sparse_equ2}.
\end{remark}

\clearpage

\section{Additional Experiments}
\label{sec:syn_additional}
In order to highlight the benefits of the sparse Hopfield model, particularly under conditions of high data sparsity, we broaden our experimental studies with more models. 
These models include the \texttt{SparseHopfield}, \texttt{Hopfield}, the attention mechanism \cite{vaswani2017attention}, and a attention-based MIL baseline, the gated-attention mechanism \cite{ilse2018attention}.
 
\subsection{Visualization of Experimental Validation of Theoretical Results}
We provide visual demonstrations of \cref{sec:exp_theory} in \cref{fig:visual_aid}.
\begin{figure*}[h]
    \centering
    \hfill
    {    \includegraphics[width=1\textwidth]{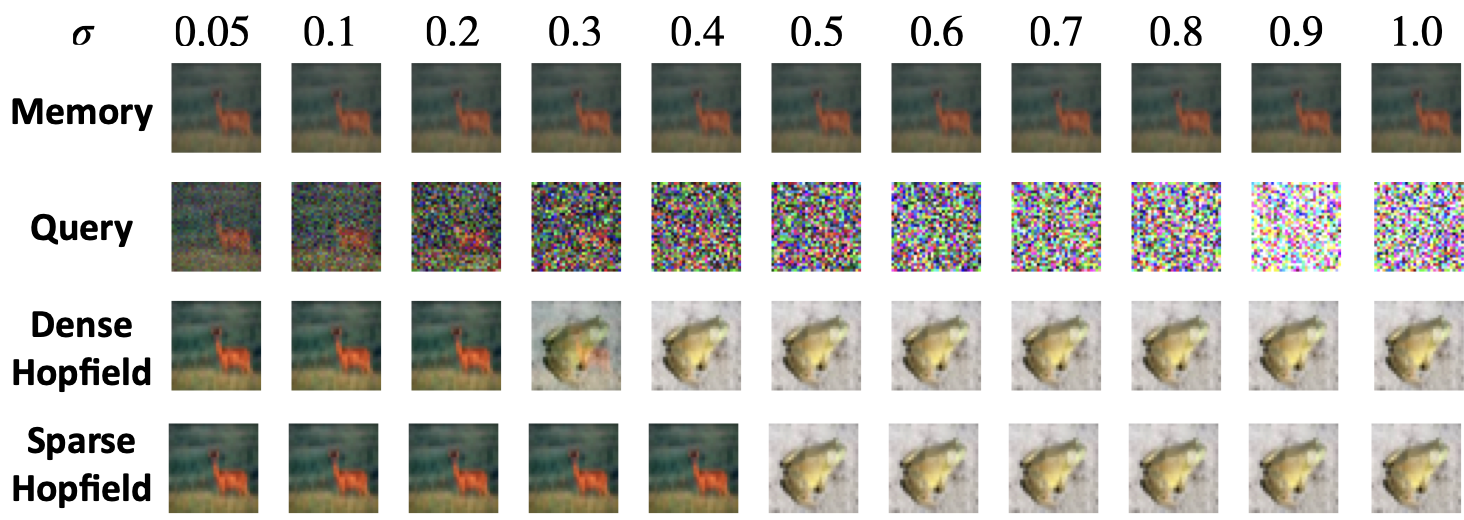}
    }
    \\
    \hfill
    {    \includegraphics[width=1\textwidth]{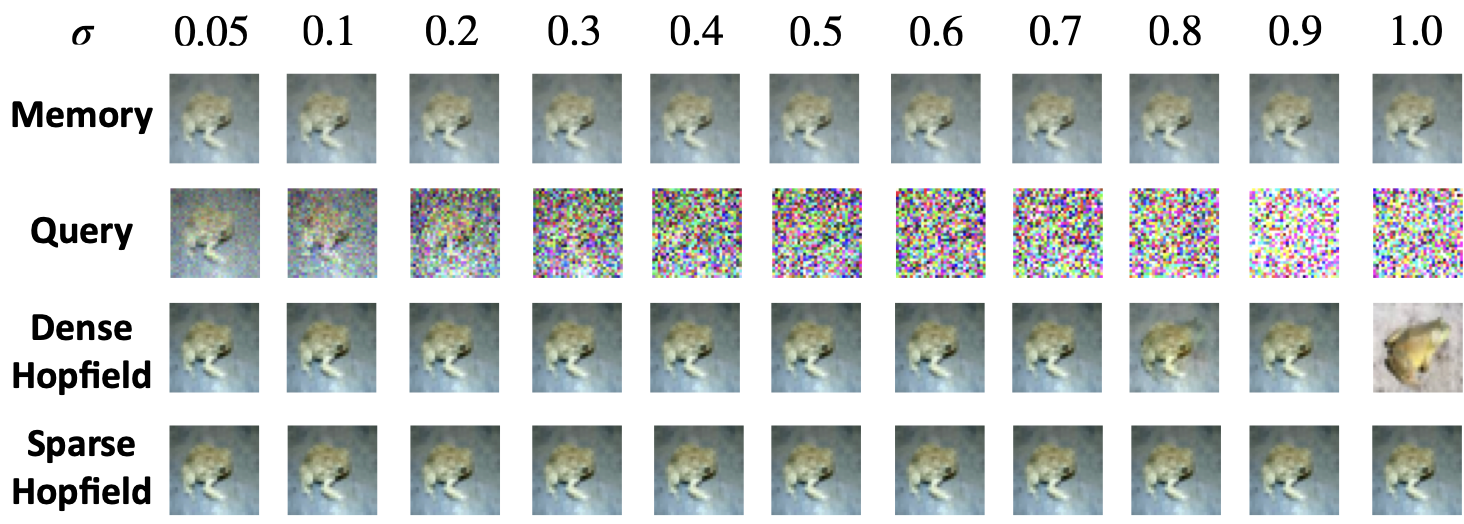}
    }
    \\
    \hfill
    {    \includegraphics[width=1\textwidth]{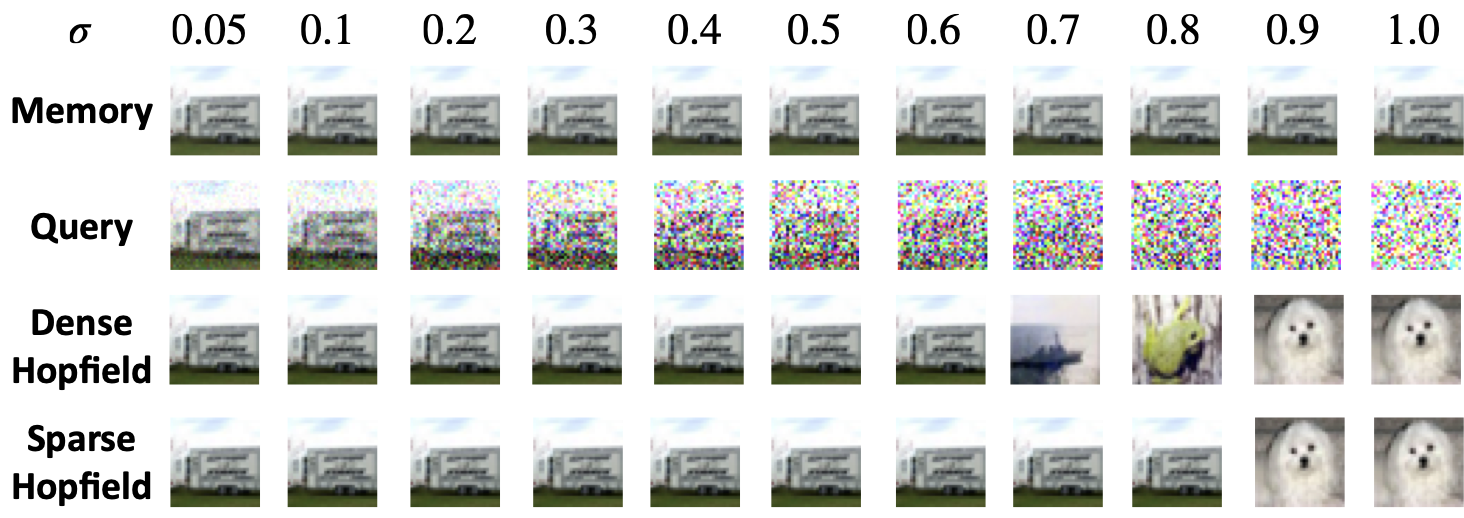}
    }
    \caption{\textbf{Visualizing noise-robustness of sparse and dense Hopfield models (\cref{fig:capacity_robustness} of \cref{sec:exp_theory}).}
    We perform memory retrieval using both Dense and Sparse Hopfield models, with queries subjected to varying levels of noise. We randomly select an image from the CIFAR10 dataset to serve as the memory pattern. This selected image is then contaminated with different levels of random noise ($\mu=0$ and $ 0.05\leq \sigma \leq 1.5$) to generate query patterns. The results demonstrate that the Sparse Hopfield model is more effective in retrieving the original image, showcasing its superior robustness against noise.}
    \label{fig:visual_aid}
\end{figure*}

\subsection{Bit Pattern MIL}
To supplement \cref{sec:syn_exp}, we conduct further numerical investigations on the same MIL tasks (\textbf{bag sparsity} and \textbf{bag size}) with \texttt{SparseHopfield}, \texttt{Hopfield}.
In these experiments, we contrast the performance of \texttt{SparseHopfield} and \texttt{Hopfield} (and also \texttt{SparseHopfieldPooling} and \texttt{HopfieldPooling}) with the attention mechanism \cite{vaswani2017attention} and the gated-attention mechanism  \cite{ilse2018attention}.
For the \textbf{bag size}, we fix the number of positive pattern in a bag to be 1, and vary bag size from 20 to 300.
For the \textbf{bag sparsity}, we fix the bag size as 200, and inject from 2 to 100 positive patterns in a positive bag, results in 1 to 50 percent of positive patterns in each positive bag.
The results are reported in \cref{table:app_exp}.
For numerical experiments on synthetic datasets, we do not use hyperparameter search due to the simplicity of both model structure and data.

\begin{table}[H]
\vspace{-1.0em}
    \centering
    \caption{\textbf{Top (Bag Size):} Accuracy comparison on bit pattern dataset for sparse and dense Hopfield model. 
    We report the average accuracy over 10 runs.
    The results suggest that the sparse Hopfield model demonstrates a better performance when facing a bag size increase.
    \textbf{Bottom (Bag Sparsity):}
    Performance comparison on bit pattern dataset for sparse and dense Hopfield model with varying bag sparsity. 
    We report the average accuracy over 10 runs. 
    The results suggest that the sparse Hopfield model demonstrates a better performance across all sparsity.}
    \vspace*{0.05truein} 

\resizebox{ \textwidth}{!}{    
    \begin{tabular}{cccccccc}
    \toprule
    \toprule
    \makecell{\rotatebox{0}{\small Bag Size}} 
         & \multicolumn{1}{c}{20} 
         & \multicolumn{1}{c}{50}
         & \multicolumn{1}{c}{100} 
         & \multicolumn{1}{c}{150} 
         & \multicolumn{1}{c}{200}
         & \multicolumn{1}{c}{300}

        \\
        \midrule
        \makecell{\rotatebox{0}{\small Sparse Hopfield}} 
        & 98.82 $\pm$ 0.34
        & 99.45 $\pm$ 0.19
        & 97.13 $\pm$ 0.11
        & 95.98 $\pm$ 0.12
        & 94.17 $\pm$ 0.01
        & 90.15 $\pm$ 0.30
        \\
        \makecell{\rotatebox{0}{\small Dense Hopfield}} 
        & 99.65 $\pm$ 0.70
        & 99.51 $\pm$ 0.87
        & 53.90 $\pm$ 0.00
        & 49.51 $\pm$ 0.02
        & 51.92 $\pm$ 0.12
        & 53.83 $\pm$ 0.12
        \\
        \makecell{\rotatebox{0}{\small Sparse Hopfield Pooling}} 
        & \textbf{99.71 $\pm$ 0.06}
        & 100.0 $\pm$ 0.00
        & 100.0 $\pm$ 0.00
        & \textbf{99.76 $\pm$ 0.00}
        & \textbf{99.76 $\pm$ 0.00}
        & \textbf{99.76 $\pm$ 0.00}
        \\
        \makecell{\rotatebox{0}{\small Dense Hopfield Pooling}} 
        & 99.68 $\pm$ 0.15
        & 100.0 $\pm$ 0.00
        & 100.0 $\pm$ 0.00
        & 76.44 $\pm$ 0.23
        & 49.13 $\pm$ 0.01
        & 52.88 $\pm$ 0.01
        \\
        \midrule
        \makecell{\rotatebox{0}{\small Attention}} 
        & 87.01 $\pm$ 0.00
        & 74.51 $\pm$ 0.01
        & 45.19 $\pm$ 0.31
        & 53.75 $\pm$ 0.76
        & 46.63 $\pm$ 0.02
        & 53.36 $\pm$ 0.03
        \\
        \makecell{\rotatebox{0}{\small Gated}} 
        & 87.88 $\pm$ 0.00
        & 63.44 $\pm$ 0.04
        & 75.38 $\pm$ 0.56
        & 73.45 $\pm$ 0.70
        & 71.05 $\pm$ 0.35
        & 49.61 $\pm$ 1.78
        \\
    \end{tabular}
    \vspace*{-0.5truein}3}
\resizebox{ \textwidth}{!}{    
    \begin{tabular}{ccccccccc}
    \toprule
    \toprule
    \makecell{\rotatebox{0}{\small Bag Sparsity}} 
         & \multicolumn{1}{c}{1\%} 
         & \multicolumn{1}{c}{2\%} 
         & \multicolumn{1}{c}{3\%}
         & \multicolumn{1}{c}{5\%} 
         & \multicolumn{1}{c}{10\%} 
         &  \multicolumn{1}{c}{20\%}
         &  \multicolumn{1}{c}{40\%}
         &  \multicolumn{1}{c}{50\%}
        \\
        \midrule
        \makecell{\rotatebox{0}{\small Sparse Hopfield }} 
        & 95.62 $\pm$ 0.01
        & 95.98 $\pm$ 0.30
        & 99.68 $\pm$ 0.01
        & 100.0 $\pm$ 0.00
        & 100.0 $\pm$ 0.00
        & 100.0 $\pm$ 0.00
        & 100.0 $\pm$ 0.00
        & 100.0 $\pm$ 0.00
        \\
        \makecell{\rotatebox{0}{\small Dense Hopfield }} 
        & 51.44 $\pm$ 0.01
        & 57.21 $\pm$ 0.01
        & 75.48 $\pm$ 0.01
        & 99.03 $\pm$ 0.11
        & 99.51 $\pm$ 0.02
        & 100.0 $\pm$ 0.00
        & 100.0 $\pm$ 0.00
        & 100.0 $\pm$ 0.00
        \\
        \makecell{\rotatebox{0}{\small Sparse Hopfield Pooling}} 
        & \textbf{99.76 $\pm$ 0.00}
        & \textbf{99.68 $\pm$ 0.00}
        & 100.0 $\pm$ 0.00
        & 100.0 $\pm$ 0.00
        & 100.0 $\pm$ 0.00
        & 100.0 $\pm$ 0.00
        & 100.0 $\pm$ 0.00
        & 100.0 $\pm$ 0.00        
        \\ 
        \makecell{\rotatebox{0}{\small Dense Hopfield Pooling}} 
        & 49.20 $\pm$ 0.00
        & 85.58 $\pm$ 0.10
        & 100.0 $\pm$ 0.00        
        & 100.0 $\pm$ 0.00
        & 99.68 $\pm$ 0.00
        & 100.0 $\pm$ 0.00
        & 100.0 $\pm$ 0.00
        & 100.0 $\pm$ 0.00    
        \\
        \midrule
        \makecell{\rotatebox{0}{\small Attention}} 
        & 74.51 $\pm$ 0.01
        & 78.81 $\pm$ 0.04
        & 96.63 $\pm$ 0.02
        & 100.0 $\pm$ 0.00
        & 99.51 $\pm$ 0.01
        & 100.0 $\pm$ 0.00
        & 100.0 $\pm$ 0.00
        & 100.0 $\pm$ 0.00
        \\
        \makecell{\rotatebox{0}{\small Gated}} 
        & 78.94 $\pm$ 0.41
        & 95.28 $\pm$ 0.35
        & 98.55 $\pm$ 0.00
        & 99.03 $\pm$ 0.01
        & 100.0 $\pm$ 0.00
        & 100.0 $\pm$ 0.00
        & 100.0 $\pm$ 0.00
        & 100.0 $\pm$ 0.00
        \\
        \bottomrule\bottomrule
    \end{tabular}
    \vspace*{-0.5truein}
}
\label{table:app_exp}
\vspace{-1em}
\end{table}

\subsection{Convergence Analysis}
\label{sec:syn_abliation_convegence}
\begin{figure*}[t]
    \centering
    \hfill
    {    \includegraphics[width=1\textwidth]{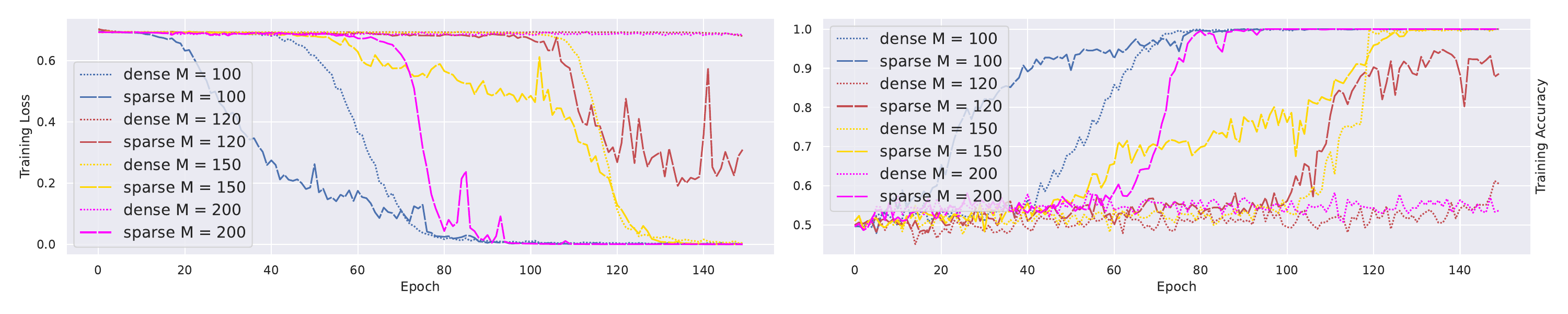}
    }
    \\
    \vspace{-2em}
    \hfill
    {    \includegraphics[width=1\textwidth]{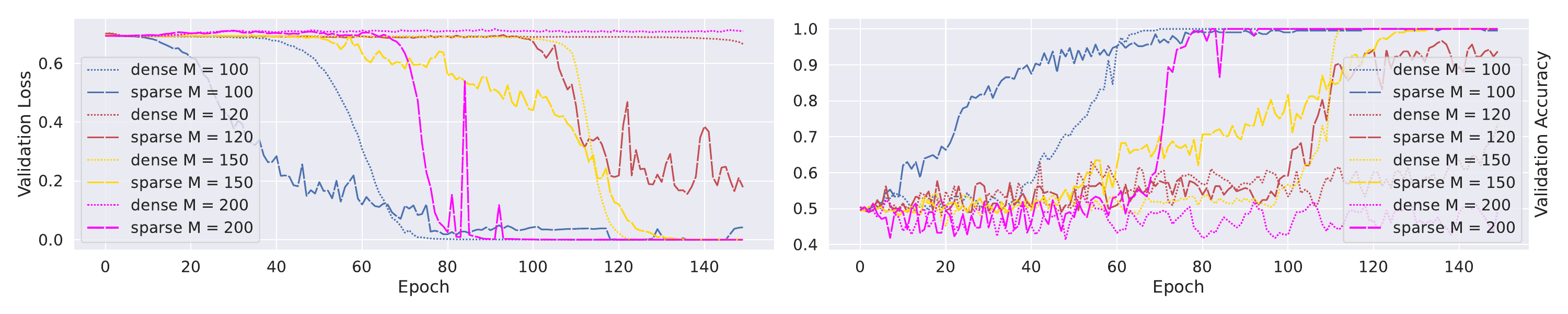}
    }
    \vspace{-2em}
    \caption{\textbf{Top:} The training loss and accuracy curve of \texttt{SparseHopfield} and \texttt{Hopfield} with different bag sizes. 
    \textbf{Bottom:} The validation loss and accuracy curve of \texttt{SparseHopfield} and \texttt{Hopfield} with different bag sizes. 
    The plotted are the mean of 10 runs.
    The results indicate that the sparse Hopfield model converges faster than the dense model and also yields superior validation/test accuracy.
    }
    \label{fig:convg2}
\end{figure*}

To supplement \cref{sec:syn_exp}, we also analyze the convergence behavior of the \texttt{SparseHopfield} and \texttt{Hopfield} numerically.
In \cref{fig:convg2}, we plot the loss and accuracy curve for both models on the bit pattern dataset for the \textbf{bag size} tasks mentioned in \cref{sec:syn_exp}. 
We include various bag sizes in the plot to 
examine how the loss curve responds to an increase in bag size (i.e., the number of memory patterns, $M$).
The results show that, \texttt{SparseHopfield} surpasses the \texttt{Hopfield} in nearly all bag sizes. 
Moreover, for the same bag size, \texttt{SparseHopfield} always reaches the minimum validation loss faster than \texttt{Hopfield}.
This provides empirical support for our theoretical prediction outlined in \cref{eqn:sparse_error_bound}. 
In conjunction with the findings illustrated in \cref{fig:convg}, \cref{fig:convg2} reinforces the benefits of utilizing the sparse Hopfield model. 
In particular, the evidence verifies the claim in \cref{eqn:sparse_error_bound}, demonstrating that the convergence speed of the sparse and dense Hopfield models shows different dependencies on the bag size $M$ in this experiment.

\subsection{Sparsity Generalization}
\label{sec:syn_abliation_gen}
To supplement \cref{sec:syn_exp}, 
we explore another scenario where the bag sparsity shifts between training and test data.
We train dense and sparse Hopfield models on a certain bag sparsity, and evaluate on another. 
The main goal of this setting is to investigate the generalization performance of dense and sparse Hopfield models when the information sparsity shift of training and test data distribution.
We fix the bag size to 200, and then implant different number of positive signals to the training and test dataset range from 0.5 to 50 percent.
We report the results of \texttt{HopfieldPooling} and \texttt{SparseHopfieldPooling} in \cref{table:sparsity_generalization}.\footnote{For the ease of presentation, we exclude the standard deviations in \cref{table:sparsity_generalization} as they are all close to zero and less than 0.31\%}
The result shows that for \texttt{HopfieldPooling}, while train on dense bags help its performance of evaluating on sparse bags,  lacking ability of learning from sparse bags still affects its performance.
Meanwhile \texttt{SparseHopfieldPooling} is more robust against sparsity shift, especially for the case where it was trained on dense bags and evaluate on sparse bags.
However, both sparse and dense Hopfield models inevitably suffer from a performance drop when having a sparsity gap when train bags are much more sparse than test bags.

\begin{table}[h]
    \centering
    \caption{\textbf{Accuracy comparison on bit pattern dataset for sparse and dense Hopfield Model when varying the train/test sparsity gap. }
    We report the average accuracy over 10 runs. 
    The result shows that for \texttt{HopfieldPooling}, while train on dense bags help its performance of evaluating on sparse bags,  lacking ability of learning from sparse bags still affects its performance.
    Meanwhile \texttt{SparseHopfieldPooling} is more robust against sparsity shift, especially for the case where it was trained on dense bags and evaluate on sparse bags.
    However, both sparse and dense Hopfield models inevitably suffer from a performance drop when having a sparsity gap when train bags are much more sparse than test bags.}
    \vspace*{0.05truein}

\resizebox{ \textwidth}{!}{    
    \begin{tabular}{c|ccccccc}
    \toprule
    \toprule
         &
         & \multicolumn{4}{c}{\# of Train Positive Signal per Bag (Dense/Sparse)} 
         \\
    \makecell{\rotatebox{0}{\small \# of Test}} 
         & \multicolumn{1}{c}{1} 
         & \multicolumn{1}{c}{2} 
         & \multicolumn{1}{c}{10} 
         &  \multicolumn{1}{c}{20}
         &  \multicolumn{1}{c}{40}
         &  \multicolumn{1}{c}{80}
         &  \multicolumn{1}{c}{100}
         \\
        \midrule
        \makecell{\rotatebox{0}{\small 1}} 
        & 46.63 / 99.76
        & 48.55 / 94.31
        & 53.84 / 74.52
        & 59.61 / 81.73
        & 66.82 / 81.25
        & 72.07 / 81.73
        & 72.59 / 81.25
        \\
        \makecell{\rotatebox{0}{\small 2}} 
        & 47.59 / 52.40
        & 51.44 / 98.18
        & 58.17 / 95.19
        & 62.01 / 95.67
        & 69.23 / 95.67
        & 72.59 / 95.67
        & 72.11 / 95.67
        \\
        \makecell{\rotatebox{0}{\small 10}} 
        & 99.51 / 100.0
        & 99.51 / 100.0
        & 99.51 / 100.0
        & 99.51 / 100.0
        & 99.51 / 100.0
        & 99.51 / 100.0
        & 99.51 / 100.0
        \\
        \makecell{\rotatebox{0}{\small 20}} 
        & 100.0 / 100.0
        & 100.0 / 100.0
        & 100.0 / 100.0
        & 100.0 / 100.0
        & 100.0 / 100.0
        & 100.0 / 100.0
        & 100.0 / 100.0
        \\
        \makecell{\rotatebox{0}{\small 40}} 
        & 100.0 / 100.0
        & 100.0 / 100.0
        & 100.0 / 100.0
        & 100.0 / 100.0
        & 100.0 / 100.0
        & 100.0 / 100.0
        & 100.0 / 100.0
 
        \\
        \makecell{\rotatebox{0}{\small 80}} 
        & 100.0 / 97.03
        & 100.0 / 100.0
        & 100.0 / 100.0
        & 100.0 / 100.0
        & 100.0 / 100.0
        & 100.0 / 100.0
        & 100.0 / 100.0

        \\
        \makecell{\rotatebox{0}{\small 100}} 
        & 100.0 / 100.0
        & 100.0 / 100.0
        & 100.0 / 100.0
        & 100.0 / 100.0
        & 100.0 / 100.0
        & 100.0 / 100.0
        & 100.0 / 100.0

        \\ \bottomrule\bottomrule
    \end{tabular}
    \label{table:sparsity_generalization}
    \vspace*{-0.5truein}
}
\end{table} 

{\color{Blue}
\subsection{Real-World Experiments}

To examine the practical applicability of the proposed model, we implement it in two additional experiments that utilize transformer-based models for distinct tasks. 
These tasks include multivariate time series prediction \cite{zhang2022crossformer}, and neural machine translation \cite{vaswani2017attention}. 
In these experiments, we substitute the existing attention mechanism with both the $\mathtt{Hopfield}$ and $\mathtt{SparseHopfield}$ layers.

\subsubsection{Multivariate Time Series Prediction}
\label{sec:MTS}
For the multivariate time series prediction task, we implement two variants of the SOTA Crossformer model \cite{zhang2022crossformer}, \textbf{Crossformer-DH} and \textbf{Crossformer-SH}, with $\mathtt{Hopfield}$ and $\mathtt{SparseHopfield}$ layers respectively. 
These models employ an architecture akin to the Swin-Transformer \cite{liu2021swin}, utilizing shifting windows to extract information at multiple resolutions.
The experiment results are showed in \cref{table:mts-result}.
Our results indicate that our proposed $\mathtt{SparseHopfield}$ not only consistently enhances transformer-based deep learning models but also achieves SOTA performance.
In 60+\% of 58 settings, the Sparse Hopfield model, Crossformer-SH, ranks first or second, with 30 topand 7 runner-up performances.

\paragraph{Datasets.} We conduct the experiments on four multivariate time series real-world datasets: ETTh1 (Electricity Transformer Temperature-hourly), ETTm1 (Electricity Transformer Temperature-minutely),  WTH (Weather), ILI (Inﬂuenza-Like Illness), ECL (Electricity Consuming Load), Traffic.

\paragraph{Baselines.} 
We benchmark our method against the results of \cite{zhang2022crossformer} and other baselines (Tranformer \cite{vaswani2017attention}, Informer \cite{zhou2021informer} and Autoformer \cite{chen2021autoformer}) therein. 

\paragraph{Setup.}
We adopt the same setting as in \cite{zhang2022crossformer}: multivariate time series prediction task on various datasets.
Following \cite{zhang2022crossformer}, for each dataset, we evaluate our models with several different prediction horizons.
As for hyperparameters, we simply adopt the optimized hyperparameter configuration used in \cite{zhang2022crossformer} obtained via grid search for both \cite{zhang2022crossformer} and all baselines.
We report the average accuracy of 5 runs, evaluated using Mean Square Error (MSE) and Mean Absolute Error (MAE) metrics.

\begin{table}
\centering
\caption{\textbf{Accuracy comparison for multivariate time series predictions on various datasets, using both the sparse and dense Hopfield models.}
Based on SOTA prediction model Crossformer \cite{zhang2022crossformer}, we implement two Crossformer variants, \textbf{Crossformer-DH} and \textbf{Crossformer-SH}, with $\mathtt{Hopfield}$ and $\mathtt{SparseHopfield}$ layers respectively. 
We report the average accuracy of 5 runs, evaluated using Mean Square Error (MSE) and Mean Absolute Error (MAE) metrics. 
We benchmark our method against the results of \cite{zhang2022crossformer} and other baselines (Transformer \cite{vaswani2017attention}, Informer \cite{zhou2021informer} and Autoformer \cite{chen2021autoformer}) therein. 
We evaluate each dataset with different prediction horizons (showed in the second column).
We have the best results \textbf{bolded} and the second best results \underline{underlined}.
In 60+\% of 58 settings, the Sparse Hopfield model, Crossformer-SH, ranks first or second, with 30 top and 7 runner-up performances.
Our results indicate that our proposed $\mathtt{SparseHopfield}$ not only consistently enhances transformer-based deep learning models but also achieves SOTA or comparable performance.
}
\resizebox{ \textwidth}{!}{    
\begin{tabular}{cc|cccccccccccccccccc}
\toprule
 \multicolumn{2}{c|}{Models} &  \multicolumn{2}{c}{Transformer} & \multicolumn{2}{c}{Informer} & \multicolumn{2}{c}{Autoformer} & \multicolumn{2}{c}{Crossformer} & \multicolumn{2}{c}{\textbf{Crossformer-DH}} & \multicolumn{2}{c}{\textbf{Crossformer-SH}} \\
\midrule
 \multicolumn{2}{c|}{Metric} & MSE & MAE & MSE & MAE & MSE & MAE & MSE & MAE & MSE & MAE  & MSE & MAE \\
\midrule
\multirow{6}{1em}{\rot{ETTh1}} & 24 & 0.620 & 0.577 & 0.577 & 0.549 & 0.439 & 0.440 & 0.305 & 0.367 & \underline{0.299} & \underline{0.365} & \textbf{0.295} & \textbf{0.357} \\ 
& 48 & 0.692 & 0.671 & 0.685 & 0.625 & 0.429 & 0.442 & 0.352 & \underline{0.394} & \underline{0.351} & 0.399 & \textbf{0.346} & \textbf{0.392} \\ 
 & 168 & 0.947 & 0.797 & 0.931 & 0.752 & 0.493 & 0.479 & \textbf{0.410} & \textbf{0.441} &  \underline{0.412} & \underline{0.443} & 0.425 & 0.455 \\ 
& 336 & 1.094 & 0.813 & 1.128 & 0.873 & 0.509 & 0.492 & \textbf{0.440} & \textbf{0.461} & 0.455  &  0.468 & 0.459 & 0.477   \\ 
& 720 & 1.241 & 0.917 & 1.215 & 0.896 & 0.539 & 0.537 & \underline{0.519} & \underline{0.524} & 0.523 & 0.529 & \textbf{0.518} & \textbf{0.522}  \\ 
\midrule
\multirow{6}{1em}{\rot{ETTm1}} & 24 & 0.306 & 0.371 & 0.323 & 0.369 & 0.410 & 0.428 & 0.310 & 0.371 & \underline{0.199} & \underline{0.285} & \textbf{0.198} & \textbf{0.287} \\ 
& 48 & 0.465 & 0.470 & 0.494 & 0.503 & 0.483 & 0.464 & 0.300 & \underline{0.352} & \underline{0.290} & 0.356 &  \textbf{0.276} & \textbf{0.340}  \\ 
 & 96 & 0.681 & 0.612 & 0.678 & 0.614 & 0.502 & 0.476 & \underline{0.320} & \underline{0.373} & 0.344 & 0.398 & \textbf{0.305} & \textbf{0.371} \\ 
& 288 & 1.162 & 0.879 & 1.056 & 0.786 & 0.604 & 0.522 & 0.404 & \underline{0.427} & \underline{0.404} & 0.429 & \textbf{0.373} & \textbf{0.406} \\ 
& 672 & 1.231 & 1.103 & 1.192 & 0.926 & 0.607 & 0.530 & 0.569 & 0.528 & \underline{0.568} & \underline{0.523} & \textbf{0.467} & \textbf{0.474}  \\ 
\midrule
 \multirow{6}{1em}{\rot{WTH}} & 24 & 0.349 & 0.397 & 0.335 & 0.381 & 0.363 & 0.396 & \textbf{0.294} & \textbf{0.343} & \textbf{0.294} & \textbf{0.343} & \textbf{0.294} & \underline{0.344} \\ 
& 48 & 0.386 & 0.433 & 0.395 & 0.459 & 0.456 & 0.462 & \underline{0.370} & 0.411 & \textbf{0.369} & \textbf{0.408} & 0.375 & \underline{0.410} \\ 
& 168 & 0.613 & 0.582 & 0.608 & 0.567 & 0.574 & 0.548 & \underline{0.473} & \underline{0.494} & \textbf{0.472} & \textbf{0.493} & 0.480 & 0.499 \\ 
 & 336 & 0.707 & 0.634 & 0.702 & 0.620 & 0.600 & 0.571 & \textbf{0.495} & \textbf{0.515} & 0.498 & \underline{0.519} & 0.504 & 0.523 \\ 
& 720 & 0.834 & 0.741 & 0.831 & 0.731 & 0.587 & 0.570 & \textbf{0.526} & \textbf{0.542} & \underline{0.528} & 0.546 & 0.536 & \underline{0.544} \\ 
\midrule
 \multirow{5}{1em}{\rot{ILI}} & 24 & 3.954 & 1.323 & 4.588 & 1.462 & \underline{3.101} & 1.238 & \textbf{3.041} & \underline{1.186} & 3.428 & 1.279 & 3.124 & \textbf{1.143}  \\ 
& 36 & 4.167 & 1.360 & 4.845 & 1.496 & \textbf{3.397} & 1.270 & 3.406 & \underline{1.232} & 3.490 & 1.306 & \underline{3.404} & \textbf{1.192} \\ 
& 48 & 4.746 & 1.463 & 4.865 & 1.516 & \textbf{2.947} & \textbf{1.203 }& {3.459} & 1.221 & 3.600 & 1.277 & 3.509 & \underline{1.205} \\ 
& 60 & 5.219 & 1.553 & 5.212 & 1.576 & \textbf{3.019} & \textbf{1.202}& \underline{3.640} & 1.305 & 3.666 & 1.271 & 3.709 & \underline{1.205} 
\\ 
\midrule
\multirow{6}{1em}{\rot{ECL}} & 48 & 0.334 & 0.399 & 0.344 & 0.393 & 0.241 & 0.351 & \underline{0.156} & \underline{0.255} & 0.159 & 0.264 & \textbf{0.154} & \textbf{0.254} \\ 
& 168 & 0.353 & 0.420 & 0.368 & 0.424 & {0.299} & 0.387 & 0.231 & \underline{0.309} & \underline{0.290} & 0.316 &  \textbf{0.225} & \textbf{0.303}  \\ 
 & 336 & 0.381 & 0.439 & 0.381 & 0.431 & 0.375 & 0.428 & \underline{0.323} & \underline{0.369} &\textbf{0.318} & \textbf{0.363} & 0.332 & 0.375 \\ 
& 720 & \underline{0.391} & 0.438 & 0.406 & 0.443 & \textbf{0.377} & 0.434 & 0.404 & \underline{0.423} & {0.397} & \textbf{0.421} & 0.414 & 0.429 \\ 
& 960 & 0.492 & 0.550 & 0.460 & 0.548 & \textbf{0.366 }& \textbf{0.426} & 0.433 & \underline{0.438} & 0.434 & \underline{0.438} & 0.440 & 0.443  \\ 
\midrule
\multirow{6}{1em}{\rot{Traffic}} & 24 & 0.597 & 0.332 & 0.608 & 0.334 & 0.550 & 0.363 & \underline{0.491} &  \underline{0.274} & \textbf{0.488} & \textbf{0.271} & 0.496 & 0.280 \\ 
& 48 & 0.658 & 0.369 & 0.644 & 0.359 & 0.595 &0.376 & 0.519 & 0.295 & \textbf{0.513} & \underline{0.291} &  \underline{0.516} & \textbf{0.290}  \\ 
 & 168 & 0.664 & 0.363 &0.660 & 0.391  & 0.649 & 0.407 & \underline{0.513} & \underline{0.289} & {0.516} & \underline{0.289}  &\textbf{0.512} & \textbf{0.288} \\ 
& 336 & 0.654 & 0.358 & 0.747 & 0.405 & 0.624 & 0.388 & \underline{0.530} & \underline{0.300} & 0.541 & 0.304 &  \textbf{0.529} & \textbf{0.297} \\ 
& 720 & 0.685 & 0.370 & 0.792 & 0.430 & 0.674 & 0.417 & 0.573 & 0.313 & \underline{0.557} & \underline{0.307} & \textbf{0.555} & \textbf{0.304}  \\ 
\bottomrule
\end{tabular}
}
\label{table:mts-result}
\end{table}

\subsubsection{Neural Machine Translation}
\label{sec:nmt}
We showcase the application of the proposed sparse Hopfield model in the context of the classic neural machine translation task, as described in \cite{vaswani2017attention}. 
By substituting the attention mechanism in the transformer with a 1-step $\mathtt{Hopfield}$ and $\mathtt{SparseHopfield}$, we compare the performance (BLEU score) of the transformer and Hopfield models on various language pairs. 
The results of this comparison can be found in \cref{nmt-result}.

\paragraph{Datasets.}
We use the WMT17 \cite{wmt17} machine translation task dataset.
Which consists of sentence pairs of two different languages, where we consider the translation between German and English (EN-DE), Russian and English (RU-EN).
The EN-DE setting has 5.91M pairs of training data, 3000 pairs of validation and 3000 pairs of test data.
The EN-RU setting has 25.78M pairs of training data, 3000 pairs of validation and 3000 pairs of test data.

\paragraph{Baselines.}
For the baselines, we follow the architecture of base transformer in \cite{vaswani2017attention} which has 6 layers of encoder and decoder.
The hidden dimension is 512 and the feed forward dimension is 2048.
More details of configuration can be found in \cref{table:hyperparameters-nmt}.
Note that when switching the attention in base transformer to either $\mathtt{Hopfield}$ or $\mathtt{SparseHopfield}$, no extra parameter was added in our experiment.
Thus, the comparison in our setting is fair.

\paragraph{Setup.}
We follow the setup in \cite{vaswani2017attention} on the WMT-17 dataset.
In this experiment, we consider the task of English to German (EN-DE), German to English (DE-EN), Russian to English (RU-EN) and English to Russian (EN-RU).
We report the BLEU score on the test set.
For WMT17, we follow the base-transformer in \cite{vaswani2017attention}, and train the model with 50000 steps, and report the performance of the last checkpoint\footnote{We follow the implementation in \url{https://github.com/OpenNMT/OpenNMT-py} for the NMT experiment.}. 
Our results show that our proposed $\mathtt{SparseHopfield}$ not only consistently improves upon transformer-based deep learning models but also surpasses the performance of the dense Hopfield model \cite{ramsauer2020hopfield}.
}
\begin{table}[h]\label{nmt-result}
\centering
\caption{\textbf{Results for the machine translation on the WMT17 dataset with language pairs of DE-EN, EN-DE, RU-EN, EN-RU. }
We showcase the application of the proposed sparse Hopfield model in the context of the classic neural machine translation task on WMT17 dataset, as outlined in \cite{vaswani2017attention}. 
By substituting the attention mechanism in the transformer with a 1-step $\mathtt{Hopfield}$ and $\mathtt{SparseHopfield}$, we compare the performance (BLEU score) of the transformer and Hopfield models. 
To ensure a fair comparison, all models (Transformer, Dense Hopfield, Sparse Hopfield) are of the same size.
Our results show that our proposed $\mathtt{SparseHopfield}$ consistently improves upon transformer-based deep learning models
}
\begin{tabular}{ccccc}
\toprule
\textbf{Dataset} & \textbf{DE-EN} & \textbf{EN-DE} & \textbf{RU-EN} & \textbf{EN-RU} \\
\midrule
Transformer & 29.8 & 34.9 & 28.5 & 24.8\\
Dense Hopfield  & 33.6 & \textbf{37.2} & 28.5 & \textbf{24.9} \\
Sparse Hopfield  & \textbf{36.1} & 37.1 & \textbf{28.6} & 24.8 \\
\bottomrule
\end{tabular}
\end{table}
\begin{table}[h]
        \centering
        \caption{Hyperparameter of the NMT experiment.
        }
        \vspace*{0.05truein} 
        \begin{tabular}{l*{1}{c}}
        \toprule
            \bf{parameter} & \multicolumn{1}{c}{\bf{values}}\\ 
            \midrule
            batch size & $4096$\\
            initial lr
            & $2.0$\\
            vocab size (DE-EN)
            & $36000$ \\
            vocab size (EN-RU)
            & $34776$ \\
            num heads
            & $8$\\
            hidden dimension
            & $512$ \\
            word vector dimension
            & $512$ \\
            feed forward dimension
            & $2048$ \\
            encoder layer 
            & $6$ \\
            decoder layer 
            & $6$ \\
            label smoothing
            & $0.1$ \\
            decay method 
            & Noam \\
            optimizer 
            & Adam \\
            warm up steps (DE-EN)
            & 4000 \\
            warm up steps (EN-RU)
            & 8000 \\
            train steps (DE-EN)
            &  50000 \\
            train steps (EN-RU)
            &  80000 \\
            max sequence length
            & 96 \\
            beam size 
            & 5 \\
            tokenizer 
            & sentencepiece \\
            \bottomrule
        \end{tabular}
        \label{table:hyperparameters-nmt}
    \end{table}

\section{Experimental Details}
\label{sec:exp_detials}

All experiments are conducted on the platform with NVIDIA GEFORCE RTX 2080 Ti and INTEL XEON SILVER 4214 @ 2.20GHz.
\subsection{Multiple Instance Learning (MIL)}
\subsubsection{Synthetic Dataset}
\label{appendix:syn}
\paragraph{Architectural Details.}
For the Hopfield model, the architecture is composed of 1 layer of either \texttt{Hopfield} or \texttt{HopfieldPooling} and 1 layer of fully connected output projection.
For the attention model, the architecture is composed of 1 layer of attention layer and 1 layer of fully connected output projection.
The dataset contains 50\% of positive bags and 50\% of negative bags.
\paragraph{Training Details.}
We use an AdamW \cite{loshchilov2017decoupled} optimizer.
For each bag size, we ran the experiment 10 times with different random seed.
For all of our synthetic dataset experiments, we use the exact same configuration, shown in Table 5.
The coefficients of Adam optimizer, betas, are set to $(0.9,0.999)$. 
As the number of training epochs, we use 150, and the evaluate the model on the testset with the last checkpoint.
All the experiments done on the synthetic datasets follow the same architecture and training details.

\paragraph{Baselines.}
For our synthetic dataset, we consider two baselines.
\cite{ilse2018attention} (\textbf{Gated}), where they proposed a gated-attention mechanism by inserting one extra linear layer on the attention weights before the softmax function. And they replaced the activation function of query to Tanh function.
\cite{vaswani2017attention} (\textbf{Attn}), where they proposed a multi-head attention mechanism has been widely used in modern deep learning.

\begin{table}[h]
    \vspace{-0.2truein}
        \centering
        \caption{Statistics of Bit Pattern Synthetic Dataset.
        }
        \vspace*{0.05truein} 
        \begin{tabular}{l*{4}{c}}
        \toprule
              & Unique Patterns & Pattern Length (bits) &  Training  & Test \\ 
            \midrule
             &4 & 4 & 800 & 200 \\
            \bottomrule
        \end{tabular}
        \label{table:statistics-bitpattern}
    \end{table}

\begin{table}[h]
        \centering
        \caption{Hyperparameter of the Bit Pattern Dataset.
        }
        \vspace*{0.05truein} 
        \begin{tabular}{l*{1}{c}}
        \toprule
            \bf{parameter} & \multicolumn{1}{c}{\bf{values}}\\ 
            \midrule
            batch size & $128$\\
            learning rates
            & $10^{-3}$\\
            scaling factors
            & $0.25$ \\
            num heads
            & 8 \\
            head dimension
            & 8 \\
            max update steps
            & 3 \\
            dropout 
            & 0.5 \\
            \bottomrule
        \end{tabular}
        \label{table:hyperparameters-bitpattern}
    \end{table}

\subsubsection{MIL Benchmark Datasets}
\label{appendix:mil_benchmark}
The experiment is conducted on 4 popular MIL datasets. Elephant, Fox and Tiger are datasets for image annotation which are composed of preprocessed and segmented colored images. Each image is characterized by color, texture and shape descriptors. These datasets contain 100 positive images that contain the purposed animals and 100 negative images that are drawn from a pool of images of other animals. Furthermore, we tested our model on the UCSB breast cancer classification task. An instance in UCSB dataset represents a patch of a histopathological image of cancerous or normal tissue. The detailed statistics of datasets are summarized in Table \ref{table:statistics}.

\begin{table}[h]
        \centering
        \caption{Statistics of MIL benchmark datasets
        }
        \vspace*{0.05truein} 
        \begin{tabular}{l*{5}{c}}
        \toprule
            Name & Instances & Features & Bags &$+$bags &$-$bags\\ 
            \midrule
            Elephant
            & 1391 &230 & 200 &100 &100\\
            Fox
            & 1302 &230 & 200 &100 &100\\
            Tiger
            & 1220 &230 & 200 &100 &100\\
            UCSB Breast Cancer
            & 2002 &708 & 58 &26 &32\\
            \bottomrule
        \end{tabular}
        \label{table:statistics}
    \end{table} 

In detail, we used a similar architecture described in \cite{ramsauer2020hopfield} to perform the MIL tasks. Firstly, the instance embeddings are sent to fully connected linear embedding layers with ReLU activation. After that, we used a \texttt{SparseHopfield} which has Sparse Retrieval Dynamics to process the output of fully connected linear layers. Afterward, we flatten the output of \texttt{SparseHopfield} and use a linear network with ReLU activation can perform classification.

To avoid application bias, we follow the experiment setting in \cite{kuccukacsci2018bag, ramsauer2020hopfield} and utilize a stratified ten-fold cross-validation to demonstrate the success of proposed \texttt{SparseHopfield} and \texttt{Hopfield}. For each fold in cross-validation, we use a stratified sampling process to split folds for training into a training set and validation set with a 0.1 split rate. We train the models' parameters and tune hyperparameters via grid searching. Once the hyperparameters are selected and the parameters of models are tuned on the first train folds of the first seed, we apply the selected configuration of the model models on other test folds or folds of other random seeds. All reported ROC-AUC scores are the average results of 5 runs with different random seeds. 

The grid search space is listed in Table \ref{table:hyperparameters}. The embedding layers are the pre-HopfieldPooling linear network and the layer width of them is the number of hidden units. A dropout operation, also known as bag dropout, is applied to the attention matrix since it's easy to overfit on these benchmark datasets. All models are trained with the Adam optimizer for 50 epochs. To combat overfitting, we also use an early-stopper that chooses the best checkpoint on the validation set.
\begin{table}[h]
    \vspace{-0.2truein}
        \centering
        \caption{Hyperparameter grid search space on the respective validation sets of the Elephant, Fox, Tiger and UCSB breast cancer datasets.
        }
        \vspace*{0.05truein} 
        \begin{tabular}{l*{1}{c}}
        \toprule
            \bf{parameter} & \multicolumn{1}{c}{\bf{values}}\\ 
            \midrule
            batch size & \{$4$, $8$, $16$\}\\
            learning rates
            & \{$10^{-3}$, $10^{-5}$\}\\
            learning rate decay
            & \{$0.98$, $0.96$, $0.94$\}\\
            embedding layers
            & \{1, 2\}\\
            layer width
            & \{$32$, $64$, $128$\}\\
            number of heads
            & \{8, 12\}\\
            head dimensions
            & \{$16$, $32$\}\\
            scaling factors
            & \{$0.1$, $1$, $10$\}\\
            bag dropout
            & \{$0.0$, $0.75$\}\\
            \bottomrule
        \end{tabular}
        \label{table:hyperparameters}
    \end{table}

One thing that should be noticed is that \cite{ramsauer2020hopfield} uses the pooling layers \texttt{HopfieldPooling} for MIL tasks instead of associative layers \texttt{SparseHopfield} or \texttt{Hopfield}. We also conduct an ablation experiment in that the model uses the first two modules for MIL tasks following the model structure as well as its training and testing process presented above. As shown in Table \ref{table:mil_benchmark_pooling}, the pooling layers can reach comparative results with associative layers on Fox and Tiger datasets but have performance degradation on Elephant and UCSB datasets. Besides, the \texttt{SparseHopfieldPooling} can also perform better than \texttt{HopfieldPooling} on Tiger and Elephant datasets.

\begin{table}[h]
    \vspace{-0.2truein}
    \centering
    \caption{Results for MIL benchmark datsets in terms of AUC score. The models use pooling layers \texttt{HopfieldPooling} and \texttt{SparseHopfieldPooling} instead.
        }
    \vspace*{0.05truein} 
    \resizebox{ \textwidth}{!}{  
    \begin{tabular}{l*{4}{c}}
    \toprule
    \toprule
            Method & \multicolumn{1}{c}{Tiger}
             & \multicolumn{1}{c}{Fox}
             & \multicolumn{1}{c}{Elephant}
             & \multicolumn{1}{c}{UCSB}\\ 
            \midrule
            w/ \texttt{HopfieldPooling}
            & $0.871\pm0.014$
            & $0.637\pm0.035$
            & $0.876\pm0.015$
            & $0.828\pm0.068$\\
            w/ \texttt{SparseHopfieldPooling}
            & $0.884\pm0.007$
            & $0.610\pm0.033$
            & $0.914\pm0.016$
            & $0.796\pm0.107$\\
            \bottomrule
            \bottomrule
    \end{tabular}
    }
    \label{table:mil_benchmark_pooling}
\end{table}

\end{document}